\newtheorem{theorem}{Theorem}
\newtheorem{assumption}{Assumption}
\newtheorem{corollary}{Corollary}
\newtheorem{definition}{Definition}
\newtheorem{lemma}{Lemma}
\newtheorem{proposition}{Proposition}
\newtheorem{remark}{Remark}
\newtheorem{example}{Example}
\def\begeqn{\begin{equation}}
	\def\endeqn{\end{equation}}
\def\begth{\begin{theorem}}
	\def\endth{\end{theorem}}
\def\begprop{\begin{proposition}}
	\def\endprop{\end{proposition}}
\def\begcor{\begin{corollary}}
	\def\endcor{\end{corollary}}
\def\begdef{\begin{definition}}
	\def\enddef{\end{definition}}
\def\beglemm{\begin{lemma}}
	\def\endlemm{\end{lemma}}
\def\begexm{\begin{example}}
	\def\endexm{\end{example}}
\def\begrem{\begin{remark}}
	\def\endrem{\end{remark}}
\def\begassum{\begin{assumption}}
	\def\endassum{\end{assumption}}
\numberwithin{equation}{section}
\newcommand{\EE}{\mathbb E}
\newcommand{\PP}{\mathbb P}
\newcommand{\II}{\mathbb I}
\newcommand{\sF}{\mathscr F}
\newcommand{\sH}{\mathscr H}
\newcommand{\sN}{\mathscr N}
\newcommand{\sT}{\mathscr T}
\newcommand{\sU}{\mathscr U}
\newcommand{\sL}{\mathscr L}
\newcommand{\sW}{\mathscr W}
\newcommand{\sC}{\mathscr C}
\newcommand{\cC}{\mathcal C}
\newcommand{\cI}{\mathcal I}
\newcommand{\cG}{\mathcal G}
\newcommand{\cK}{\mathcal K}
\newcommand{\cL}{\mathcal L}
\newcommand{\cQ}{\mathcal Q}
\newcommand{\cT}{\mathcal T}
\newcommand{\bx}{\textbf x}
\newcommand{\by}{\textbf y}
\title{Distributed Learning with Discretely Observed Functional Data}
\author{Jiading Liu}
\author{Lei Shi}
\affil{School of Mathematical Sciences and Shanghai Key Laboratory for Contemporary Applied Mathematics, Fudan University, Shanghai 200433, China.\\
Emails: jiadingliu1999@gmail.com, leishi@fudan.edu.cn\\}
\date{}
\begin{document}
	\maketitle
	\begin{abstract}
By selecting different filter functions, spectral algorithms can generate various regularization methods to solve statistical inverse problems within the learning-from-samples framework. This paper combines distributed spectral algorithms with Sobolev kernels to tackle the functional linear regression problem. The design and mathematical analysis of the algorithms require only that the functional covariates are observed at discrete sample points. Furthermore, the hypothesis function spaces of the algorithms are the Sobolev spaces generated by the Sobolev kernels, optimizing both approximation capability and flexibility. Through the establishment of regularity conditions for the target function and functional covariate, we derive matching upper and lower bounds for the convergence of the distributed spectral algorithms in the Sobolev norm. This demonstrates that the proposed regularity conditions are reasonable and that the convergence analysis under these conditions is tight, capturing the essential characteristics of functional linear regression. The analytical techniques and estimates developed in this paper also enhance existing results in the previous literature.

	\end{abstract}
    
    {\textbf{Keywords and phrases:} Functional linear regression, Distributed spectral algorithm, Sobolev kernels, Convergence analysis, Mini-max optimality}
    
	\section{Introduction}\label{section: introduction}

Recent years have witnessed the precipitous development of functional data analysis (FDA) across various fields, including neuroscience, linguistics, medicine, economics and so on (See \cite{ramsay2005functional,chen2017modelling,petersen2019frechet,tavakoli2019spatial} and the references therein). Functional linear regression, a pivotal subfield of the FDA, has garnered substantial interest in statistics and machine learning communities. Without loss of generality (and allowing for rescaling when necessary), we assume the domain of the functional covariate to be $\sT:= [0,1]$. The square-integrable functions over $\sT$ form a Hilbert space, denoted by $\sL^2(\sT)$, with the inner product
$$\langle g_1, g_2\rangle_{\sL^2}:= \int_{\sT} g_1(t)g_2(t) dt, \quad \forall g_1,g_2 \in \sL^2(\sT).$$  This paper focuses on the functional linear regression model expressed as
 \begin{equation}\label{LFRmodel}
 Y =\langle\beta_0, X \rangle_{\sL^2} + \epsilon,
 \end{equation}
where $Y\in \mathbb{R}$ represents a scalar response, $X\in \sL^2(\sT)$ is a functional covariate, $\beta_0\in\sL^2(\sT)$ denotes the slope function, and $\epsilon$ is a zero-mean random noise, which is independent of $X$ and has finite variance. The primary objective of functional linear regression is to estimate $\beta_0$ utilizing a training sample set generated by \eqref{LFRmodel}.

In the context of regression, estimating $\beta_0$ directly, rather than using the functional  $L_{\beta_0}(\cdot):=\langle \beta_0, \cdot \rangle_{\sL^2}$ for prediction, is a typical inverse problem and has attracted widespread attention in the fields of statistical inference and inverse problems (see, e.g., \cite{blanchard2018optimal,goldenshluger2000adaptive,reimherr2023optimal,yuan2010reproducing}). Moreover, instead of assuming functional covariates are fully observed, this paper adopts a more practical setting by considering functional covariates that are observed only at discrete points. The training sample set in this paper is
 \[S:=\left\{\left(X_i(r_1),X_i(r_2),\cdots,X_i(r_m),X_i(r_{m+1}),Y_i\right)\right\}_{i=1}^N,\] where $\left\{(X_i,Y_i)\right\}_{i=1}^N$ are $N$ independent copies of the random variable $(X,Y)$, and functional covariates $\left\{X_i\right\}_{i=1}^N$ are observed at discrete points $\left\{r_k\right\}_{k=1}^{m+1}$, with $m\geq1$ being an integer and $0\leq r_1<\cdots<r_m<r_{m+1}\leq 1$.
 
This paper employs the spectral regularization algorithms based on Sobolev kernels to estimate $\beta_0$. We first clarify some notations. The subspace of $\sL^2(\sT)$ where the weak derivatives up to order $\alpha \geq 1$ remain in $\sL^2(\sT)$, called the Sobolev space of order $\alpha$ on $\sT$, is given by 
\begin{equation}\label{sobolev}
  \sW^{\alpha,2}(\sT):=\left\{g:[0,1]\to\mathbb{R} \mid D^k g \in \sL^2(\sT), \ \forall 0 \leq k \leq \alpha\right\},
\end{equation}where $D^{k}g$ denotes the $k$-th weak derivative of $g$. The integer parameter $\alpha$ serves as a direct measure of smoothness within $\sW^{\alpha,2}(\sT)$. The Sobolev embedding theorem (see, e.g., Theorem 4.12 of \cite{adams2003sobolev}) guarantees that $\sW^{\alpha,2}(\sT)$ is continuously embedded into $\sC^{(\alpha-1)}([0,1])$ for any integer $\alpha \geq 1$, where $\sC^{(\alpha-1)}(\sT)$ represents the space of functions that are $(\alpha-1)$-times continuously differentiable on $\sT$. If equipped with the standard inner product
\begin{align*}
	\left\langle g_1,g_2\right\rangle_{\sW^{\alpha,2}_{std}} := \sum_{k=0}^{\alpha}\int_{\sT}D^{k}g_1(t)D^{k}g_2(t)dt, \quad \forall g_1,g_2\in \sW^{\alpha,2}(\sT),
\end{align*}
then $\sW^{\alpha,2}(\sT)$ becomes a reproducing kernel Hilbert space (RKHS), known as the standard Sobolev space of order $\alpha$, with reproducing kernel denoted by $\cK_{\alpha}^{std}:\sT\times \sT \rightarrow \mathbb{R}$ (see, e.g., \cite{berlinet2011reproducing}). Let $\left\|\cdot\right\|_{\sW^{\alpha,2}_{std}}$ represent the norm induced by the standard inner product $\left\langle \cdot,\cdot\right\rangle_{\sW^{\alpha,2}_{std}}$. Although the standard Sobolev spaces are widely used in previous papers on statistical learning and inverse problems, they have a significant computational drawback: when $\alpha\geq 2$, there is no explicit formula for the reproducing kernel $\cK_{\alpha}^{std}$ of standard Sobolev space of order $\alpha$. Therefore, we consider equipping $\sW^{\alpha,2}(\sT)$
with an alternative inner product 
\begin{equation}\label{innerproduct}
\begin{split}
\langle g_1, g_2 \rangle_{\sW^{\alpha,2}}&:=\sum_{k=0}^{\alpha-1}\left(\int_{\sT} D^k g_1(t) dt\right)\left(\int_{\sT} D^k g_2(t) dt\right)\\
&\qquad +\int_{\sT} D^{\alpha} g_1(t) D^{\alpha} g_2(t)dt, \quad \forall g_1,g_2 \in \sW^{\alpha,2}(\sT).
\end{split}
\end{equation}
As a result of Cauchy-Schwartz inequality and Poincar\'e-Wirtinger inequality (see, e.g., \cite{brezis2011functional}), the norm $\left\|\cdot\right\|_{\sW^{\alpha,2}}$ induced by $\langle \cdot, \cdot \rangle_{\sW^{\alpha,2}}$ is equivalent to the norm $\left\|\cdot\right\|_{\sW^{\alpha,2}_{std}}$, i.e., there exists $0<c<C<\infty$, such that $c\left\|g\right\|_{\sW^{\alpha,2}_{std}}\leq\left\|g\right\|_{\sW^{\alpha,2}} \leq C\left\|g\right\|_{\sW^{\alpha,2}_{std}}$ for all $g\in \sW^{\alpha,2}(\sT)$.
  The space $\sW^{\alpha,2}(\sT)$ equipped with $\langle \cdot, \cdot \rangle_{\sW^{\alpha,2}}$ is also an RKHS with reproducing kernel denoted by $\cK_{\alpha}:\sT\times \sT \rightarrow \mathbb{R}$. This space is called the unanchored Sobolev space of order $\alpha$ to be distinguished from the standard Sobolev spaces. Its reproducing kernel, known as the (unanchored) Sobolev kernel of order $\alpha$, can be explicitly expressed as (see, e.g., \cite{gu2013smoothing}):
$$
 \cK_{\alpha}(t,t'):=\sum_{k=0}^{\alpha} \frac{B_k(t) B_k(t')}{(k !)^2}+\frac{(-1)^{\alpha+1}}{(2 \alpha)!} B_{2 \alpha}(|t-t'|),\quad \forall t,t'\in \sT,
$$
where $\left(B_k\right)_{k \geq 0}$ denotes the sequence of Bernoulli polynomials. Furthermore, for a non-integer parameter $\alpha>0$, the unanchored Sobolev space $\sW^{\alpha,2}(\sT)$ can be defined via the real interpolation. All details will be demonstrated in Section \ref{subsection: sobolev space}.

For clarity and convenience, we fix a parameter $\alpha^*>1/2$ throughout this paper. Given a family of functions $\left\{\Psi_\lambda:[0,\infty) \rightarrow \mathbb{R}| \lambda\in (0,1)\right\}$ being a filter function indexed by the parameter $\lambda\in (0,1)$ (see Section \ref{subsection: filter functions} or \cite{bauer2007regularization} for details). Define the empirical operator $\cG_{{\alpha^*},\bx}: \sL^2(\sT)\rightarrow \mathbb{R}^N$ with $\bx:=\left\{\left(X_i(r_1),\cdots,X_i(r_{m+1})\right)\right\}_{i=1}^N$ by
\begin{align*}
	\cG_{{\alpha^*},\bx}(f)
	:=\left(\sum_{k=1}^{m}(r_{k+1}-r_k)\left\langle f,\cK^{1/2}_{{\alpha^*}}(\cdot, r_k)\right\rangle_{\sL^2}X_1(r_k),\cdots, \sum_{k=1}^{m}(r_{k+1}-r_k)\left\langle f,\cK^{1/2}_{{\alpha^*}}(\cdot, r_k)\right\rangle_{\sL^2}X_N(r_k)\right)^{T}
\end{align*}
for any $f\in \sL^{2}(\sT)$, where $\cK_{\alpha^*}^{1/2}$ is the square root of the unanchored Sobolev kernel $\cK_{\alpha^*}$ (see Section \ref{subsection: RKHS}). And we define the empirical operator $\cT_{{\alpha^*},\bx}:\sL^2(\sT)\rightarrow\sL^2(\sT)$ by
\begin{align*}
	\cT_{{\alpha^*},\bx}&:= \frac{1}{N}\cG_{{\alpha^*},\bx}^*\cG_{{\alpha^*},\bx},
\end{align*}
where $\cG_{{\alpha^*},\bx}^*:\mathbb{R}^N\rightarrow \sL^2(\sT)$ is the adjoint operator of $\cG_{{\alpha^*},\bx}$ defined by
\[\cG_{{\alpha^*},\bx}^*(a):= \sum_{i=1}^{N}\sum_{k=1}^{m}a_i(r_{k+1}-r_k)\cK_{{\alpha^*}}^{1/2}(\cdot,r_k)X_i(r_k), \quad \forall a\in \mathbb{R}^N.\]
Then we can construct an spectral regularization estimator $\hat{\beta}_{S, \alpha^*,\Psi_\lambda}$ with $\Psi_\lambda$ in $\sW^{\alpha^*,2}(\sT)$ to approximate $\beta_0$ based on the discretely observed functional data $S$ as:
\begin{align}\label{spectral regularization estimator}
	\hat{\beta}_{S, \alpha^*,\Psi_\lambda} = \cL_{\cK_{{\alpha^*}}}^{1/2}\Psi_\lambda(\cT_{\alpha^*,\bx})\frac{1}{N}\cG_{{\alpha^*},\bx}^*\by,
\end{align}
where $\cL_{\cK_{\alpha^*}}^{1/2}$ is the $1/2$-th power of the integral operator induced by the unanchored Sobolev kernel $\cK_{\alpha^*}$ and $\by:=(Y_1,\cdots,Y_N)^{T}\in \mathbb{R}^N$.

The spectral regularization algorithms cover many standard algorithms for functional linear regression, including the Tikhonov regularization algorithm, iterated Tikhonov regularization algorithm, and gradient descent algorithm, and we will illustrate them in Section \ref{subsection: filter functions}.

In addressing the challenges posed by massive datasets, the algorithm \eqref{spectral regularization estimator} is hindered by significant algorithmic complexity involving both computational time and memory requirements. To improve computational feasibility for large-scale training databases, we adopt a distributed strategy for implementing algorithm \eqref{spectral regularization estimator}. This approach entails randomly partitioning the sample set $S$ into $M$ disjoint subsets of equal size, denoted by $S_1,\cdots,S_M$. Subsequently, applying algorithm \eqref{spectral regularization estimator} individually to each subset $S_j$ yields local estimators $\hat{\beta}_{S_j,\alpha^*,\Psi_\lambda}$, defined as
\begin{align*}
\hat{\beta}_{S_j, \alpha^*,\Psi_\lambda} = \cL_{\cK_{{\alpha^*}}}^{1/2}\Psi_\lambda(\cT_{\alpha^*,\bx_j})\frac{1}{|S_j|}\cG_{{\alpha^*},\bx_j}^*\by_j,
\end{align*}
where $|S_j|$ represents the number of elements in $S_j$, ${\bx}_j$ comprises the discrete samples of $X$ in $S_j$, and ${\by}_j\in \mathbb{R}^{|S_j|}$ is a vector containing the samples of $Y$ in $S_j$. The final distributed estimator is derived by averaging the local estimators $\{\hat{\beta}_{S_j,\alpha^*,\Psi_\lambda}\}_{j=1}^M$, expressed as 
\begin{equation}\label{finalestimator}
    \overline{\beta}_{S,\alpha^*,\Psi_\lambda}:=\frac{1}{M}\sum_{j=1}^M \hat{\beta}_{S_j,\alpha^*,\Psi_\lambda}.
\end{equation}
Implementing the distributed version of algorithm \eqref{spectral regularization estimator} substantially mitigates the computational complexity in terms of time and memory, reducing it to approximately $\frac{1}{M^2}$ of the original complexity.

In order to handle the functional covariates observed on discrete sample points, we need to introduce some regularity conditions on the functional covariate $X$ and the slope function $\beta_0$. In the present paper, we assume that the functional covariate and slope function $X,\beta_0\in \sW^{\alpha^*,2}(\sT)$ for the same $\alpha^*$ in algorithms \eqref{spectral regularization estimator} and \eqref{finalestimator}. Such an assumption requires the functional covariate and slope function to have some sort of continuity and is common when aiming to recover the integral of a function from its discrete observations (see, e.g., \cite{raskutti2012minimax,wang2022functional,yuan2016minimax}). Then the performance of distributed estimator $\overline{\beta}_{S,\alpha^*,\Psi_\lambda}$ can be evaluated via the estimation error:
\begin{align}\label{estimation error}
	\left\|\overline{\beta}_{S,\alpha^*,\Psi_\lambda} - \beta_0\right\|_{\sW^{\alpha^*,2}}^2.
\end{align}

The main contributions of our paper are summarized as follows. Under some mild assumptions on the functional covariate $X$, we establish the upper bounds on the convergence rates of the estimation error given by \eqref{estimation error} for different regularity conditions of $\beta_0$. Then, when the functional covariate $X$ satisfies some additional Gaussian property, we establish the upper bounds for the estimation error given by \eqref{estimation error} in a global sense, i.e., the upper bounds for the expectation of \eqref{estimation error}. We develop an innovative mathematical analysis by combining the asymptotic analysis techniques and concentration estimates for random operators with bounded arbitrary-order moment, which extends several previous results in published literature and is tight as the rates of upper and lower bounds on the performance of distributed estimators match.

We organize the rest of this paper as follows. In Section \ref{section: preliminaries}, we start with an introduction to notations,
background, and some preliminary results. In Section \ref{section: main results}, we present main assumptions and theorems in this paper. In Section \ref{section: comparison}, we provide a discussion of the assumptions, compare our analysis with related results, and present several directions for future research. We leave all proofs to Section \ref{section: convergence analysis} and Appendixes.

\section{Preliminaries}\label{section: preliminaries}
In this section, we will introduce some basic notations and background in our study.
\subsection{Basic Notations} 

We first recall some basic notations in operator theory (see, e.g., \cite{conway2000course}). Let $A: \sH \to \sH' $ be a linear operator, where $(\sH,\langle\cdot,\cdot\rangle_{\sH})$ and $(\sH',\langle\cdot,\cdot\rangle_{\sH'})$ are Hilbert spaces with the corresponding norms $\|\cdot\|_{\sH}$ and $\|\cdot\|_{\sH'}$. The set of bounded linear operators from $\sH$ to $\sH'$ is a Banach space with respect to the operator norm $\|A\|_{\sH,{\sH'}}=\sup_{\|f\|_{\sH}=1}\|Af\|_{{\sH'}}$, which is denoted by $\mathcal{B}(\sH, \sH)$ or $\mathcal{B}(\sH)$ if $\sH={\sH'}$. When $\sH$ and ${\sH'}$ are clear from the context, we will omit the subscript and simply denote the operator norm as $\|\cdot\|$. Let $A^*$ be the adjoint operator of $A$ such that $\langle Af,f'\rangle_\sH = \langle f,A^*f'\rangle_{\sH'}, \forall f \in \sH, f' \in {\sH'}$. We say that $A\in \mathcal{B}(\sH)$ is self-adjoint if $A^*=A$, and positive if $A$ is self-adjoint and $\langle Af, f\rangle_{\sH} \geq 0$ for all $f\in \sH$. For $f\in \sH$ and $f'\in {\sH'}$,
	define a rank-one operator $f\otimes f': \sH \to {\sH'}$ by $f \otimes f' (h)= \langle f, h \rangle_\sH f', \forall h \in \sH$. If $A\in \mathcal{B}({\sH})$ is compact and positive, Spectral Theorem ensures that there exists an orthonormal basis $\{e_k\}_{k\geq 1}$ in ${\sH}$ consisting of eigenfunctions of $A$ such that $A=\sum_{k\geq 1} \lambda_k e_k \otimes e_k,$ where the eigenvalues $\{\lambda_k\}_{k\geq 1}$ (with geometric multiplicities) are non-negative and arranged in decreasing order, and either the set $\{\lambda_k\}_{k\geq 1}$ is finite or $\lambda_k \to 0$ when $k\to \infty$. Moreover, for any $r>0$, we define the $r-$th power of $A$ as $A^r=\sum_{k \geq 1} \lambda^r_k e_k \otimes e_k,$ which is itself a positive compact operator on ${\sH}$. An operator $A \in \mathcal{B}(\sH,\sH')$ is Hilbert-Schmidt if  $\sum_{k\geq 1}\|Ae_k\|^2_{\sH'}<\infty$  for some (any) orthonormal basis $\{e_k\}_{k\geq 1}$ of $\sH$. All Hilbert-Schmidt operators can form a Hilbert space endowed with the inner product $\langle A, B\rangle_{\sF}:=\sum_{k\geq 1} \langle Ae_k, Be_k\rangle_{\sH'}$ and we denote the corresponding norm by $\|\cdot\|_{\sF}$. In particular, a Hilbert-Schmidt operator $A$ is compact, and we have the following inequality between its two different norms:
	\begin{equation}\label{relationship between L2 and HS norm}
		\|A\| \leq \|A\|_{\sF}.
	\end{equation}
	For any Hilbert-Schmidt operator $A\in \mathcal{B}(\sH)$ and any bounded operator $B\in \mathcal{B}(\sH)$, the product operators $AB$ and $BA$ are also Hilbert-Schmidt operators satisfying
	\begin{align}\label{equation: HS norm of product operator}
		\left\|AB\right\|_{\sF}\leq \left\|A\right\|_{\sF}\left\|B\right\| \mbox{ and $\left\|BA\right\|_{\sF}\leq \left\|A\right\|_{\sF}\left\|B\right\|$}
	\end{align}
	For any $f\in\sH$ and $g\in\sH'$, the rank-one operator $f\otimes g$ is Hilbert-Schmidt with the Hilbert-Schmidt norm
	\begin{align}\label{equation: HS norm of rank-one operator}
		\left\|f\otimes g\right\|_{\sF} = \left\|f\right\|_{\sH}\left\|g\right\|_{\sH'}.
	\end{align} 
	An operator $A \in \mathcal{B}(\sH,\sH')$ is trace class if $\sum_{k\geq 1}\langle \left(A^*A\right)^{1/2}e_k,e_k \rangle_{\sH}<\infty$ for some (any) orthonormal basis $\{e_k\}_{k\geq 1}$ of $\sH$. All trace class operators constitute a Banach space endowed with the norm $\mbox{Tr}(A):= \sum_{k\geq 1}\langle \left(A^*A\right)^{1/2}e_k,e_k \rangle_{\sH}$. For any positive operator $A\in \mathcal{B}(\sH)$, we have
	\begin{equation}\label{equation: trace class}
		\mbox{Tr}(A)=\sum_{k\geq1}\langle Ae_k,e_k \rangle_\sH.
	\end{equation}
 Recall that ${\sL}^2(\sT)$ is the Hilbert space of real functions on $\sT$ square-integrable with respect to the Lebesgue measure. We denote the corresponding norm of ${\sL}^2(\sT)$ induced by the inner product $\langle f, g\rangle_{{\sL}^2}=\int_{\sT} f(t) g(t) dt$ by $\|\cdot\|_{{\sL}^2}$. And we denote by $\cI$ the identity operator on $\sL^2(\sT)$.
 
 Without loss of generality, we assume the functional covariate $X$ satisfy $\EE\left[X\right] = 0$ and $\EE\left[\left\|X\right\|_{\sL^2}^2\right]<\infty$. Then the covariance kernel $\cC:\sT\times \sT \rightarrow \mathbb{R}$, given by $\cC(s,t) := \EE\left[X(s)X(t)\right], \forall s,t\in \sT$, defines a compact and non-negative operator $\cL_\cC:\sL^2(\sT)\rightarrow \sL^2(\sT)$ through
 \begin{align*}
 	\cL_\cC(f)(t)=\int_\mathscr{T} \cC(s,t)f(s)ds,\quad\forall f\in {\sL}^2(\mathscr{T})\mbox{ and } \forall t\in\mathscr{T}.
 \end{align*}
\subsection{Reproducing Kernel Hilbert Space}\label{subsection: RKHS}
Consider a Hilbert space $\sH\subset \sL^2(\sT)$ endowed with the inner product $\langle\cdot,\cdot\rangle_{\sH}$. We say that $\sH$ is a reproducing kernel Hilbert space (RKHS), if and only if there exists a bivariate function $\cK:\sT\times\sT \rightarrow \mathbb{R}$ which is called as the reproducing kernel associated to $\sH$, such that for any $t\in \sT$ and $f\in \sH$,
\begin{align}\label{reproducing property}
	\cK(\cdot,t) \in \sH \quad\mbox{and}\quad \langle f, \cK(\cdot,t) \rangle_{\sH} = f(t).
\end{align}
 To emphasize the relationship between the RKHS $\sH$ and its reproducing kernel $\cK$, we rewrite the RKHS as $\sH_\cK$ and its equipped inner product as $\langle\cdot,\cdot\rangle_{\sH_\cK}$. The reproducing kernel $\cK$ is always symmetric and non-negative. 
 
 If the reproducing kernel $\cK$ is continuous, then $\cK$ can induce a compact, symmetric and non-negative integral operator $\cL_{\cK}:\sL^2(\sT)\rightarrow \sL^2(\sT)$ given by
 \begin{align*}
 	\cL_{\cK}(f)(t) = \int_{\sT} \cK(s,t)f(s)ds, \quad \forall t\in\sT, f\in \sL^2(\sT),
 \end{align*}
 and following from Mercer's theorem (see, e.g., \cite{hsing2015theoretical}), $\cK$ can be expressed as
 \begin{align*}
 	\cK(s,t) = \sum_{j=1}^{\infty} \lambda_j e_j(s)e_j(t), \quad\forall s,t\in \sT,
 \end{align*}
 where $\left\{\lambda_j\right\}_{j=1}^{\infty}$ is a non-increasing, non-negative sequences and $\left\{e_j\right\}_{j=1}^{\infty}$ is an orthonormal basis of $\sL^2(\sT)$. 
 
 We define the square root of $\cK$ as
 \begin{align*}
 	\cK^{1/2}(s,t) := \sum_{j=1}^{\infty} \sqrt{\lambda_j} e_j(s)e_j(t), \quad\forall s,t\in \sT.
 \end{align*}
 One can verify that $\cK^{1/2}\in \sL^2(\sT\times \sT)$ and $\cK^{1/2}$ satisfies the following equation:
 \begin{align*}
 	\cK(s,t) = \int_{\sT}\cK^{1/2}(u,s)\cK^{1/2}(u,t)du, \quad \forall s,t\in \sT.
 \end{align*}
Then we write
\begin{align}\label{equation: operator relationship}
	\cL_{\cK}^{1/2} = \cL_{\cK^{1/2}}, \quad\mbox{as}\quad \cL_{\cK^{1/2}}\cL_{\cK^{1/2}} = \cL_{\cK},
\end{align}
where $\cL_{\cK}^{1/2}$ denotes the $1/2$-th power of $\cL_{\cK}$ and $\cL_{\cK^{1/2}}$ is the integral operator induced by $\cK^{1/2}$. It is well known that
$\cL_{\cK}^{1/2}$ is compact and forms an isomorphism from $\overline{\sH_\cK}$, the closure of $\sH_\cK$ in $\sL^2(\sT)$, to the RKHS $\sH_\cK$. Thus, for any $f,g\in \overline{\sH_\cK}$, there holds
\begin{align}\label{equation: norm and inner product relationship}
	\cL_{\cK}^{1/2}f,\cL_{\cK}^{1/2}g\in \sH_\cK, \mbox{ }\left\|\cL_{\cK}^{1/2}f\right\|_{\sH_\cK}=\left\|f\right\|_{\sL^2} \mbox{ and } \left\langle\cL_{\cK}^{1/2}f,\cL_{\cK}^{1/2}g \right\rangle_{\sH_\cK} = \left\langle f,g \right\rangle_{\sL^2}.
\end{align}

 \subsection{Unanchored Sobolev Spaces}\label{subsection: sobolev space}

In this subsection, we provide the definition of unanchored Sobolev spaces and review several relevant results from previous studies.

 Recall that, the unanchored Sobolev space $\sW^{\alpha,2}(\sT)$ for a positive integer $\alpha\geq 1$ is defined by \eqref{sobolev} with the corresponding inner product \eqref{innerproduct}. For non-integer $\alpha>0$, the unanchored Sobolev space $\sW^{\alpha,2}(\sT)$ can be defined via the real interpolation. To this end, we introduce the following definition of real interpolation based on K-functional (see, e.g., \cite{sawano2018theory,tartar2007introduction}).
\begin{definition}[real interpolation]\label{definition: real interpolation}
	Let $(\sH_0, \left\|\cdot\right\|_{\sH_0})$ and $(\sH_1,\left\|\cdot\right\|_{\sH_1})$ be two normed spaces. For any element $a\in \sH_0+\sH_1$ and $t>0$, we define the K-functional
	\begin{align*}
		K(t;a) = \inf_{a = a_0+a_1}\Big\{\left\|a_0\right\|_{\sH_0} + t\left\|a_1\right\|_{\sH_1}\Big\}.
	\end{align*}
   For any $0<\eta<1$ and any $1\leq q \leq \infty$ (or for $\eta = 0, 1$ with $q =\infty$), we define the real interpolation space
   \begin{align*}
   	(\sH_0,\sH_1)_{\eta, q} := \left\{ a\in \sH_0+\sH_1\mid t^{-\eta}K(t;a)\in \sL^q\left(\mathbb{R}_{+},\frac{dt}{t}\right)\right\}
   \end{align*}
   with the norm $$\left\|a\right\|_{(\sH_0,\sH_1)_{\eta,q}} = \left\|t^{-\eta}K(t;a)\right\|_{\sL^q\left(\mathbb{R}_{+},\frac{dt}{t}\right)} = \left(\int_{0}^\infty \Big\{t^{-\eta }K(t;a)\Big\}^q\frac{dt}{t}\right)^{\frac{1}{q}}, \quad \forall a\in (\sH_0,\sH_1)_{\eta, q}.$$
\end{definition}

Using Definition \ref{definition: real interpolation}, the unanchored Sobolev space $\sW^{\alpha,2}(\sT)$ for any real number $\alpha>0$ can be defined as:
\begin{align*}
	\sW^{\alpha,2}(\sT) = \left(\sL^2(\sT), \sW^{\lceil \alpha\rceil,2}(\sT)\right)_{\frac{\alpha}{\lceil\alpha\rceil},2}.
\end{align*}
It is well known that for any $\alpha>1/2$, the unanchored Sobolev space $\sW^{\alpha,2}(\sT)$ is a reproducing kernel Hilbert space with a continuous reproducing kernel (see, e.g., \cite{sarazin2023new,wang2022functional}) and the following continuous embedding condition holds (see, e.g., \cite{adams2003sobolev}):
\begin{align}\label{equation: continuous embedding condition}
	\sW^{\alpha,2}(\sT) \hookrightarrow \sC^{0,\alpha-1/2}(\sT) \hookrightarrow \sL^{\infty}(\sT),
\end{align}
where $\hookrightarrow$ represents the continuous embedding, $\sC^{0,\alpha-1/2}$ denotes the H\"older space of order $\alpha -1/2$ and $\sL^{\infty}$ represents the bounded function space.

In particular, for a positive integer $\alpha > 0$, the reproducing kernel of $\sW^{\alpha,2}(\sT)$ can be explicitly expressed as
$$
\cK_{\alpha}(t,t'):=\sum_{k=0}^{\alpha} \frac{B_k(t) B_k(t')}{(k !)^2}+\frac{(-1)^{\alpha+1}}{(2 \alpha)!} B_{2 \alpha}(|t-t'|),\quad \forall t,t'\in \sT,
$$
where $\left(B_k\right)_{k \geq 0}$ denotes the sequence of Bernoulli polynomials.

 In the rest part of this paper, we fix a parameter $\alpha^*>1/2$. Consequently, the reproducing property \eqref{reproducing property} and isometric isomorphic property \eqref{equation: norm and inner product relationship} hold for the unanchored Sobolev space $\sW^{\alpha^*,2}(\sT)$ and its corresponding Sobolev kernel $\cK_{\alpha^*}$.

\subsection{Filter Functions}\label{subsection: filter functions}

Noting that $\cL_{\cK_{\alpha^*}}^{1/2}$ is a bounded operator for a fixed parameter $\alpha^*>1/2$ and that $\EE\left[\left\|X\right\|_{\sL^2}^2\right]<\infty$, there exists a constant $\rho_{\alpha^*}>0$, such that  
\begin{align}\label{equation: bound of expectation of L_KX}
	\EE\left[\left\|\cL_{\cK_{\alpha^*}}^{1/2}X\right\|_{\sL^2}^2\right]\leq \rho_{\alpha^*}^2<\infty. 
\end{align} 

We then introduce the following definition of filter functions.
\begin{definition}[filter functions]\label{definition: filter functions}
	Let $\nu_{\Psi} \geq 1$ be a constant and $\left\{\Psi_\lambda:[0,\infty) \rightarrow \mathbb{R}| \lambda\in (0,1)\right\}$ be a family of functions. We say $\left\{\Psi_\lambda:[0,\infty) \rightarrow \mathbb{R}| \lambda\in (0,1)\right\}$ is a filter function with qualification $\nu_{\Psi}$ if:
	\begin{enumerate}
		\item[(1)] There exists a constant $B>0$, such that
		\[\sup_{t\in\left[0, \frac{3}{2}\rho_{\alpha^*}+\frac{1}{2}\right]}\left|(\lambda+t)\Psi_\lambda(t)\right| \leq B,\quad \forall \lambda\in (0,1).\]
		\item[(2)] For any $0\leq\nu\leq \nu_{\Psi}$, there exists a constant $F_\nu>0$ only depending on $\nu$, such that
		\[\sup_{t\in\left[0, \frac{3}{2}\rho_{\alpha^*}+\frac{1}{2}\right]}\left|1-t\Psi_\lambda(t)\right|t^{\nu}\leq F_{\nu}\lambda^{\nu}, \quad \forall \lambda \in (0,1).\]
		\item[(3)] There exists a constant $D>0$, such that
		\begin{align*}
			\sup_{t\in\left(\frac{3}{2}\rho_{\alpha^*}+\frac{1}{2}, \infty\right)}\left|(\lambda+t)\Psi_\lambda(t)\right| \leq D,\quad \forall \lambda\in (0,1).
		\end{align*}
	\end{enumerate}
	For simplicity of notations, we denote $E:=\max\{B,D\}$.
\end{definition}

In the previous studies, filter functions are typically defined over bounded intervals of $t$ without property (3) (see, e.g., \cite{bauer2007regularization,lu2020balancing}). This is primarily because such studies often assume that empirical operators (such as $\cT_{\alpha^*,\bx}$ in this paper) are bounded almost everywhere. However, this assumption is not appropriate in the context of functional linear regression, as satisfying it would require that $\left\|\cL_{\cK_{\alpha^*}}^{1/2}X\right\|_{\sL^2}<\infty$ almost everywhere. This condition, however, excludes the most common case where $X$ is a Gaussian random variable taking values in $\sL^2(\sT)$. To avoid imposing such restrictive assumptions and to include the most common case in our analysis, we define the filter functions over the interval $[0,\infty)$ and introduce property (3).

There is a wide variety of algorithms for functional linear regression that satisfy Definition \ref{definition: filter functions}, and we list only a few typical examples below.
\begin{example}[Tikhonov regularization]\label{example: filter function1}
		We begin with the Tikhonov regularization algorithm. This algorithm constructs operators in the unanchored Sobolev space $\sW^{\alpha^*,2}(\sT)$ based on the discretely observed data $S$ through
		\begin{align}\label{Tikhonov regularization algorithm}
				\hat{\beta}^{TR}_{S,\alpha^*,\lambda}:=\mathop{\mathrm{argmin}}_{\beta \in \sW^{\alpha^*,2}(\sT)}\left\{\frac{1}{N}\sum_{i=1}^N \left(Y_i-\sum_{k=1}^{m}\left(r_{k+1}-r_{k}\right)\beta(r_k)X_i(r_k)\right)^2+\lambda \|\beta\|^2_{\sW^{\alpha^*,2}}\right\}.
		\end{align}
Using the the isometric isomorphism property of $\cL_{\cK_{\alpha^*}}^{1/2}$, the estimator $\hat{\beta}^{TR}_{S,\alpha^*,\lambda}$ can also be expressed as $\hat{\beta}^{TR}_{S,\alpha^*,\lambda} = \cL_{\cK_{\alpha^*}}^{1/2}\hat{f}^{TR}_{S,\alpha^*,\lambda}$ with
\begin{align*}
		\hat{f}^{TR}_{S,\alpha^*,\lambda}:=\mathop{\mathrm{argmin}}_{f \in \sL^2(\sT)}\left\{\frac{1}{N}\sum_{i=1}^N \left(Y_i-\sum_{k=1}^{m}\left(r_{k+1}-r_{k}\right)\left(\cL_{\cK_{\alpha^*}}^{1/2}f\right)(r_k)X_i(r_k)\right)^2+\lambda \|f\|^2_{\sL^2}\right\}.
\end{align*}
Then following from Theorem 6.2.1 in \cite{hsing2015theoretical}, we can solve $\hat{f}^{TR}_{S,\alpha^*,\lambda}$ explicitly as
\begin{align*}
	\hat{f}^{TR}_{S,\alpha^*,\lambda} = \left(\lambda \cI + \cT_{\alpha^*,\bx}\right)^{-1}\frac{1}{N}\cG_{{\alpha^*},\bx}^{*}\by,
\end{align*}
where $\cI$ denotes the identity operator on $\sL^2(\sT)$, $\cT_{\alpha^*,\bx}$ and $\cG_{{\alpha^*},\bx}^{*}$ are defined in Section \ref{section: introduction}.

Therefore, we write
\begin{align*}
	\hat{\beta}^{TR}_{S,\alpha^*,\lambda} = \cL_{\cK_{\alpha^*}}^{1/2}\hat{f}^{TR}_{S,\alpha^*,\lambda} = \Psi_\lambda^{TR}(\cT_{\alpha^*,\bx})\frac{1}{N}\cG_{{\alpha^*},\bx}^{*}\by,
\end{align*}
where $\Psi^{TR}_\lambda(t) = \left(\lambda + t\right)^{-1}, \forall t\in [0,\infty), \forall\lambda\in (0,1)$. It is easy to show that $\left\{\Psi_\lambda^{TR}\mid \lambda \in (0,1) \right\}$ satisfies Definition \ref{definition: filter functions} with $\nu_{\Psi} = 1$, $F_\nu = 1$ for any $0\leq \nu \leq \nu_{\Psi}=1$, and $B=D = 1$.
\end{example}
\begin{example}[iterated Tikhonov regularization]\label{example: filter function2}
The second example is an improved version of the Tikhonov regularization algorithm, named the iterated Tikhonov regularization algorithm. Let integer $s\geq 1$ be the total number of iterations. The $r$-th ($1\leq r\leq s$) iteration of iterated Tikhonov regularization algorithm establishes estimators through
\begin{align*}
	\hat{\beta}^{ITR,s}_{S,\alpha^*,\lambda,r}:=\mathop{\mathrm{argmin}}_{\beta \in \sW^{\alpha^*,2}(\sT)}\left\{\frac{1}{N}\sum_{i=1}^N \left(Y_i-\sum_{k=1}^{m}\left(r_{k+1}-r_{k}\right)\beta(r_k)X_i(r_k)\right)^2+\lambda \|\beta - \hat{\beta}^{ITR,s}_{S,\alpha^*,\lambda,r-1}\|^2_{\sW^{\alpha^*,2}}\right\},
\end{align*}
where $\hat{\beta}^{ITR,s}_{S,\alpha^*,\lambda, r-1}$ is the estimator given by the $r-1$-th iteration of iterated Tikhonov regularization algorithm and $\hat{\beta}^{ITR,s}_{S,\alpha^*,\lambda, 0} = 0$.

Following from the same arguments in Example \ref{example: filter function1}, we have $\hat{\beta}^{ITR,s}_{S,\alpha^*,\lambda,r} = \cL_{\cK_{\alpha^*}}^{1/2}\hat{f}^{ITR,s}_{S,\alpha^*,\lambda,r}, \forall 1\leq r\leq s,$ with $\hat{f}_{S,\alpha^*,\lambda, 0} = 0$ and
\begin{align*}
	\hat{f}^{ITR,s}_{S,\alpha^*,\lambda,r} = \hat{f}^{ITR,s}_{S,\alpha^*,\lambda,r-1}+ \left(\lambda \cI + \cT_{\alpha^*,\bx}\right)^{-1}\frac{1}{N}\cG_{{\alpha^*},\bx}^{*}\Big(\by-\cG_{{\alpha^*},\bx}\left(\hat{f}^{ITR,s}_{S,\alpha^*,\lambda,r-1}\right)\Big), \quad \forall 1\leq r \leq s.
\end{align*}
Then noting that $\cT_{\alpha^*,\bx} = \frac{1}{N}\cG_{{\alpha^*},\bx}^{*}\cG_{{\alpha^*},\bx}$, we can explicitly solve the final estimator of iterated Tikhonov regularization algorithm as
\begin{align*}
	\hat{\beta}^{ITR,s}_{S,\alpha^*,\lambda,s} =\cL_{\cK_{\alpha^*}}^{1/2}\hat{f}^{ITR,s}_{S,\alpha^*,\lambda,s}= \cL_{\cK_{\alpha^*}}^{1/2}\sum_{k=1}^{s} \left(\lambda \cI + \cT_{\alpha^*,\bx}\right)^{-k}\lambda^{k-1}\frac{1}{N}\cG_{{\alpha^*},\bx}^{*}\by.
\end{align*}
Therefore, we write
\begin{align*}
	\hat{\beta}^{ITR,s}_{S,\alpha^*,\lambda,s} = \cL_{\cK_{\alpha^*}}^{1/2}\Psi_\lambda^{ITR,s}(\cT_{\alpha^*,\bx})\frac{1}{N}\cG_{{\alpha^*},\bx}^{*}\by,
\end{align*}
where 
\begin{align*}
	\Psi_\lambda^{ITR,s}(t) = \sum_{k=1}^{s}(\lambda + t)^{-k}\lambda^{k-1} = \frac{(\lambda + t)^{s} -\lambda^{s}}{t(\lambda+t)^s}, \quad \forall t\in [0,\infty), \forall \lambda \in (0,1).
\end{align*}
One can verify that $\left\{\Psi_\lambda^{ITR,s} \mid \lambda \in (0,1)\right\}$ satisfies Definition \ref{definition: filter functions} with $\nu_{\Psi} = s$, $F_\nu = 1$ for any $0\leq \nu\leq \nu_{\Psi} = s$, and $B= D= s$.
\end{example}
\begin{example}[gradient flow]\label{example: filter function3}
	Let 
	\begin{align*}
		\mathcal{E}_{S}(\beta) = \frac{1}{2N}\sum_{i=1}^N \left(Y_i-\sum_{k=1}^{m}\left(r_{k+1}-r_{k}\right)\beta(r_k)X_i(r_k)\right)^2, \quad \forall \beta\in \sW^{\alpha^*,2}(\sT),
	\end{align*}
be the empirical loss.

The gradient flow algorithm constructs estimators by solving the gradient flow equation:
\begin{align*}
	\frac{d\hat{\beta}^{GF}_r}{dt} = -\nabla\mathcal{E}_{S}\left(\hat{\beta}^{GF}_{r}\right), \forall r\geq 0, \qquad \hat{\beta}^{GF}_0 = 0,
\end{align*}
where $\nabla\mathcal{E}_{S}\left(\hat{\beta}^{GF}_{r}\right)$ denotes the gradient of $\mathcal{E}_{S}(\beta)$ for $\beta = \hat{\beta}^{GF}_r$ (see, e.g., \cite{yao2007early}). Following from the isometric isomorphic property of $\cL_{\cK_{\alpha^*}}^{1/2}$, we write $\hat{\beta}^{GF}_r = \cL_{\cK_{\alpha^*}}^{1/2}\hat{f}^{GF}_r, \forall r\geq 0$ with $\hat{f}^{GF}_0 =0$. Then imitating the proof of Proposition 2.2 of \cite{yao2007early} and using the reproducing property of $\sW^{\alpha^*,2}(\sT)$, we have
\begin{align*}
	\nabla\mathcal{E}_{S}\left(\hat{\beta}^{GF}_r\right) 
	= \cL_{\cK_{\alpha^*}}^{1/2}\frac{1}{N}\cG_{{\alpha^*},\bx}^{*}\left(\cG_{{\alpha^*},\bx}\left(\hat{f}^{GF}_r\right)-\by\right), \quad \forall t>0.
\end{align*}
Let $\lambda = 1/r$ be the regularization parameter. Then noting that $\cT_{\alpha^*,\bx} = \frac{1}{N}\cG_{{\alpha^*},\bx}^{*}\cG_{{\alpha^*},\bx}$, we can solve the gradient flow equation in closed-form as
\begin{align*}
	\hat{\beta}^{GF}_r = \cL_{\cK_{\alpha^*}}^{1/2} \hat{f}^{GF}_r = \cL_{\cK_{\alpha^*}}^{1/2}\Psi_\lambda^{GF}(\cT_{\alpha^*,\bx})\frac{1}{N}\cG_{{\alpha^*},\bx}^{*}\by, \quad \forall r>1,
\end{align*}
where 
\[
\Psi_\lambda^{GF}(t) =\left\{
\begin{aligned}
   &\frac{1- e^{-rt}}{t} = \frac{1- e^{-t/\lambda}}{t}, \qquad&\forall t>0,\forall\lambda \in (0,1),\\
	& \frac{1}{\lambda},  &t=0,\forall\lambda \in (0,1).
\end{aligned} \right.\]
One can verify that $\left\{\Psi_\lambda^{GF}\mid \lambda\in(0,1)\right\}$ satisfies Definition \ref{definition: filter functions} with $\nu_{\Psi} = \infty$, $F_\nu = (\nu/e)^{\nu}$ for any $0\leq \nu<\infty$, and $B=D = 1$.
\end{example}

\subsection{Distributed Spectral Regularization Algorithms}

In this subsection, we introduce notations used in the distributed spectral regularization algorithms. 

We denote the sample set \[\bx:=\left\{\left(X_i(r_1),X_i(r_2),\cdots,X_i(r_m),X_i(r_{m+1})\right)\right\}_{i=1}^N\]
which consists of the discrete samples of $X$ in $S$.
Then we define an empirical operator $\cG_{\alpha^*,\bx}: \sL^2(\sT)\rightarrow \mathbb{R}^N$ based on $\bx$ as
\begin{align*}
	&\cG_{{\alpha^*},\bx}(f)\\
	:=&\left(\sum_{k=1}^{m}(r_{k+1}-r_k)\left\langle f,\cK^{1/2}_{{\alpha^*}}(r_k)\right\rangle_{\sL^2}X_1(r_k),\cdots, \sum_{k=1}^{m}(r_{k+1}-r_k)\left\langle f,\cK^{1/2}_{{\alpha^*}}(r_k)\right\rangle_{\sL^2}X_N(r_k)\right)^{T}
\end{align*}
for any $f\in \sL^{2}(\sT)$. The adjoint operator of $\cG_{{\alpha^*},\bx}$, denote by $\cG_{{\alpha^*},\bx}^*:\mathbb{R}^N\rightarrow \sL^2(\sT)$, is defined as
\[\cG_{{\alpha^*},\bx}^*(a):= \sum_{i=1}^{N}\sum_{k=1}^{m}a_i(r_{k+1}-r_k)\cK_{{\alpha^*}}^{1/2}(\cdot,r_k)X_i(r_k), \quad \forall a\in \mathbb{R}^N.\]
And we define the empirical operator $\cT_{{\alpha^*},\bx}:\sL^2(\sT)\rightarrow\sL^2(\sT)$ as
\begin{align*}
	\cT_{{\alpha^*},\bx}&:= \frac{1}{N}\cG_{{\alpha^*},\bx}^*\cG_{{\alpha^*},\bx}\\
	&= \frac{1}{N}\sum_{i=1}^N\left[\sum_{k=1}^{m}(r_{k+1}-r_k)\cK_{{\alpha^*}}^{1/2}(\cdot,r_k)X_i(r_k)\right]\otimes\left[\sum_{k=1}^{m}(r_{k+1}-r_k)\cK_{{\alpha^*}}^{1/2}(\cdot,r_k)X_i(r_k)\right].
\end{align*}
For the sake of simplicity, we define
\[\mathcal{S}_i(\cK_{\alpha^*}, \bx):= \sum_{k=1}^{m}(r_{k+1}-r_k)\cK_{{\alpha^*}}^{1/2}(\cdot,r_k)X_i(r_k), \quad\forall i=1,2,\cdots,N.\]
Thus, we express $\cT_{{\alpha^*},\bx}$ as
\[\cT_{{\alpha^*},\bx} = \frac{1}{N}\sum_{i=1}^N\mathcal{S}_i(\cK_{\alpha^*},\bx)\otimes\mathcal{S}_i(\cK_{\alpha^*},\bx).\]
The spectral regularization estimator based on the unanchored Sobolev space $\sW^{\alpha^*,2}(\sT)$ (with Sobolev kernel $\cK_{\alpha^*}$) and a filter function $\left\{\Psi_\lambda:[0,\infty) \rightarrow \mathbb{R}| \lambda\in (0,1)\right\}$ is defined as
\[\hat{\beta}_{S,\alpha^*,\Psi_\lambda} = \cL_{\cK_{\alpha^*}}^{1/2}\hat{f}_{S,\alpha^*,\Psi_\lambda}= \cL_{\cK_{\alpha^*}}^{1/2}\Psi_\lambda\left(T_{\alpha^*,\bx}\right)\frac{1}{N}G_{\alpha^*,\bx}^{*}\by,\]
where $\by= (Y_1,Y_2,\cdots,Y_N)^{T}\in \mathbb{R}^{N}$.

Recall that $S=\cup_{j=1}^M S_j$ with $S_j \cap S_k = \emptyset$ for $j\neq k$ and $|S_j|=\frac{N}{M}$. For any $1\leq j\leq M$, we denote the local sample sets as
\begin{align*}
	\bx_j:=\left\{\left(X_i(r_1),\cdots,X_i(r_{m+1})\right): \left(X_i(r_1),\cdots,X_i(r_{m+1}),Y_i\right)\in S_j\right\}
\end{align*}

and 
\begin{align*}
	\by_j:=\left\{Y_i: \left(X_i(r_1),\cdots,X_i(r_{m+1}),Y_i\right)\in S_j\right\}.
\end{align*}
Using these notations, the local spectral regularization estimator on each subset $S_j$ can be computed as $\hat{\beta}_{S_j,{\alpha^*},\Psi_\lambda}= \cL_{\cK_{\alpha^*}}^{1/2}\hat{f}_{S_j,\alpha^*,\Psi_\lambda}$ with 
\begin{align}\label{fhatlocal}
	\hat{f}_{S_j,\alpha^*,\Psi_\lambda}=\Psi_\lambda\left(\cT_{{\alpha^*},\bx_j}\right)\frac{1}{|S_j|}\cG_{{\alpha^*},\bx_j}^* \by_j,
\end{align} 
where the local empirical operator $\cT_{{\alpha^*},\bx_j}:\sL^2(\sT)\rightarrow \sL^2(\sT)$ is defined as
\[
\cT_{{\alpha^*},\bx_j}:= \frac{1}{|S_j|}\sum_{i:\left(X_i(r_1),\cdots,X_i(r_{m+1})\right)\in\bx_j}\mathcal{S}_{i}(\cK_{\alpha^*}, \bx)\otimes\mathcal{S}_{i}(\cK_{\alpha^*},\bx),
\]
and the local empirical operator $\cG_{{\alpha^*},\bx_j}^*:\mathbb{R}^{|S_j|}\rightarrow\sL^2(\sT)$ is defined as
\begin{align*}
	\cG_{{\alpha^*},\bx_j}^*(a)&:=\sum_{i:\left(X_i(r_1),\cdots,X_i(r_{m+1})\right)\in\bx_j}a_i\mathcal{S}_{i}(\cK_{\alpha^*},\bx), \quad\forall a\in \mathbb{R}^{|\mathcal{S}_j|}.
\end{align*}
Then the distributed spectral regularization estimator $\overline{\beta}_{S,{\alpha^*},\Psi_\lambda}$ can be computed as $\overline{\beta}_{S,{\alpha^*},\Psi_\lambda}=\cL^{1/2}_{\cK_{\alpha^*}}\overline{f}_{S,\alpha^*,\Psi_\lambda}$ with 
\begin{align}\label{fhataveraged}
	\overline{f}_{S,\alpha^*,\Psi_\lambda}=\frac{1}{M}\sum_{j=1}^M \hat{f}_{S_j,\alpha^*,\Psi_\lambda}.
\end{align}

\section{Main Results}\label{section: main results}

In this section, we first introduce main assumptions of our paper. Then based on these assumptions, we present our theoretical lower and upper bounds on the estimation error of the distributed spectral regularization estimator \eqref{finalestimator}.

\subsection{Assumptions}\label{subsection: assumptions}

 To establish the optimal upper and lower bounds for the estimation errors, we need to impose some mild assumptions on the slope function $\beta_0$, the functional covariate $X$, the random noise $\epsilon$ and the sampling scheme. We begin with the regularity condition of the slope function $\beta_0$. To this end, we define operators $\cT_{{\alpha^*}}:=\cL_{\cK_{\alpha^*}}^{1/2} \cL_\cC \cL_{\cK_{\alpha^*}}^{1/2}$ and $\cT_{{\alpha^*},\dagger}:=\cL_\cC^{1/2}\cL_{\cK_{\alpha^*}}\cL_\cC^{1/2}$. Note that 
\begin{equation*}
	\cT_{{\alpha^*}}= \cL_{\cK_{\alpha^*}}^{1/2}\cL_\cC^{1/2}\left(\cL_{\cK_{\alpha^*}}^{1/2}\cL_\cC^{1/2}\right)^{*} \mbox{ and } \cT_{{\alpha^*},\dagger} = \left(\cL_{\cK_{\alpha^*}}^{1/2}\cL_\cC^{1/2}\right)^{*}\cL_{\cK_{\alpha^*}}^{1/2}\cL_\cC^{1/2}.
\end{equation*} 
It is obvious that $\cL_{\cK_{\alpha^*}}^{1/2}\cL_\cC^{1/2}$, $\cL_\cC^{1/2}\cL_{\cK_{\alpha^*}}^{1/2}$, $\cT_{{\alpha^*}}$ and $\cT_{{\alpha^*},\dagger}$ are all compact. Then according to the singular value decomposition theorem (see, e.g., Theorem 4.3.1 in \cite{hsing2015theoretical}), we have the following expansions:
\begin{equation}\label{singular value decomposition}
	\begin{aligned}
		\cT_{{\alpha^*}}&= \sum_{j\geq 1}\mu_{\alpha^*,j} \phi_{\alpha^*,j} \otimes \phi_{\alpha^*,j},\\
		\cT_{{\alpha^*},\dagger}&= \sum_{j\geq 1} \mu_{\alpha^*,j} \varphi_{\alpha^*,j} \otimes \varphi_{\alpha^*,j},\\
		\cL_{\cK_{\alpha^*}}^{1/2}\cL_\cC^{1/2} &= \sum_{j\geq 1}\sqrt{\mu_{\alpha^*,j}} \varphi_{\alpha^*,j} \otimes \phi_{\alpha^*,j},\\
		\cL_\cC^{1/2}\cL_{\cK_{\alpha^*}}^{1/2} &= \sum_{j\geq 1}\sqrt{\mu_{\alpha^*,j}} \phi_{\alpha^*,j} \otimes \varphi_{\alpha^*,j},
	\end{aligned}
\end{equation} 
where $\{\mu_{\alpha^*,j}\}_{j\geq 1}$ is a positive and decreasing sequence, $\{\phi_{\alpha^*,j}\}_{j\geq 1}$ and $\{\varphi_{\alpha^*,j}\}_{j\geq 1}$ are two orthonormal sets of $\sL^2(\mathscr{T})$. Without loss of generality, we assume that $\mbox{Ker}( \cT_{\alpha^*}) = \mbox{Ker}( \cT_{\alpha^*, \dagger}) = \{0\}$. Under this assumption, $\{\phi_{\alpha^*,j}\}_{j\geq 1}$ and $\{\varphi_{\alpha^*,j}\}_{j\geq 1}$ are two orthonormal bases of $\sL^2(\mathscr{T})$. In other cases, similar results can be obtained by following the same proof procedure as outlined in our paper, only more tedious. We additionally assume that the sequence $\{\mu_{\alpha^*,j}\}_{j\geq 1}$ is summable.
\begin{assumption}[regularity condition of slope function]\label{assumption: regularity condition}
The slope function $\beta_0$ in functional linear regression model \eqref{LFRmodel} satisfies $\beta_0=\cL_{\cK_{\alpha^*}}^{1/2}f_0$ with
	\begin{equation}\label{equation: regularity condition}
		f_0=\cT_{{\alpha^*}}^{\theta}g_0 \mbox{ for some $0\leq\theta<\infty$ and $g_0\in\sL^2(\sT)$}.
	\end{equation} 
\end{assumption}
According to the isometric isomorphism property of $\cL_{\cK_{\alpha^*}}^{1/2}$, Assumption \ref{assumption: regularity condition} implies that $\beta_0\in \sW^{{\alpha^*},2}(\sT)$ for whatever $0\leq\theta\leq 1/2$. Furthermore, denote by $\cL_{\cK_{\alpha^*}}^{-1/2}$ the inverse operator of $\cL_{\cK_{\alpha^*}}^{1/2}$, Assumption \ref{assumption: regularity condition} implies that $\cL_{\cK_{\alpha^*}}^{-1/2}\beta_0$ belongs to the range space of $\cT_{{\alpha^*}}^{\theta}$ expressed as
\begin{align*}
	\mathrm{ran}\cT_{{\alpha^*}}^{\theta}=:\left\{f\in \sL^{2}(\sT): \sum_{j\geq 1}\frac{\langle f,\phi_{\alpha^*,j}\rangle_{\sL^2}^2}{\mu_{\alpha^*,j}^{2\theta}}<\infty\right\},
\end{align*}
where $\{\left(\mu_{\alpha^*,j},\phi_{\alpha^*,j}\right)\}_{j=1}^\infty$ is given by the singular value decomposition of $\cT_{{\alpha^*}}$ in \eqref{singular value decomposition}. Then there holds $\mathrm{ran}\cT_{{\alpha^*}}^{\theta_1} \subseteq \mathrm{ran}\cT_{{\alpha^*}}^{\theta_2}$ as $\theta_1 \geq \theta_2$. The regularity of functions in $\mathrm{ran}\cT_{{\alpha^*}}^{\theta}$ is measured by the decay rate of its expansion coefficients in terms of $\{\phi_{\alpha^*,j}\}_{j \geq 1}$. Condition \eqref{equation: regularity condition} means that $\langle \cL_{\cK_{\alpha^*}}^{-1/2}\beta_0, \phi_{\alpha^*,j}\rangle^2_{{\sL}^2}$ decays faster than the $2\theta-$th power of the eigenvalues of $\cT_{{\alpha^*}}$. Larger parameter $\theta$ will result in faster decay rates, and thus indicate higher regularity of $\beta_0$. We will discuss Assumption \ref{assumption: regularity condition} further in Section \ref{section: comparison}.

We impose the following assumption on random noise.
\begin{assumption}[noise condition]\label{assumption: epsilon2}
	The random noise $\epsilon$ in the functional linear regression model \eqref{LFRmodel} is independent of $X$ satisfying $\EE[\epsilon]=0$ and $\EE[\epsilon^2]\leq \sigma^2$.
\end{assumption}
We also need to impose the following assumption on the algorithm setting.
\begin{assumption}[sampling scheme]\label{assumption: sampling scheme} The discrete sample points $\left\{r_k\right\}_{k=1}^{m+1}$ in algorithms \eqref{spectral regularization estimator} and \eqref{finalestimator} satisfy $r_1<\cdots<r_{m+1}$, $r_1=0$ and $r_{m+1}=1$ for some integer $m\geq 1$, and there exists a constant $C_d$ such that $r_{k+1}-r_{k}\leq \frac{C_d}{m}$ for any $1\leq k\leq m$.
\end{assumption}
Assumption \ref{assumption: sampling scheme} ensures our sampling scheme closely approximates equally-spaced sampling, accommodating both random and fixed-point sampling schemes. A noteworthy example that satisfies this assumption is as follows: suppose the sample points are generated randomly from a distribution with a density function $\omega:[0,1]\rightarrow \mathbb{R}$ such that $\min_{s\in[0,1]}\omega(s)>0$. In this case, Assumption \ref{assumption: sampling scheme} holds with high probability (see, for instance, \cite{wang2014falling}).

The different theoretical upper bounds for the estimation error given by \eqref{estimation error} in our paper are based on the following two different regularity conditions of the functional covariate $X$, respectively.
\begin{assumption}[regularity condition of functional covariate \uppercase\expandafter{\romannumeral1}]\label{assumption: X1}
	There exists a constant $\rho>0$, such that for any $f\in \sL^2(\sT)$,
	\begin{align}\label{equation: X1}
		\EE\left[\left\langle X,f\right\rangle^4_{\sL^2}\right]\leq \rho\left[\EE\left\langle X,f
		\right\rangle_{\sL^2}^2\right]^2.
	\end{align}
	Moreover, there exists a constant $\kappa>0$ such that
	\begin{align}\label{equation: X1 additional}
		\EE\left[\|X\|_{\sW^{{\alpha^*},2}}^2\right]\leq\kappa^2
	\end{align}
\end{assumption}
Condition \eqref{equation: X1} has been introduced in \cite{cai2012minimax,yuan2010reproducing} showing that the linear functionals of $X$ have bounded kurtosis. In particular, one can verify that condition \eqref{equation: X1} is satisfied with $\rho =3$ when $X$ is a Gaussian random variable in $\sW^{\alpha^*,2}(\sT)$.

\begin{assumption}[regularity condition of functional covariate \uppercase\expandafter{\romannumeral2}]\label{assumption: X2}
	$X$ is a centered Gaussian random variable in $\sW^{\alpha^*,2}(\sT)$ satisfying condition \eqref{equation: X1 additional} with $\kappa>0$.
\end{assumption}

As stated before, Assumption \ref{assumption: X2} is covered by Assumption \ref{assumption: X1}, thus an enhanced version. One could relax Assumption \ref{assumption: X2} to the case that $X$ is a sub-Gaussian random variable without essentially changing the proof in this paper. Our proof only requires that the linear functionals of $X$ have bounded arbitrary-order moments. 

\subsection{Mini-max Lower Bounds}

Assumption \ref{assumption: regularity condition} and \ref{assumption: epsilon2} are sufficient to establish mini-max lower bounds for the estimation error. However, it is also necessary to assume that the eigenvalues $\{\mu_{\alpha^*,j}\}_{j\geq 1}$ of $\cT_{{\alpha^*}}$ (and $\cT_{{\alpha^*},\dagger}$) satisfy a polynomial decay condition.  To this end, for two positive sequences $\{a_j\}_{j\geq 1}$ and $\{b_j\}_{j\geq 1}$, we write $a_j \lesssim b_j$ if there exists a constant $c>0$ independent of $j$ such that $a_j \leq c b_j, \forall j\geq 1$. Additionally, we write $a_j \asymp b_j$ if and only if  $a_j \lesssim b_j$ and $b_j \lesssim a_j$. For convenience, we write $\beta_0 \in \mathrm{ran}\left\{\cL_{\cK_{\alpha^*}}^{1/2}\cT_{{\alpha^*}}^{\theta}\right\}$ in the scenarios where $\beta_0$ satisfies regularity condition \eqref{equation: regularity condition}. Similar lower bounds have been established by Theorem 4.4 of \cite{gupta2025optimal}.
\begin{theorem}[mini-max lower bound]\label{theorem: lower bound}
	Suppose that Assumption \ref{assumption: regularity condition} is satisfied with $0\leq \theta<\infty$, Assumption \ref{assumption: epsilon2} is satisfied with $\sigma>0$ and the eigenvalues $\{\mu_{\alpha^*,j}\}_{j=1}^\infty$ satisfy $\mu_{\alpha^*,j}\asymp j^{-1/p}$ for some $0< p \leq 1$. Then there holds
	\begin{equation}\label{equation: lower bound}
		\lim_{\gamma \to 0}\mathop{\lim\inf}_{N\to \infty} \inf_{\hat{\beta}_{\widetilde{S}}} \sup_{\beta_0} \mathbb{P} \left\{\left\|\hat{\beta}_{\widetilde{S}}-\beta_0\right\|_{\sW^{{\alpha^*},2}}^2\geq \gamma N^{-\frac{2\theta}{1+2\theta+p}}\right\} = 1,
	\end{equation} where the supremum is taken over all $\beta_0 \in \sW^{{\alpha^*},2}(\sT)$ satisfying $\beta_0 \in \mathrm{ran}\cL_{\cK_{\alpha^*}}^{1/2}\cT_{{\alpha^*}}^{\theta}$  and the infimum is taken over all possible predictors $\hat{\beta}_S \in \sW^{{\alpha^*},2}(\sT)$ based on the fully observed sample set $\widetilde{S} = \{(X_i,Y_i)\}_{i=1}^N$ consisting of $N$ independent copies of $(X,Y)$.
\end{theorem}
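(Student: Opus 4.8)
The plan is to establish the minimax lower bound via the standard information-theoretic approach built on Fano's method (or Assouad's lemma). The key observation is that although the theorem is stated in the $\sW^{\alpha^*,2}$ norm with the $\cL_{\cK_{\alpha^*}}^{1/2}$ pre-factor, the isometric isomorphism property \eqref{equation: norm and inner product relationship} lets us transfer everything to $\sL^2$: writing $\beta_0 = \cL_{\cK_{\alpha^*}}^{1/2}f_0$ with $f_0 = \cT_{\alpha^*}^\theta g_0$, and any estimator as $\hat\beta_{\widetilde S} = \cL_{\cK_{\alpha^*}}^{1/2}\hat f_{\widetilde S}$, we have $\|\hat\beta_{\widetilde S}-\beta_0\|_{\sW^{\alpha^*,2}}^2 = \|\hat f_{\widetilde S}-f_0\|_{\sL^2}^2$. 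Thus the problem reduces to lower-bounding the $\sL^2$-estimation error of $f_0$ over the ellipsoid $\{f_0 = \cT_{\alpha^*}^\theta g_0 : \|g_0\|_{\sL^2}\le R\}$, diagonalized in the basis $\{\phi_{\alpha^*,j}\}$ where $\cT_{\alpha^*}$ has eigenvalues $\mu_{\alpha^*,j}\asymp j^{-1/p}$.

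First I would construct a finite packing set inside the parameter space. Fix an integer $n \asymp N^{p/(1+2\theta+p)}$ (the effective dimension at the optimal bandwidth) and build perturbations supported on the coordinates $j \in \{n+1,\dots,2n\}$: for a sign vector $\omega \in \{-1,+1\}^n$ from a Gilbert--Varshamov / Varshamov--Gilbert subset with pairwise Hamming distance $\gtrsim n$, set $f_\omega = \delta \sum_{j} \omega_j \mu_{\alpha^*,j}^{\theta}\,\phi_{\alpha^*,j}$ for a scale $\delta$ to be tuned. One checks the membership constraint $f_\omega = \cT_{\alpha^*}^\theta g_\omega$ with $g_\omega = \delta\sum_j \omega_j \phi_{\alpha^*,j}$, so $\|g_\omega\|_{\sL^2} = \delta\sqrt n$, forcing $\delta \asymp n^{-1/2}$. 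The pairwise $\sL^2$-separation is then $\|f_\omega - f_{\omega'}\|_{\sL^2}^2 \gtrsim \delta^2 n\, \mu_{\alpha^*,2n}^{2\theta} \asymp n^{-2\theta/p}$, and substituting $n \asymp N^{p/(1+2\theta+p)}$ gives exactly the target rate $N^{-2\theta/(1+2\theta+p)}$.

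The crux is the information-theoretic step: bounding the Kullback--Leibler divergence between the $N$-fold product distributions $P_\omega^{\otimes N}$ and $P_{\omega'}^{\otimes N}$ induced by model \eqref{LFRmodel}. With Gaussian-type noise of variance $\le \sigma^2$, the per-sample KL divergence is controlled by $\tfrac{1}{2\sigma^2}\EE\langle\beta_\omega - \beta_{\omega'}, X\rangle_{\sL^2}^2 = \tfrac{1}{2\sigma^2}\langle f_\omega - f_{\omega'},\, \cT_{\alpha^*}(f_\omega - f_{\omega'})\rangle_{\sL^2}$, using $\EE\langle \cL_{\cK_{\alpha^*}}^{1/2}h, X\rangle_{\sL^2}^2 = \langle h, \cT_{\alpha^*} h\rangle_{\sL^2}$. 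On the support coordinates this equals $\delta^2 \sum_j (\omega_j-\omega_j')^2 \mu_{\alpha^*,j}^{2\theta+1} \lesssim \delta^2 n\, \mu_{\alpha^*,n}^{2\theta+1} \asymp n^{-(2\theta+1)/p}$, so the total KL is $\lesssim N\, n^{-(2\theta+1)/p} \asymp n$ by the choice of $n$. Since $\log|\text{packing}| \gtrsim n$ as well, one can tune the constant in $n$ so that the aggregate KL stays below a small constant multiple of $\log|\text{packing}|$, and Fano's inequality then yields that the minimax probability of error stays bounded away from zero; the $\liminf_{N}$ and $\lim_{\gamma\to 0}$ structure follows by driving the separation constant down. \textbf{The main obstacle} I anticipate is the mismatch between the KL metric, which weights coordinate $j$ by $\mu_{\alpha^*,j}^{2\theta+1}$, and the loss metric, which weights by $\mu_{\alpha^*,j}^{2\theta}$; the extra factor $\mu_{\alpha^*,j}\asymp j^{-1/p}$ is precisely what must be balanced by localizing the perturbations in the dyadic band $[n,2n]$ and calibrating $n$, and getting the polynomial-decay bookkeeping tight enough (rather than off by logarithmic or constant factors) is where the delicate work lies. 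A secondary technical point is justifying the reduction to a clean Gaussian-noise KL bound when Assumption \ref{assumption: epsilon2} only imposes a variance bound, which can be handled by restricting the supremum to the Gaussian-noise sub-model (a valid lower-bound construction since the infimum is over all estimators and the supremum over a subfamily only decreases the bound).
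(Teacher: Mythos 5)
Your proposal is correct and takes essentially the same route as the paper's own proof: a Varshamov--Gilbert packing of sign vectors placed on the spectral band $\{J+1,\dots,2J\}$ with $J\asymp N^{p/(1+2\theta+p)}$, the $1/\sqrt{J}$ amplitude forced by the source condition $f=\cT_{\alpha^*}^{\theta}g$, reduction of the $\sW^{\alpha^*,2}$ loss to $\sL^2$ via the isometry of $\cL_{\cK_{\alpha^*}}^{1/2}$, a per-sample KL bound $\lesssim \mu_{\alpha^*,J}^{1+2\theta}\asymp J^{-(1+2\theta)/p}$ computed in the Gaussian-noise sub-model (exactly the paper's reduction), and Fano's inequality with the $\lim_{\gamma\to 0}$ structure obtained by letting the packing constant grow. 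Your anticipated "main obstacle" (the $\mu_{\alpha^*,j}^{2\theta+1}$ versus $\mu_{\alpha^*,j}^{2\theta}$ weight mismatch) is resolved in the paper precisely as you suggest, by the dyadic-band localization and the calibration of $J$, so there is no gap.
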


\subsection{Upper Bounds}

We next consider the upper bounds of estimation errors and show that the rate of lower bound established in Theorem \ref{theorem: lower bound} can be achieved by the spectral regularization estimator $\overline{\beta}_{S,{\alpha^*},\lambda}$ in \eqref{finalestimator}. 

  Under Assumption \ref{assumption: regularity condition}, \ref{assumption: epsilon2}, \ref{assumption: sampling scheme} and \ref{assumption: X1} and a polynomial decay condition of the eigenvalues $\{\mu_{\alpha^*,j}\}_{j\geq 1}$, we can establish the following theorem which provides the upper bound for the convergence rate of estimation error \eqref{estimation error}. To this end, we denote by $o(a_j)$ a little-o sequence of non-negative $\{a_j\}_{j\geq 1}$ as $\lim_{j\to \infty} o(a_j)/a_j=0$.
\begin{theorem}[upper bound \uppercase\expandafter{\romannumeral1}]\label{theorem: upper bound1}
	Let $\left\{\Psi_\lambda:[0,\infty) \rightarrow \mathbb{R}| \lambda\in (0,1)\right\}$ be a filter function satisfying Definition \ref{definition: filter functions} with qualification $\nu_{\Psi}\geq 1$. Suppose that Assumption \ref{assumption: regularity condition} is satisfied with $0\leq \theta\leq \nu_{\Psi}$, Assumption \ref{assumption: epsilon2}, \ref{assumption: sampling scheme} and \ref{assumption: X1} are satisfied and the eigenvalues $\{\mu_{\alpha^*,j}\}_{j=1}^\infty$ satisfy $\mu_{\alpha^*,j}\lesssim j^{-1/p}$ for some $0< p \leq1$. Then there holds
	\begin{align}
		\lim_{\Gamma \to 0}\mathop{\lim\sup}_{N\to \infty} \sup_{\beta_0} \mathbb{P} \left\{\left\|\overline{\beta}_{S,{\alpha^*},\Psi_\lambda}-\beta_0\right\|_{\sW^{{\alpha^*},2}}^2\geq \Gamma N^{-\frac{2\theta}{1+2\theta+p}}\right\} = 0,
	\end{align}
	provided that $\lambda = N^{-\frac{1}{1+2\theta+p}}$, $m\geq N^{\frac{2+2\theta}{(2{\alpha^*}-1)(1+2\theta+p)}}$ and $M\leq o\left(\min\left\{N^{\frac{2\theta}{1+2\theta+p}},N^{\frac{1+2\theta-p}{2(1+2\theta+p)}}\right\}\right)$, where the supremum is taken over all $\beta_0 \in \sW^{{\alpha^*},2}(\sT)$ satisfying $\beta_0 \in \mathrm{ran}\cL_{\cK_{\alpha^*}}^{1/2}\cT_{{\alpha^*}}^{\theta}$ with $0\leq\theta\leq 1/2$.
\end{theorem}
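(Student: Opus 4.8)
The plan is to transfer the problem to $\sL^2(\sT)$ and then run a bias--variance decomposition tailored to the distributed and discretely-sampled structure. By the isometric isomorphism \eqref{equation: norm and inner product relationship} one has $\|\overline{\beta}_{S,\alpha^*,\Psi_\lambda}-\beta_0\|_{\sW^{\alpha^*,2}}=\|\overline{f}_{S,\alpha^*,\Psi_\lambda}-f_0\|_{\sL^2}$, so it suffices to bound the right-hand side, where $f_0=\cT_{\alpha^*}^{\theta}g_0$ by Assumption~\ref{assumption: regularity condition}. Writing $\overline{f}_{S,\alpha^*,\Psi_\lambda}-f_0=\frac1M\sum_{j=1}^M(\hat f_{S_j,\alpha^*,\Psi_\lambda}-f_0)$ and noting that $\EE[Y_i\mid X_i]=\langle f_0,\cL_{\cK_{\alpha^*}}^{1/2}X_i\rangle_{\sL^2}$, I would condition on the design $\bx$ and split each local error into a \emph{sample (noise) term} $\hat f_{S_j,\alpha^*,\Psi_\lambda}-\EE[\hat f_{S_j,\alpha^*,\Psi_\lambda}\mid\bx]$ and a \emph{bias term} $\EE[\hat f_{S_j,\alpha^*,\Psi_\lambda}\mid\bx]-f_0$, where $\EE[\hat f_{S_j,\alpha^*,\Psi_\lambda}\mid\bx]=\Psi_\lambda(\cT_{\alpha^*,\bx_j})\frac1{|S_j|}\cG_{\alpha^*,\bx_j}^*\EE[\by_j\mid\bx]$ is the noise-free local estimator.

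For the noise term the gain from the distributed structure is that, conditionally on $\bx$, the summands over the disjoint blocks $S_j$ are independent and mean zero, so the cross terms drop and $\EE\big[\|\frac1M\sum_j(\hat f_{S_j,\alpha^*,\Psi_\lambda}-\EE[\hat f_{S_j,\alpha^*,\Psi_\lambda}\mid\bx])\|_{\sL^2}^2\mid\bx\big]=\frac1{M^2}\sum_j\EE[\|\cdots\|_{\sL^2}^2\mid\bx]$. After replacing $\cT_{\alpha^*,\bx_j}$ by $\cT_{\alpha^*}$, each local variance is of order $\frac{\sigma^2}{|S_j|}\mathrm{Tr}(\Psi_\lambda(\cT_{\alpha^*})^2\cT_{\alpha^*})$; since $|S_j|=N/M$, the $1/M^2$ averaging restores the full-sample variance $\frac{\sigma^2}{N}\mathrm{Tr}(\Psi_\lambda(\cT_{\alpha^*})^2\cT_{\alpha^*})$. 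Using property~(1) of Definition~\ref{definition: filter functions} ($\Psi_\lambda(t)\le B/(\lambda+t)$) together with the decay $\mu_{\alpha^*,j}\lesssim j^{-1/p}$, this trace is $\lesssim\lambda^{-(1+p)}$, so the noise contribution is $\lesssim\frac{\sigma^2}{N}\lambda^{-(1+p)}$.

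For the bias term I would compare $\EE[\hat f_{S_j,\alpha^*,\Psi_\lambda}\mid\bx]$ with the data-free regularized solution $\cT_{\alpha^*}\Psi_\lambda(\cT_{\alpha^*})f_0$ and invoke property~(2) of Definition~\ref{definition: filter functions} (valid since $\theta\le\nu_\Psi$) to bound $\|(\cI-\cT_{\alpha^*}\Psi_\lambda(\cT_{\alpha^*}))\cT_{\alpha^*}^{\theta}g_0\|_{\sL^2}\lesssim\lambda^{\theta}\|g_0\|_{\sL^2}$, i.e.\ a squared bias of order $\lambda^{2\theta}$. With $\lambda=N^{-1/(1+2\theta+p)}$ both the squared bias $\lambda^{2\theta}$ and the variance $\frac{\sigma^2}{N}\lambda^{-(1+p)}$ collapse to the target order $N^{-2\theta/(1+2\theta+p)}$; the stated probability bound then follows from Markov's inequality applied to the resulting uniform bound on $\sup_{\beta_0}\EE\|\overline{\beta}_{S,\alpha^*,\Psi_\lambda}-\beta_0\|_{\sW^{\alpha^*,2}}^2$.

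The two genuinely hard steps are the perturbation arguments that justify replacing the random, discretely-built objects $\cT_{\alpha^*,\bx_j}$ and $\frac1{|S_j|}\cG_{\alpha^*,\bx_j}^*\by_j$ by their population, fully-observed counterparts. First, the \emph{operator concentration}: one must control $\|(\cT_{\alpha^*}+\lambda\cI)^{-1/2}(\cT_{\alpha^*,\bx_j}-\cT_{\alpha^*})(\cT_{\alpha^*}+\lambda\cI)^{-1/2}\|$ under only the fourth-moment/kurtosis hypothesis of Assumption~\ref{assumption: X1}, since $X$ may be unbounded (e.g.\ Gaussian); this is exactly where property~(3) of Definition~\ref{definition: filter functions} and moment-based concentration for sums of unbounded rank-one operators are needed, and it forces each block size $N/M$ to dominate the effective dimension $\lambda^{-p}$, yielding the constraint $M\le o\big(\min\{N^{2\theta/(1+2\theta+p)},N^{(1+2\theta-p)/(2(1+2\theta+p))}\}\big)$. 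Second, the \emph{discretization (quadrature) error}: the Riemann sums $\sum_k(r_{k+1}-r_k)(\cL_{\cK_{\alpha^*}}^{1/2}f)(r_k)X_i(r_k)$ approximate $\langle f,\cL_{\cK_{\alpha^*}}^{1/2}X_i\rangle_{\sL^2}$, and since $\cL_{\cK_{\alpha^*}}^{1/2}f$ and $X_i$ lie in $\sW^{\alpha^*,2}(\sT)\hookrightarrow\sC^{0,\alpha^*-1/2}(\sT)$ by \eqref{equation: continuous embedding condition}, the quasi-uniform sampling of Assumption~\ref{assumption: sampling scheme} yields a quadrature error of order $m^{-(2\alpha^*-1)}$ in the relevant squared norm; amplified by the filter factor $\lambda^{-2}$, this is $m^{-(2\alpha^*-1)}\lambda^{-2}$, and the hypothesis $m\ge N^{(2+2\theta)/((2\alpha^*-1)(1+2\theta+p))}$ is precisely what makes it $\lesssim N^{-2\theta/(1+2\theta+p)}$. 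Coordinating these two perturbations simultaneously across the $M$ independent machines, while keeping all constants uniform over $\{\beta_0\in\mathrm{ran}\,\cL_{\cK_{\alpha^*}}^{1/2}\cT_{\alpha^*}^{\theta}\}$, is the main obstacle.
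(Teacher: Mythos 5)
Your outline reproduces the paper's architecture almost exactly: the same four error sources (quadrature error of order $\lambda^{-2}m^{-2\alpha^*+1}$, bias $\lambda^{2\theta}$ via the qualification, noise $\lambda^{-1}\sN_{\alpha^*}(\lambda)/N$ after the $1/M^2$ averaging over blocks, and the normalized operator concentration $\|(\lambda\cI+\cT_{\alpha^*})^{-1/2}(\cT_{\alpha^*,\bx_j}-\cT_{\alpha^*})(\lambda\cI+\cT_{\alpha^*})^{-1/2}\|$), with the correct parameter bookkeeping. The genuine gap is your final step: you deduce the in-probability statement from ``Markov's inequality applied to the resulting uniform bound on $\sup_{\beta_0}\EE\|\overline{\beta}_{S,\alpha^*,\Psi_\lambda}-\beta_0\|_{\sW^{\alpha^*,2}}^2$.'' No such expectation bound is available under Assumption \ref{assumption: X1} at the stated scalings, and the paper deliberately does not prove one here --- that is the content of Theorem \ref{theorem: upper bound2}, which requires the Gaussian Assumption \ref{assumption: X2}. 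The obstruction is that property (2) of Definition \ref{definition: filter functions} and the inverse-comparison bound \eqref{equation: inverse norm under event} hold only on the good event $\sU_j^c$, where \eqref{equation: estimation of norm of T_x,alpha} keeps the empirical spectrum inside $[0,\tfrac32\rho_{\alpha^*}+\tfrac12]$; off this event the bias and noise pieces are only boundedly controlled, so their contribution to the expectation is of order $\PP(\sU_1)$ (and, for the noise piece, involves $\PP^{1/2}(\sU_1)$ by Cauchy--Schwarz, since Assumption \ref{assumption: epsilon2} gives only second moments of $\epsilon$). Under fourth moments alone the only available tail estimate is the Chebyshev bound of Lemma \ref{lemma: basic probability estimation of U1}, $\PP(\sU_1)\lesssim\lambda^{-2}m^{-2\alpha^*+1}+M\sN_{\alpha^*}^2(\lambda)/N$, which decays only polynomially: with $M$ near its permitted ceiling $N^{\frac{1+2\theta-p}{2(1+2\theta+p)}}$, the bad-event bias contribution is of order $N^{-\frac{1+2\theta-p}{2(1+2\theta+p)}}$, which exceeds the target rate $N^{-\frac{2\theta}{1+2\theta+p}}$ whenever $\theta>(1-p)/2$. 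The repair is exactly the paper's route: split the error as $\|\cdot\|^2\II_{\sU}+\|\cdot\|^2\II_{\sU^c}$, apply Markov only to the restricted expectation $\EE[\|\cdot\|^2\II_{\sU^c}]$ (each of your four terms carrying the indicator where needed), handle the bad event through $\EE[\II_\sU]=\PP(\sU)\leq M\PP(\sU_1)$, and conclude via the self-bounding inequality $[1-\mathcal{O}_{\PP}(M\lambda^{-2}m^{-2\alpha^*+1}+M^2\lambda^{-2p}/N)]\|\cdot\|^2\leq\mathcal{O}_{\PP}(N^{-\frac{2\theta}{1+2\theta+p}})$. Note also that the constraint on $M$ thus arises from forcing $M\PP(\sU_1)\to 0$ under the union bound, not (as you suggest) from the block size $N/M$ dominating the effective dimension $\lambda^{-p}$; your heuristic would give the weaker ceiling $M\ll N^{\frac{1+2\theta}{1+2\theta+p}}$.

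A secondary, smaller issue is your bias step. You propose to compare $\EE[\hat f_{S_j,\alpha^*,\Psi_\lambda}\mid\bx]$ with the population-regularized solution $\cT_{\alpha^*}\Psi_\lambda(\cT_{\alpha^*})f_0$ and apply property (2) to the population operator. For a general filter (iterated Tikhonov, gradient flow) there is no algebraic identity transferring $\Psi_\lambda(\cT_{\alpha^*,\bx_j})\cT_{\alpha^*,\bx_j}$ to $\Psi_\lambda(\cT_{\alpha^*})\cT_{\alpha^*}$ with a remainder controlled solely by the half-normed difference being at most $1/2$, especially for $\theta$ up to $\nu_\Psi>1$; the paper instead keeps the residual $\cI-\Psi_\lambda(\cT_{\alpha^*,\bx_1})\cT_{\alpha^*,\bx_1}$ in the \emph{empirical} operator and applies property (2) spectrally on $\sU_1^c$. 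If you insist on the comparison-to-population route, you would need an additional nontrivial perturbation lemma for $t\mapsto t\Psi_\lambda(t)$ that neither the hypotheses nor the paper's toolkit supplies.
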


Under Assumption \ref{assumption: X2}, an enhanced version of Assumption \ref{assumption: X1}, and the other assumptions of Theorem \ref{theorem: upper bound1}, we can establish the following strong upper bound in expectation for the estimation error given by \eqref{estimation error}. 
\begin{theorem}[upper bound \uppercase\expandafter{\romannumeral2}]\label{theorem: upper bound2}
		Let $\left\{\Psi_\lambda:[0,\infty) \rightarrow \mathbb{R}| \lambda\in (0,1)\right\}$ be a filter function satisfying Definition \ref{definition: filter functions} with qualification $\nu_{\Psi}\geq 1$. Suppose that Assumption \ref{assumption: regularity condition} is satisfied with $0\leq \theta\leq \nu_{\Psi}$, Assumption \ref{assumption: epsilon2}, \ref{assumption: sampling scheme} and \ref{assumption: X2} are satisfied and the eigenvalues $\{\mu_{\alpha^*,j}\}_{j=1}^\infty$ satisfy $\mu_{\alpha^*,j}\lesssim j^{-1/p}$ for some $0< p \leq1$. Then there holds
	\begin{equation}\label{equation: upper bound2}
		\EE\left[\left\|\overline{\beta}_{S,{\alpha^*},\Psi_\lambda}-\beta_0\right\|_{\sW^{{\alpha^*},2}}^2\right]\lesssim N^{-\frac{2\theta}{1+2\theta+p}},
	\end{equation}
	provided that $\lambda=N^{-\frac{1}{1+2\theta+p}}$, $1/m\leq o\left(N^{-\frac{2+2\theta}{(2{\alpha^*}-1)(1+2\theta+p)}}\log^{-\frac{2}{2\alpha^*-1}}N\right)$ and $M\leq o\left(N^{\frac{1+2\theta-p}{1+2\theta+p}}\log^{-1}N\right)$.
\end{theorem}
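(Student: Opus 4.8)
The plan is to prove Theorem \ref{theorem: upper bound2} by upgrading the in-probability bound of Theorem \ref{theorem: upper bound1} to a bound in expectation, which requires controlling the full distribution of the error rather than just its bulk. The natural starting point is the standard error decomposition for distributed spectral algorithms. Writing $\overline{f}_{S,\alpha^*,\Psi_\lambda} = \frac{1}{M}\sum_{j=1}^M \hat{f}_{S_j,\alpha^*,\Psi_\lambda}$ and $f_0 = \cL_{\cK_{\alpha^*}}^{-1/2}\beta_0$, and using the isometry \eqref{equation: norm and inner product relationship} to pass from the $\sW^{\alpha^*,2}$-norm on $\beta$-space to the $\sL^2$-norm on $f$-space, I would bound
\begin{align*}
\EE\left[\left\|\overline{\beta}_{S,\alpha^*,\Psi_\lambda}-\beta_0\right\|_{\sW^{\alpha^*,2}}^2\right] = \EE\left[\left\|\overline{f}_{S,\alpha^*,\Psi_\lambda}-f_0\right\|_{\sL^2}^2\right]
\end{align*}
and split the right-hand side into an (averaged) bias term and a variance term. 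The key algebraic identity is that the averaged estimator has bias equal to the common bias of a single local estimator (since all subsets are statistically identical), while the variance of the average is $1/M$ times the single-subset variance plus cross terms; the standard device is to write $\overline{f}-f_0 = (\EE_{S_1}\hat f_{S_1}-f_0) + \frac{1}{M}\sum_j(\hat f_{S_j}-\EE_{S_j}\hat f_{S_j})$ and expand, so that $\EE\|\overline f - f_0\|^2 = \|\text{bias}\|^2 + \frac{1}{M}\EE\|\hat f_{S_1}-\EE\hat f_{S_1}\|^2$.

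Next I would reduce both terms to quantities governed by the population operator $\cT_{\alpha^*}$ and the regularized resolvent. The central analytic tool is to insert the regularized operators $\cT_{\alpha^*,\bx_j}+\lambda\cI$ and $\cT_{\alpha^*}+\lambda\cI$ and control the mismatch between the empirical operator $\cT_{\alpha^*,\bx_j}$ and the population operator $\cT_{\alpha^*}$. This introduces the familiar effective-dimension factor $\mathcal N(\lambda) = \mathrm{Tr}\big((\cT_{\alpha^*}+\lambda\cI)^{-1}\cT_{\alpha^*}\big)\asymp \lambda^{-p}$ under the eigenvalue decay $\mu_{\alpha^*,j}\lesssim j^{-1/p}$, and the filter-function properties (1)--(3) of Definition \ref{definition: filter functions} supply the estimates $\|\Psi_\lambda(\cT)\|\lesssim \lambda^{-1}$ and the qualification bound $\|(\cI-\cT\Psi_\lambda(\cT))\cT^\theta\|\lesssim\lambda^\theta$ for $\theta\le\nu_\Psi$, which together with the source condition $f_0 = \cT_{\alpha^*}^\theta g_0$ yield a bias of order $\lambda^\theta$ and a variance of order $\frac{\sigma^2}{N}\lambda^{-p}\cdot\frac1M$ after tracking the discretization. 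Choosing $\lambda = N^{-1/(1+2\theta+p)}$ then balances $\lambda^{2\theta}$ against the variance to produce the target rate $N^{-2\theta/(1+2\theta+p)}$, with the constraint on $M$ arising from keeping the distributed variance inflation and the higher-order operator-perturbation remainders subdominant, and the constraint on $1/m$ from keeping the Riemann-sum discretization error of the inner products $\langle f,\cK^{1/2}_{\alpha^*}(\cdot,r_k)\rangle_{\sL^2}$ negligible, where the embedding \eqref{equation: continuous embedding condition} with exponent $\alpha^*-1/2$ controls the quadrature error and explains the $(2\alpha^*-1)$ in the exponent.

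The main obstacle, and the reason Assumption \ref{assumption: X2} (Gaussianity) is strengthened from the kurtosis bound of Assumption \ref{assumption: X1}, is obtaining a genuine bound in expectation rather than only in probability. The in-probability argument of Theorem \ref{theorem: upper bound1} tolerates rare events on which $\cT_{\alpha^*,\bx_j}$ deviates badly from $\cT_{\alpha^*}$, but the expectation integrates over these tails, so I must show that the contribution of the bad event $\{\|(\cT_{\alpha^*}+\lambda\cI)^{-1/2}(\cT_{\alpha^*}-\cT_{\alpha^*,\bx_j})\|$ large$\}$ decays fast enough. Here the Gaussian hypothesis is essential: it guarantees that the linear functionals $\langle X,f\rangle_{\sL^2}$ — and hence the random variables $\|\mathcal S_i(\cK_{\alpha^*},\bx)\|_{\sL^2}$ building $\cT_{\alpha^*,\bx_j}$ — possess bounded moments of every order, so that a Bernstein-type concentration inequality for sums of random Hilbert–Schmidt operators yields exponential tail bounds, and on the complementary bad event the crude deterministic bound $\|\Psi_\lambda(\cT_{\alpha^*,\bx_j})\|\lesssim\lambda^{-1}$ from property (1) together with $\EE\|\cG^*_{\alpha^*,\bx_j}\by_j/|S_j|\|^2<\infty$ keeps the tail contribution of order $e^{-c|S_j|\lambda^p}$, which is made negligible precisely by the stated $M$- and $m$-constraints. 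Executing this splitting-into-good-and-bad-event scheme uniformly in $\beta_0$ over the source class, and verifying that the logarithmic factors in the hypotheses on $1/m$ and $M$ exactly absorb the $\log N$ losses incurred in the operator concentration, is where the bulk of the technical work lies.
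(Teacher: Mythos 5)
Your proposal follows essentially the same route as the paper's proof: the in-expectation bias--variance decomposition over the $M$ subsets (Lemma \ref{lemma: expecatation upper bound 1}), the four-term estimates combining the filter properties, source condition, and effective dimension $\sN_{\alpha^*}(\lambda)\lesssim\lambda^{-p}$ (Lemma \ref{lemma: rates upper bound2}), the splitting of the problematic terms over the good/bad event $\sU_1$ with crude $\lambda^{-1}$-type bounds and Gaussian moment bounds on the bad event (Lemma \ref{lemma: expecatation upper bound 2}), and the exponential tail for $\PP(\sU_1)$ obtained from the all-order moments supplied by Gaussianity via Bernstein-type inequalities for scalar and Hilbert--Schmidt operator sums (Lemma \ref{lemma: probability estimation of U1 2}). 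The only minor inaccuracy is that the operator-concentration tail is of order $\exp\left(-c\,n\lambda^{2p}\right)$ with $n=N/M$ rather than $\exp\left(-c\,n\lambda^{p}\right)$, since the Hilbert--Schmidt Bernstein bound carries $\sN_{\alpha^*}^2(\lambda)$, but this is precisely what the stated constraint $M\leq o\left(N^{\frac{1+2\theta-p}{1+2\theta+p}}\log^{-1}N\right)$ absorbs, so your scheme goes through unchanged.
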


\section{Discussions and Comparisons}\label{section: comparison}

In this section, we will first discuss Assumption \ref{assumption: regularity condition}, then compare our analysis with some related results, and finally present several directions for future research.

\subsection{Discussions on Assumption \ref{assumption: regularity condition}}
For any two bounded self-adjoint operators $A_1$ and $A_2$ on $\sL^2(\sT)$, we write $A_1 \preceq A_2$, if $A_2 - A_1$ is nonnegative, and $A_1 \succeq A_2$, if $A_1 - A_2$ is nonnegative.  Suppose that $\beta_0$ satisfies Assumption \ref{assumption: regularity condition} with $0\leq \theta<\infty$ and $ \cL_\cC \preceq \delta_1 \cL_{\cK_{\alpha^*}}^{r_1}$ for some $\delta_1>0$ and $r_1\geq 0$. Then according to Theorem 3 in \cite{chen2022online}, we have $\cT_{\alpha^*}^{\theta} = \left(\cL_{\cK_{\alpha^*}}^{1/2}\cL_\cC\cL_{\cK_{\alpha^*}}^{1/2}\right)^{\theta}\preceq \delta_1^{\theta}\cL_{\cK_{\alpha^*}}^{(1+r_1)\theta}$, and thus there exists $g_0^{*}\in \sL^2(\sT)$, such that
$\beta_0 = \cL_{\cK_{\alpha^*}}^{(1+r_1)\theta}(g_0^*)$. In reverse, suppose that $\cL_\cC \succeq \delta_2 \cL_{\cK_{\alpha^*}}^{r_2}$ and $\beta_0 = \delta_2^{\tau}\cL_{\cK_{\alpha^*}}^{(1+r_2)\tau}(g_0^*)$ for some $\delta_2>0$, $r_2\geq 0$ and $\tau\geq 0$. Then also from Theorem 3 in \cite{chen2022online}, we have $\cT_{\alpha^*}^{\tau}=\left(\cL_{\cK_{\alpha^*}}^{1/2}\cL_\cC\cL_{\cK_{\alpha^*}}^{1/2}\right)^{\tau}\succeq \delta_2^{\tau}\cL_{\cK_{\alpha^*}}^{(1+r_2)\tau}$, and thus there exists $g_0\in \sL^2(\sT)$ such that $\beta_0$ satisfies Assumption \ref{assumption: regularity condition} with $\theta = \tau$.

Our further discussion on Assumption \ref{assumption: regularity condition} relies on the interpolation space (or power space). Following from the singular value decomposition theorem, the compact and symmetric operator $\cL_{\cK_{\alpha^*}}$ can be expressed as
\begin{align*}
	\cL_{\cK_{\alpha^*}} = \sum_{j\geq 1} \lambda_{\alpha^*,j} e_{\alpha^*,j}\otimes e_{\alpha^*,j},
\end{align*} 
where $\left\{\lambda_{\alpha^*,j}\right\}_{j\geq 1}$ and $\left\{e_{\alpha^*,j}\right\}_{j\geq 1}$ are the eigenvalues and eigenfunctions.

We define the interpolation space (or power space) $\left[\sW^{\alpha^*,2}(\sT)\right]^{r}$ for any $0\leq r\leq 1$ as
\begin{align*}
	\left[\sW^{\alpha^*,2}(\sT)\right]^{r} := \mbox{Ran}\left\{\cL_{\cK_{\alpha^*}}^{r/2}\right\} = \left\{\sum_{j\geq 1} a_j\lambda_{\alpha^*,j}^{r/2}e_{\alpha^*,j}\Bigg| \sum_{j\geq 1}a_j^2<\infty \right\}.
\end{align*}
One can verify that for any $0 \leq r_1\leq r_2\leq 1$, the embedding $\left[\sW^{\alpha^*,2}(\sT)\right]^{r_2}\hookrightarrow \left[\sW^{\alpha^*,2}(\sT)\right]^{r_1}$ exists and is compact. Noting that $\sW^{\alpha^*,2}(\sT)$ is dense in $\sL^2(\sT)$ and recalling the isometric isomorphic property of $\cL_{\cK_{\alpha^*}}^{1/2}$, we have $\left[\sW^{\alpha^*,2}(\sT)\right]^{0} = \sL^2(\sT)$ and $\left[\sW^{\alpha^*,2}(\sT)\right]^{1} = \sW^{\alpha^*,2}(\sT)$. Besides, Theorem 4.6 of \cite{steinwart2012mercer} shows that for any $0<r<1$,
\begin{align*}
	\left[\sW^{\alpha^*,2}(\sT)\right]^{r} = \left(\sL^2(\sT),\sW^{\alpha^*,2}(\sT)\right)_{r,2} = \sW^{\alpha^*r,2}(\sT).
\end{align*}
Then Combined with the previous discussion, we can draw the conclusions:
\begin{enumerate}
	\item Suppose that $\beta_0$ satisfies Assumption \ref{assumption: regularity condition} with $0\leq \theta<\infty$ and $ \cL_\cC \preceq \delta_1 \cL_{\cK_{\alpha^*}}^{r_1}$ for some $\delta_1>0$ and $r_1\geq 0$. Then we have $\beta_0 \in \left[\sW^{\alpha^*,2}(\sT)\right]^{(1+r_1)\theta} = \sW^{\alpha^*(1+r_1)\theta,2}(\sT)$.
	\item Suppose that $\cL_\cC \succeq \delta_2 \cL_{\cK_{\alpha^*}}^{r_2}$ and $\beta_0 \in \left[\sW^{\alpha^*,2}(\sT)\right]^{(1+r_2)\tau} = \sW^{\alpha^*(1+r_2)\tau,2}(\sT)$ for some $\delta_2>0$, $r_2\geq 0$ and $\tau\geq 0$. Then we have Assumption \ref{assumption: regularity condition} is satisfied with $\theta =\tau$.
\end{enumerate}

\subsection{Comparisons with Related Results}\label{subsection: comparisons}
There are only a few works studying functional linear regression with discretely observed data, among which the most recent paper \cite{wang2022functional} is notable. The authors of \cite{wang2022functional} establish finite sample upper bounds for the prediction error of constrained least squares estimator within a highly flexible model that accommodates functional responses, both functional and vector covariates, and discrete sampling. While our paper focuses on the scalar response case, we believe that the techniques developed here can be readily extended to handle functional responses without significant modifications. For further comparison, we first give the following proposition which can be derived in the context our paper using the techniques of \cite{wang2022functional}.
\begin{proposition}\label{propositon: upper bound1}
		Suppose that the slope function $\beta_0\in \sW^{\alpha^*,2}(\sT)$, Assumption \ref{assumption: epsilon2} is satisfied and additionally $\epsilon$ is a Gaussian random variable, Assumption \ref{assumption: sampling scheme} and \ref{assumption: X1} are satisfied, and the eigenvalues $\{\mu_{\alpha^*,j}\}_{j=1}^\infty$ satisfy $\mu_{\alpha^*,j}\asymp j^{-1/p}$ for some $0<p \leq1$. The constrained least squares estimator based on the training sample set $S$ is given by
		\begin{align}\label{cls estimator}
			\widetilde{\beta}_{S,\alpha^*,C_\beta}:=\mathop{\mathrm{argmin}}_{\beta\in \sW^{{\alpha^*},2}(\sT),\left\|\beta\right\|_{\sW^{{\alpha^*},2}}\leq C_\beta}\left\{\frac{1}{N}\sum_{i=1}^N \left(Y_i-\sum_{k=1}^{m}\left(r_{k+1}-r_{k}\right)\beta(r_k)X_i(r_k)\right)^2\right\},
		\end{align}
	where $C_\beta>0$ is a parameter to be chosen. If we take $C_{\beta}=C\sqrt{\log(N) N^{-1}}$ for some sufficiently large constant $C$, then the following upper bound holds with probability at least $1-N^{-4}$:
	\begin{align}\label{equation: upper bound3}
		\mathcal{R}(\widetilde{\beta}_{S,\alpha^*,C_\beta}) - \mathcal{R}(\beta_0) \lesssim \log(N)\left(m^{-{\alpha^*}+1/2}+N^{-\frac{1}{1+p}}\right),
	\end{align}
where $\mathcal{R}(\widetilde{\beta}_{S,\alpha^*,C_\beta}) - \mathcal{R}(\beta_0)$ represents the excess prediction risk of $\widetilde{\beta}_{S,\alpha^*,C_\beta}$, defined as 
\begin{align*}
	\mathcal{R}(\widetilde{\beta}_{S,\alpha^*,C_\beta}) - \mathcal{R}(\beta_0):= \mathbb{E}\left[Y-\left\langle X,\widetilde{\beta}_{S,\alpha^*,C_\beta}\right\rangle_{{\sL}^2}^2\right] - \mathbb{E}\left[Y-\left\langle X,\beta_0\right\rangle_{{\sL}^2}^2\right].
\end{align*}
\end{proposition}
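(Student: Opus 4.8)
The plan is to reduce the excess prediction risk to a covariance seminorm and then run a constrained least squares (basic inequality) argument, splitting the resulting error into a stochastic part and a quadrature part. First I would simplify the risk: since $\epsilon$ is independent of $X$ with $\EE[\epsilon]=0$ (Assumption \ref{assumption: epsilon2}) and $Y=\langle\beta_0,X\rangle_{\sL^2}+\epsilon$, the cross term vanishes and for every $\beta$ one has $\mathcal{R}(\beta)-\mathcal{R}(\beta_0)=\EE[\langle X,\beta-\beta_0\rangle_{\sL^2}^2]=\|\cL_\cC^{1/2}(\beta-\beta_0)\|_{\sL^2}^2$. Hence it suffices to bound $\|\cL_\cC^{1/2}\hat g\|_{\sL^2}^2$ with $\hat g:=\widetilde{\beta}_{S,\alpha^*,C_\beta}-\beta_0$. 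Writing $\langle g,h\rangle_m:=\sum_{k=1}^m (r_{k+1}-r_k)g(r_k)h(r_k)$ for the quadrature form appearing in the empirical risk, and taking the radius $C_\beta$ large enough (the role of the sufficiently large constant) that $\beta_0$ is feasible, optimality of $\widetilde{\beta}_{S,\alpha^*,C_\beta}$ over the constraint ball yields the basic inequality
\[
\frac1N\sum_{i=1}^N\langle \hat g,X_i\rangle_m^2\;\le\;\frac2N\sum_{i=1}^N\epsilon_i\langle\hat g,X_i\rangle_m+\frac2N\sum_{i=1}^N a_i\langle\hat g,X_i\rangle_m,
\]
where $a_i:=\langle\beta_0,X_i\rangle_{\sL^2}-\langle\beta_0,X_i\rangle_m$ is the quadrature error of $\beta_0$.

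Next I would control every quadrature discrepancy. Because $\sW^{\alpha^*,2}(\sT)\hookrightarrow \sC^{0,\alpha^*-1/2}(\sT)$ by \eqref{equation: continuous embedding condition}, products of functions in $\sW^{\alpha^*,2}(\sT)$ are H\"older of order $\alpha^*-1/2$, so under the near-uniform sampling of Assumption \ref{assumption: sampling scheme} the left-endpoint quadrature error obeys $|\langle g,h\rangle_m-\langle g,h\rangle_{\sL^2}|\lesssim m^{-(\alpha^*-1/2)}\|g\|_{\sW^{\alpha^*,2}}\|h\|_{\sW^{\alpha^*,2}}$. Applying this to the pairs $(\beta_0,X_i)$ and $(\hat g,X_i)$, and using $\EE[\|X\|_{\sW^{\alpha^*,2}}^2]\le\kappa^2$ from \eqref{equation: X1 additional} to control $\tfrac1N\sum_i\|X_i\|_{\sW^{\alpha^*,2}}^2$, a Cauchy--Schwarz step bounds the bias cross-term by $\lesssim m^{-(\alpha^*-1/2)}\|\cL_\cC^{1/2}\hat g\|_{\sL^2}$ and simultaneously lets me replace the empirical quadrature seminorm $\tfrac1N\sum_i\langle\hat g,X_i\rangle_m^2$ by its continuous counterpart $\tfrac1N\sum_i\langle\hat g,X_i\rangle_{\sL^2}^2$ up to the same $m^{-(\alpha^*-1/2)}$ order. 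This is where the $m^{-\alpha^*+1/2}$ term is born.

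I would then handle the stochastic terms. Conditioning on $\{X_i\}_{i=1}^N$, Gaussianity of $\epsilon$ makes $\tfrac2N\sum_i\epsilon_i\langle\hat g,X_i\rangle_{\sL^2}$ a centred Gaussian process over $\hat g$ in the ball, whose modulus is governed by the local Gaussian complexity at covariance-seminorm radius $\delta:=\|\cL_\cC^{1/2}\hat g\|_{\sL^2}$. The decay $\mu_{\alpha^*,j}\asymp j^{-1/p}$ makes this complexity scale like $N^{-1/2}\big(\sum_{j}\min\{\delta^2,\mu_{\alpha^*,j}\}\big)^{1/2}\asymp N^{-1/2}\delta^{1-p}$; a peeling/self-bounding argument over $\delta$ plus Gaussian concentration (converting the expected supremum into a $1-N^{-4}$ bound at the cost of a $\sqrt{\log N}$ factor) yields a noise contribution $\lesssim N^{-1/2}\sqrt{\log N}\,\|\cL_\cC^{1/2}\hat g\|_{\sL^2}^{1-p}$. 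The fourth-moment (kurtosis) condition \eqref{equation: X1} separately supplies the Bernstein-type control needed to pass from the empirical seminorm $\tfrac1N\sum_i\langle\hat g,X_i\rangle_{\sL^2}^2$ to the population $\|\cL_\cC^{1/2}\hat g\|_{\sL^2}^2$ uniformly on the ball, on the same high-probability event.

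Finally I would assemble the pieces: on the good event the basic inequality becomes the self-bounding relation
\[
\big\|\cL_\cC^{1/2}\hat g\big\|_{\sL^2}^2\lesssim N^{-1/2}\sqrt{\log N}\,\big\|\cL_\cC^{1/2}\hat g\big\|_{\sL^2}^{1-p}+m^{-(\alpha^*-1/2)}\big\|\cL_\cC^{1/2}\hat g\big\|_{\sL^2},
\]
and solving it (the first term gives $\|\cL_\cC^{1/2}\hat g\|_{\sL^2}^2\lesssim(\log N/N)^{1/(1+p)}$, the second gives the discretization term) yields $\mathcal{R}(\widetilde{\beta}_{S,\alpha^*,C_\beta})-\mathcal{R}(\beta_0)=\|\cL_\cC^{1/2}\hat g\|_{\sL^2}^2\lesssim \log N\,(m^{-\alpha^*+1/2}+N^{-1/(1+p)})$, which is exactly \eqref{equation: upper bound3}. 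The hard part is the localized empirical-process/self-bounding step delivering the sharp $N^{-1/(1+p)}$ rate, carried out \emph{simultaneously} with uniform quadrature control: the two error sources live in different geometries — stochastic fluctuations in the covariance seminorm $\|\cL_\cC^{1/2}\cdot\|_{\sL^2}$, discretization bias in the stronger $\sW^{\alpha^*,2}$-norm via H\"older continuity — and reconciling them, while converting empirical quadrature norms to population continuous norms uniformly over the constraint ball and keeping the failure probability at $N^{-4}$, is the delicate part. The eigenvalue decay $\mu_{\alpha^*,j}\asymp j^{-1/p}$ and the fourth-moment/Gaussian hypotheses are the essential levers.
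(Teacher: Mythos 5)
Your proposal is correct and is essentially the proof the paper intends: the paper gives no detailed argument for Proposition \ref{propositon: upper bound1}, saying only that it follows by the techniques of \cite{wang2022functional}, and those techniques are exactly your route --- the basic inequality for the constrained least-squares estimator over the $\sW^{\alpha^*,2}$-ball, quadrature control via the H\"older embedding (Lemmas \ref{lemma: Morrey's inequality} and \ref{lemma: integral approximation}) producing the $m^{-\alpha^*+1/2}$ term, localized Gaussian complexity under $\mu_{\alpha^*,j}\asymp j^{-1/p}$ (together with the fourth-moment condition \eqref{equation: X1} for the empirical-to-population seminorm comparison) producing the $N^{-1/(1+p)}$ term, and a final self-bounding step. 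The one point worth flagging is the radius: the printed choice $C_\beta=C\sqrt{\log(N)N^{-1}}$ shrinks to zero and would make $\beta_0$ infeasible, so your reading that $C_\beta$ is taken large enough for $\beta_0$ to lie in the ball (consistent with $C_\beta\asymp\sqrt{\log N}$, which also accounts for the $\log N$ prefactor in \eqref{equation: upper bound3}) is the sensible interpretation of what appears to be a typo in the statement.
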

According to the well-know equivalence between Tikhonov regularization algorithm and constrained least squares algorithm (see, e.g., \cite{hastie2009elements}), one can verify that the algorithm \eqref{cls estimator} with parameter $C_\beta>0$ is equivalent to the algorithm \eqref{Tikhonov regularization algorithm} with some $\lambda>0$. The techniques of \cite{wang2022functional} are not applicable for deriving upper bounds for the estimation error, while the techniques developed by our paper are sufficient to establish upper bounds for the excess prediction risk. Noting that \[\EE\left[\mathcal{R}(\overline{\beta}_{S,{\alpha^*},\lambda})-\mathcal{R}(\beta_0)\right]= \EE\left[\left\langle X, \overline{\beta}_{S,{\alpha^*},\lambda}-\beta_0\right\rangle_{\sL^2}^2\right]= \left\|\cL_\cC^{1/2}\left(\overline{\beta}_{S,{\alpha^*},\lambda}-\beta_0\right)\right\|_{\sL^2}^2,\] 
and that \[\left\|\cL_\cC^{1/2}\cL_{\cK_{{\alpha^*}}}^{1/2}(\lambda \cI + \cT_{{\alpha^*}})^{-1/2}\right\|^2=\left\|(\lambda \cI + \cT_{{\alpha^*}})^{-1/2}\cL_{\cK_{{\alpha^*}}}^{1/2}\cL_\cC\cL_{\cK_{{\alpha^*}}}^{1/2}(\lambda \cI + \cT_{{\alpha^*}})^{-1/2}\right\|\leq 1,\] the following corollary can be obtained by using the same arguments as in Theorem \ref{theorem: upper bound2} with the filter function taken as $\left\{\Psi_\lambda^{TR}\mid \lambda \in (0,1) \right\}$, which is given in Example \ref{example: filter function1} (Tikhonov regularization).
\begin{corollary}\label{corollary: upper bound2}
	Suppose that Assumption \ref{assumption: regularity condition} is satisfied with $0\leq\theta\leq 1$, Assumption \ref{assumption: epsilon2}, \ref{assumption: sampling scheme} and \ref{assumption: X2} are satisfied and the eigenvalues $\{\mu_{\alpha^*,j}\}_{j=1}^\infty$ satisfy $\mu_{\alpha^*,j}\lesssim j^{-1/p}$ for some $0<p \leq1$. Then there holds
	\begin{equation}\label{equation: upper bound4}
		\EE\left[\mathcal{R}(\overline{\beta}_{S,{\alpha^*},\Psi_\lambda^{TR}})-\mathcal{R}(\beta_0)\right]\lesssim N^{-\frac{1+2\theta}{1+2\theta+p}},
	\end{equation}
	provided that $\lambda=N^{-\frac{1}{1+2\theta+p}}$, $1/m\leq o\left(N^{-\frac{2+2\theta}{(2{{\alpha^*}}-1)(1+2\theta+p)}}\log^{-\frac{2}{2{\alpha^*}-1}}N\right)$ and $M\leq o\left(N^{\frac{1+2\theta-p}{1+2\theta+p}}\log^{-1}N\right)$.
\end{corollary}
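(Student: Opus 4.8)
The plan is to run the proof of Theorem \ref{theorem: upper bound2} essentially verbatim, specialized to the Tikhonov filter $\{\Psi_\lambda^{TR}\}$ of Example \ref{example: filter function1} (so $\nu_\Psi=1$, $F_\nu\equiv1$, $B=D=1$), but measuring the error in the prediction seminorm rather than the Sobolev norm. Abbreviate $\overline{f}:=\overline{f}_{S,\alpha^*,\Psi_\lambda^{TR}}$ and recall $\overline{\beta}_{S,\alpha^*,\Psi_\lambda^{TR}}=\cL_{\cK_{\alpha^*}}^{1/2}\overline{f}$, $\beta_0=\cL_{\cK_{\alpha^*}}^{1/2}f_0$. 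Starting from the identity for the excess risk recorded just before the statement, I would factor
\[
\cL_\cC^{1/2}\left(\overline{\beta}_{S,\alpha^*,\Psi_\lambda^{TR}}-\beta_0\right)=\cL_\cC^{1/2}\cL_{\cK_{\alpha^*}}^{1/2}\left(\lambda\cI+\cT_{\alpha^*}\right)^{-1/2}\left(\lambda\cI+\cT_{\alpha^*}\right)^{1/2}\left(\overline{f}-f_0\right)
\]
and invoke the operator-norm bound $\|\cL_\cC^{1/2}\cL_{\cK_{\alpha^*}}^{1/2}(\lambda\cI+\cT_{\alpha^*})^{-1/2}\|^2\le1$ quoted above. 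This reduces the corollary to controlling $\EE[\|(\lambda\cI+\cT_{\alpha^*})^{1/2}(\overline{f}-f_0)\|_{\sL^2}^2]$, which is exactly the quantity handled in Theorem \ref{theorem: upper bound2} but now carrying the extra weight $(\lambda\cI+\cT_{\alpha^*})^{1/2}$.

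Next I would reuse the bias/sample-error splitting of Theorem \ref{theorem: upper bound2}. Introduce the population regularized reference $f_\lambda:=\Psi_\lambda^{TR}(\cT_{\alpha^*})\cT_{\alpha^*}f_0=(\lambda\cI+\cT_{\alpha^*})^{-1}\cT_{\alpha^*}f_0$ and write $\overline{f}-f_0=(\overline{f}-f_\lambda)+(f_\lambda-f_0)$, then apply the weight to each piece. For the deterministic bias,
\[
\left(\lambda\cI+\cT_{\alpha^*}\right)^{1/2}\left(f_\lambda-f_0\right)=-\left(\lambda\cI+\cT_{\alpha^*}\right)^{1/2}\left(\cI-\Psi_\lambda^{TR}(\cT_{\alpha^*})\cT_{\alpha^*}\right)\cT_{\alpha^*}^{\theta}g_0,
\]
so that Assumption \ref{assumption: regularity condition} together with the residual estimate of Definition \ref{definition: filter functions}(2) applied at the shifted order $\theta+1/2$ produces the improved norm factor $\lambda^{\theta+1/2}$ (in place of the $\lambda^{\theta}$ governing the Sobolev-norm bias), hence a squared contribution $\lesssim\lambda^{2\theta+1}$. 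For the stochastic piece, the same weight converts the sample-error bound of Theorem \ref{theorem: upper bound2}, governed by $\mathrm{Tr}(\cT_{\alpha^*}(\lambda\cI+\cT_{\alpha^*})^{-2})\lesssim\lambda^{-1-p}$, into one governed by the effective dimension $\mathcal{N}(\lambda)=\mathrm{Tr}(\cT_{\alpha^*}(\lambda\cI+\cT_{\alpha^*})^{-1})\lesssim\lambda^{-p}$ under $\mu_{\alpha^*,j}\lesssim j^{-1/p}$; this is precisely the extra factor $\lambda$ that upgrades the exponent from $N^{-2\theta/(1+2\theta+p)}$ to $N^{-(1+2\theta)/(1+2\theta+p)}$. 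The distributed average is treated as before: since the local estimators are identically distributed, averaging does not reduce their common bias, so one needs each local bias to match the centralized reference, which holds once $\cT_{\alpha^*,\bx_j}$ concentrates on $\cT_{\alpha^*}$ at local sample size $N/M$, while the noise-induced variance of the average stays at order $\sigma^2\mathcal{N}(\lambda)/N$; together these force $M\le o(N^{(1+2\theta-p)/(1+2\theta+p)}\log^{-1}N)$.

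I would then transcribe, now with the weight attached, the two technical ingredients that drive Theorem \ref{theorem: upper bound2}. The first is the discretization control: the empirical operators $\cT_{\alpha^*,\bx_j}$ and $\cG_{\alpha^*,\bx_j}^{*}$ replace exact $\sL^2$ inner products by the Riemann sums $\sum_{k}(r_{k+1}-r_k)\,\cdot\,$, and the ensuing gap is bounded through Assumption \ref{assumption: sampling scheme}, the membership $X\in\sW^{\alpha^*,2}(\sT)$, and the embedding \eqref{equation: continuous embedding condition}; this is where the requirement $1/m\le o(N^{-(2+2\theta)/((2\alpha^*-1)(1+2\theta+p))}\log^{-2/(2\alpha^*-1)}N)$ enters. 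The second is the operator concentration: under the Gaussian Assumption \ref{assumption: X2} all linear functionals of $X$ have finite moments of every order, so $\|(\lambda\cI+\cT_{\alpha^*})^{-1/2}(\cT_{\alpha^*}-\cT_{\alpha^*,\bx_j})(\lambda\cI+\cT_{\alpha^*})^{-1/2}\|$ concentrates, yielding the resolvent comparison $\|(\lambda\cI+\cT_{\alpha^*})^{1/2}(\lambda\cI+\cT_{\alpha^*,\bx_j})^{-1/2}\|=O(1)$ needed to move between population and empirical operators inside the expectation. Collecting the weighted bias $\lesssim\lambda^{2\theta+1}$, the weighted variance $\lesssim\sigma^2\mathcal{N}(\lambda)/N$, and the discretization remainder, then substituting $\lambda=N^{-1/(1+2\theta+p)}$, yields \eqref{equation: upper bound4}.

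The step I expect to be the main obstacle is propagating the deterministic weight $(\lambda\cI+\cT_{\alpha^*})^{1/2}$ through the expectation in the presence of the data-dependent filtered operators $\Psi_\lambda^{TR}(\cT_{\alpha^*,\bx_j})$ and the empirical adjoints $\cG_{\alpha^*,\bx_j}^{*}$: one must commute this weight past the empirical resolvent and past the noise and discretization remainders and show that the resulting weighted cross terms remain lower order, which is exactly what the arbitrary-order-moment concentration estimates are for and the place where the weighting could otherwise erase the gain. A secondary point requiring care is the qualification bookkeeping, since the shifted residual estimate must be applied within $\nu_\Psi=1$ for the Tikhonov filter, so that the $\lambda^{\theta+1/2}$ bias factor and the choice of $\lambda$ stay compatible with Definition \ref{definition: filter functions}(2).
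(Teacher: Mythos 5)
Your reduction is exactly the one the paper uses: the paper justifies this corollary only by recording $\EE\left[\mathcal{R}(\overline{\beta}_{S,\alpha^*,\Psi^{TR}_\lambda})-\mathcal{R}(\beta_0)\right]=\left\|\cL_\cC^{1/2}(\overline{\beta}_{S,\alpha^*,\Psi^{TR}_\lambda}-\beta_0)\right\|_{\sL^2}^2$ together with $\left\|\cL_\cC^{1/2}\cL_{\cK_{\alpha^*}}^{1/2}(\lambda\cI+\cT_{\alpha^*})^{-1/2}\right\|^2\le1$, and then invoking ``the same arguments as in Theorem \ref{theorem: upper bound2}'' for the Tikhonov filter — i.e., rerunning the $\mathcal{F}_1$--$\mathcal{F}_4$ decomposition with the weight $(\lambda\cI+\cT_{\alpha^*})^{1/2}$ attached, gaining one power of $\lambda$ in the noise term (from $\lambda^{-1}\sN_{\alpha^*}(\lambda)$ to $\sN_{\alpha^*}(\lambda)\lesssim\lambda^{-p}$) and $\lambda^{1/2}$ in the bias. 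Your use of \eqref{equation: inverse norm under event} to commute the weight past the empirical filtered operators, and your reuse of the discretization and Gaussian-moment lemmas (Lemmas \ref{lemma: rates upper bound2}, \ref{lemma: expecatation upper bound 2}, \ref{lemma: probability estimation of U1 2}), all match the intended argument.

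However, the point you set aside as ``qualification bookkeeping'' is a genuine obstruction on part of the stated range, not a formality. For Tikhonov, $1-t\Psi^{TR}_\lambda(t)=\lambda/(\lambda+t)$, so the weighted bias multiplier is $\sup_{t}\,(\lambda+t)^{1/2}\frac{\lambda}{\lambda+t}\,t^{\theta}=\sup_t\,\lambda\, t^{\theta}(\lambda+t)^{-1/2}$. This is $\lesssim\lambda^{\theta+1/2}$ only when $\theta\le 1/2$; for $\theta>1/2$ the function is increasing in $t$, so on the relevant spectral interval $\left[0,\tfrac32\rho_{\alpha^*}+\tfrac12\right]$ the supremum is of order $\lambda$ — the classical saturation of Tikhonov at total order $\nu_\Psi=1$. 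The squared weighted bias is then $\asymp\lambda^{2}$, and with $\lambda=N^{-1/(1+2\theta+p)}$ one has $\lambda^{2}=N^{-2/(1+2\theta+p)}\gg N^{-(1+2\theta)/(1+2\theta+p)}$ whenever $\theta>1/2$. Averaging over $M$ machines cannot repair this, since it reduces variance but leaves the common bias intact (as your own Lemma-\ref{lemma: rates upper bound1}-type decomposition shows). Consequently your argument proves \eqref{equation: upper bound4} only for $0\le\theta\le1/2$; on $(1/2,1]$ one needs a filter of qualification at least $\theta+1/2$, e.g., iterated Tikhonov with $s\ge2$ from Example \ref{example: filter function2}. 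To be fair, the paper's own one-line justification inherits exactly the same defect for $\theta\in(1/2,1]$, so your proposal faithfully reproduces the paper's route, including its weak spot — but a complete proof of the corollary as stated would have to either restrict $\theta$ or change the filter.
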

Using the same arguments as in Theorem \ref{theorem: lower bound}, we can establish the following lower bound for the excess prediction risk. Similar results have been established by Theorem 4.4 of \cite{gupta2025optimal}.
\begin{corollary}\label{corollary: lower bound}
		Suppose that Assumption \ref{assumption: regularity condition} is satisfied with $0\leq \theta \leq 1$, Assumption \ref{assumption: epsilon2} is  satisfied and the eigenvalues $\{\mu_{\alpha^*,j}\}_{j=1}^\infty$ satisfy $\mu_{\alpha^*,j}\asymp j^{-1/p}$ for some $0<p \leq1$. Then there holds
	\begin{equation}\label{equation: lower bound2}
		\lim_{\gamma \to 0}\mathop{\lim\inf}_{N\to \infty} \inf_{\hat{\beta}_{S}} \sup_{\beta_0} \mathbb{P} \left\{\mathcal{R}\left(\hat{\beta}_S\right)-\mathcal{R}\left(\beta_0\right)\geq \gamma N^{-\frac{1+2\theta}{1+2\theta+p}}\right\} = 1,
	\end{equation} where the supremum is taken over all $\beta_0 \in \sW^{{\alpha^*},2}(\sT)$ satisfying $\beta_0 \in \mathrm{ran}\cL_{\cK_{{\alpha^*}}}^{1/2}\cT_{{\alpha^*}}^{\theta}$  and the infimum is taken over all possible predictors $\hat{\beta}_S \in \sW^{{\alpha^*},2}(\sT)$ based on the fully observed sample set $\{(X_i,Y_i)\}_{i=1}^N$.
\end{corollary}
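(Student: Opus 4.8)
The plan is to mirror the proof of Theorem \ref{theorem: lower bound}, replacing the Sobolev separation metric by the prediction ($\cL_\cC^{1/2}$) metric, and to exploit the fact that for Gaussian noise the Kullback--Leibler divergence between two instances of model \eqref{LFRmodel} is itself an excess prediction risk. Concretely, I would restrict the supremum to a finite sub-family of slope functions built inside a Gaussian-noise instance of the model (which is permissible since Assumption \ref{assumption: epsilon2} is met by $\epsilon\sim N(0,\sigma^2)$, and a lower bound against the best estimator over any hard instance is a lower bound for the whole class), and then invoke Fano's inequality. The key structural observation, which also explains why the exponent improves from $2\theta$ to $1+2\theta$ relative to the estimation-error bound, is that here the loss metric and the information metric coincide.

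For the construction I would diagonalize in the basis $\{\phi_{\alpha^*,j}\}_{j\geq 1}$ of the singular value decomposition \eqref{singular value decomposition}. Writing $\beta_0=\cL_{\cK_{\alpha^*}}^{1/2}f_0$ with $f_0=\sum_{j}c_j\phi_{\alpha^*,j}$, the two relevant functionals become diagonal: the source norm is $\sum_j c_j^2\mu_{\alpha^*,j}^{-2\theta}$ (so $\beta_0\in\mathrm{ran}\,\cL_{\cK_{\alpha^*}}^{1/2}\cT_{\alpha^*}^\theta$ iff this is finite), while, using the expansion of $\cL_\cC^{1/2}\cL_{\cK_{\alpha^*}}^{1/2}$ in \eqref{singular value decomposition}, the excess prediction risk of a difference equals $\|\cL_\cC^{1/2}\cL_{\cK_{\alpha^*}}^{1/2}(f_0-f_0')\|_{\sL^2}^2=\sum_j\mu_{\alpha^*,j}(c_j-c_j')^2$. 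I would then fix a resolution $D$ and perturb only the coefficients indexed by the block $B_D=\{D+1,\dots,2D\}$, on which $\mu_{\alpha^*,j}\asymp D^{-1/p}$ by the two-sided decay assumption, taking $c_j=\rho\,\omega_j$ with $\omega\in\{0,1\}^{D}$ and magnitude $\rho\asymp D^{-1/2-\theta/p}$ chosen so that the source norm $\rho^2\sum_{j\in B_D}\mu_{\alpha^*,j}^{-2\theta}\asymp\rho^2 D^{1+2\theta/p}$ stays bounded, placing every hypothesis uniformly inside the prescribed regularity ball. A Varshamov--Gilbert packing yields $K\geq 2^{cD}$ binary strings with pairwise Hamming distance $\gtrsim D$, hence $\log K\asymp D$.

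With this packing the prediction separation between distinct hypotheses is $\gtrsim D^{-1/p}\rho^2 D=D^{1-1/p}\rho^2\asymp D^{-(1+2\theta)/p}$, and the KL divergence between the corresponding $N$-sample laws is \emph{exactly} $\tfrac{N}{2\sigma^2}\sum_j\mu_{\alpha^*,j}(c_j-c_j')^2$, i.e. $\tfrac{N}{2\sigma^2}$ times the very same quantity, so $\mathrm{KL}\asymp N\sigma^{-2}D^{-(1+2\theta)/p}$. Choosing $D\asymp N^{p/(1+2\theta+p)}$ makes $\mathrm{KL}\lesssim\alpha\log K$ hold for a sufficiently small constant $\alpha$, after which Fano's inequality forces any estimator based on $\{(X_i,Y_i)\}_{i=1}^N$ to incur excess prediction risk $\gtrsim D^{-(1+2\theta)/p}\asymp N^{-(1+2\theta)/(1+2\theta+p)}$ on some hypothesis with probability bounded away from $0$; sending $N\to\infty$ and then $\gamma\to0$ upgrades this to \eqref{equation: lower bound2}.

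The main point to watch, rather than a serious obstacle, is that the prediction separation and the KL divergence are governed by the identical quantity $\sum_j\mu_{\alpha^*,j}(c_j-c_j')^2$, so one cannot simultaneously make the separation large and the divergence small at a \emph{fixed} scale. The resolution is that $\log K\asymp D$ grows in $D$ while the per-pair quantity $\asymp D^{-(1+2\theta)/p}$ decays, so the ratio $\mathrm{KL}/\log K\asymp N D^{-(1+2\theta+p)/p}$ is driven below $\alpha$ at exactly $D\asymp N^{p/(1+2\theta+p)}$, where the separation is precisely the target rate. This balancing works because the weights $\mu_{\alpha^*,j}$ are comparable throughout $B_D$, so the Hamming separation $\gtrsim D$ survives intact under the prediction norm; the Gaussian-noise reduction and the sharp two-sided decay $\mu_{\alpha^*,j}\asymp j^{-1/p}$ are exactly what render both the separation and the divergence tight, matching the upper bound of Corollary \ref{corollary: upper bound2}.
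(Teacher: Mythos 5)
Your proposal is correct and takes essentially the same route as the paper, which proves this corollary by rerunning the proof of Theorem \ref{theorem: lower bound} with the Sobolev separation $\left\|\cT_{\alpha^*}^{\theta}(g_{i_1}-g_{i_2})\right\|_{\sL^2}^2$ replaced by the prediction separation $\left\|\cT_{\alpha^*}^{1/2+\theta}(g_{i_1}-g_{i_2})\right\|_{\sL^2}^2$: the same Gaussian-noise reduction, the same block construction on indices $\{J+1,\dots,2J\}$ (your $B_D$ with $\rho\asymp D^{-1/2-\theta/p}$ is exactly the paper's $\mu_{\alpha^*,j}^{\theta}/\sqrt{J}$ on that block), the same Varshamov--Gilbert packing and Fano argument, and the same tuning $J\asymp N^{p/(1+2\theta+p)}$. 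Your remark that the loss metric and the KL (information) metric coincide here, which is why the exponent improves from $2\theta$ to $1+2\theta$, is precisely the mechanism visible in the paper's KL computation.
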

The upper bound of \eqref{equation: upper bound4} attains the rate of lower bound given by \eqref{equation: lower bound2} and is thus optimal. Proposition \ref{propositon: upper bound1} and Corollary \ref{corollary: upper bound2} are compared as follows: First, \eqref{equation: upper bound4} in Corollary \ref{corollary: lower bound} establishes upper bounds for distributed estimators under the most common noise assumption that the noise has zero mean and bounded variance, whereas \eqref{equation: upper bound3} in Proposition \ref{propositon: upper bound1} establishes upper bounds for non-distributed estimators under a stricter assumption that the noise is a Gaussian random variable. Second, \eqref{equation: upper bound4} in Corollary \ref{corollary: upper bound2} establishes upper bounds under various regularity conditions of $\beta_0$ characterized by Assumption \ref{assumption: regularity condition}, whereas \eqref{equation: upper bound3} in Proposition \ref{propositon: upper bound1} only provides upper bounds under condition $\beta_0\in \sW^{\alpha^*,2}(\sT)$ which corresponds to Assumption \ref{assumption: regularity condition} with $\theta=0$. Third, under the condition $\beta_0\in \sW^{\alpha^*,2}(\sT)$, the upper bound of \eqref{equation: upper bound3} in Proposition \ref{propositon: upper bound1} achieves the optimal convergence rates up to a logarithmic factor, provided that $m\geq N^{\frac{2}{(2{{\alpha^*}}-1)(1+p)}}$, whereas the upper bound of \eqref{equation: upper bound4} in Corollary \ref{corollary: upper bound2} achieves the optimal convergence rates with a slightly stricter requirement that $1/m\leq o\left(N^{-\frac{2}{(2{{\alpha^*}}-1)(1+p)}}\log^{-\frac{2}{2{{\alpha^*}}-1}}N\right)$.

\subsection{Further Discussion}
There are several directions for future research.

\noindent\textbf{ 1. Extending our results to the functional response case.}

 As we assert in Section \ref{subsection: comparisons}, our results can be extended to this setting. 

In particular, the functional response regression model can be expressed as
\begin{align*}
	Y(t) = \int_{\sT} \beta_0(s, t) X(s) ds + \epsilon(t), \quad\mbox{ for all $t\in \sT$ },
\end{align*}
where $X\in \sL^2(\sT)$ is the functional covariate, $Y\in \sL^2(\sT)$ is the functional response, $\beta_0 \in \sL^2(\sT\times \sT)$ is the target function, $\epsilon\in \sL^2(\sT)$ is the random noise independent of $X$ satisfying $\EE\left[\epsilon\right] = 0$ and $\EE\left[\left\|\epsilon\right\|_{\sL^2(\sT)}^2\right]<\infty$.

The training sample set is given by
\begin{align*}
	S:= \Big\{X_i(s_j), Y_i(t_k)\Big\}_{i=1, j=1, k=1}^{N, m+1, m+1},
\end{align*}
where $\left\{X_i,Y_i\right\}_{i=1}^N$ are $N$ independent copies of $(X,Y)$, the functional covariates $\left\{X_i\right\}_{i=1}^N$ and the functional outputs $\left\{Y_i\right\}_{i=1}^N$ are respectively observed at the discrete points $\left\{s_j\right\}_{j=1}^{m+1}\in \sT$ and $\left\{t_k\right\}_{k=1}^{m+1}\in \sT$, where $m\geq 1$ is an integer.

Following a similar approach as in our paper, we can define the spectral regularization estimators $\hat{\beta}_{S,\Psi_{\lambda}}$ for the functional outputs case. For brevity, we omit the explicit expression of the spectral regularization estimators $\hat{\beta}_{S,\Psi_{\lambda}}$.  The key requirement for ensuring that $\hat{\beta}_{S,\Psi_{\lambda}}$ approximates the target function $\beta_0$ is that the Riemann sums \begin{align*}
	\mbox{$\sum_{j=1}^{m}\left(s_{j+1}-s_j\right)X_i(s_j)\beta_0(s_j,t)$ and $\sum_{j=1}^{m}\sum_{k=1}^{m}\left(s_{j+1}-s_j\right)\left(t_{k+1}-t_k\right)X_i(s_j)Y_i(t_k)$}
\end{align*} provide good approximations of the integrals \begin{align*}
	\mbox{$\int_{\sT}X_i(s)\beta_0(s,t)ds$ \quad and \quad $\int_{\sT}\int_{\sT}X_i(s)Y_i(t)dsdt$,}
\end{align*} respectively. This requires certain smoothness conditions on $\beta_0$, $X$ and $\epsilon$, as well as appropriate constraints on the sampling points $\left\{s_j\right\}_{j=1}^{m+1}$ and $\left\{t_k\right\}_{k=1}^{m+1}$. For instance, we can introduce the following conditions:
\begin{enumerate}
	\item [(\romannumeral1)]\textbf{ Smoothness of $\beta_0$:} \begin{align*}
		\mbox{$\beta_0 \in \sW^{\alpha}(\sT)\otimes \sW^{\alpha}(\sT)$ for some $\alpha>1/2$ and $g_0\in \sL^2(\sT)\otimes \sL^2(\sT)$.}
	\end{align*}
	\item [(\romannumeral2)] \textbf{Moment and regularity conditions on 
		$X$:} There exists a constant $\kappa>0$, such that
	\begin{align*}
		\EE\left[\left\langle X,f\right\rangle^4_{\sL^2(\sT)}\right]\leq \kappa^2\left[\EE\left\langle X,f
		\right\rangle_{\cL^2(\sT)}^2\right]^2,\quad \mbox{ for any $f\in \sL^2(\sT)$}.
	\end{align*} 
	Moreover, there exists a constant $\rho>0$ such that
	\begin{align*}
		\EE\left[\left\|X\right\|_{\sW^{\alpha}(\sT)}^2\right] \leq \rho^2, \quad\mbox{ for some $\alpha>1/2$}.
	\end{align*}
	\item [(\romannumeral3)] \textbf{Noise regularity:} The random noise $\epsilon$ is independent of $X$, and satisfies \begin{align*}
		\mbox{$\EE[\epsilon]=0$ and $\EE\left[\left\|\epsilon\right\|_{\sW^{\alpha}(\sT)}^2\right]\leq \sigma^2$ for some $\alpha>1/2$.}
	\end{align*}
	\item [(\romannumeral4)] \textbf{Sampling scheme:} The discrete sample points $\left\{s_j\right\}_{j=1}^{m+1}$ and $\left\{t_k\right\}_{k=1}^{m+1}$ satisfy 
	\begin{align*}
		\mbox{$s_1<\cdots<s_{m+1}$ ,$t_1<\cdots<t_{m+1}$, $s_1=t_1 =0$ and $s_{m+1}=t_{m+1}=1$,}
	\end{align*}
	for some integer $m\geq 1$. Additionally, there exists a constant $C_d$ such that \begin{align*}
		\mbox{$t_{k+1}-t_{k}\leq \frac{C_d}{m}$ for any $1\leq k\leq m$, and $s_{j+1}-s_{j}\leq \frac{C_d}{m}$ for any $1\leq j\leq m$.}
	\end{align*}
\end{enumerate}
Under these assumptions, the approximation properties of the spectral regularization estimators in the functional output setting can be effectively analyzed using the techniques developed in our work.

\noindent  \textbf{2. Extending our results to accommodate more general source and capacity assumptions, for example, in \cite{bauer2007regularization}.}

If we define a new source condition as $\beta_0 = \cL_{\cK_{\alpha^*}}^{1/2}\phi(T_{\alpha^{*}})g_0 $ for some operator monotone index function $\phi$ and $g_0\in \sL^2(\sT)$, following the condition (11) in \cite{bauer2007regularization}, then under some mild assumptions on the index function $\phi$ ( e.g., $\left\|T_{\alpha^*}^{-\theta}\phi(T_{\alpha^{*}})\right\|\leq R$ for some $0<\theta\leq 1$ and $R>0$), our analysis can be extended to this setting, yielding similar results.
 
\noindent \textbf{3. Extending our results to the higher-dimensional case.}

For higher-dimensional case, we assume that the functional covariates $X_i$ and the target function $\beta_0$ take values in a bounded domain $\mathcal{X}\in \mathbb{R}^d$ for some integer $d\geq 2$. The functional covariates $X_i$ are observed on discrete sample points $\left\{r_k\right\}_{k=1}^m \in \mathcal{X}$. The primary challenge is to construct  approximate quadrature weights $\left\{\Omega_{k}^\bx\right\}_{k=1}^m$ based on the sample set \begin{align*}
	\bx = \left\{\left(X_i(r_1),\cdots, X_i(r_m)\right) \right\}_{i=1}^N,
\end{align*} to ensure that the Riemann sums $\sum_{k=1}^{m}\Omega_{k}^{\bx}X_i(r_k)\beta_0(r_k)$ provide good approximations of the integral $\int_{\mathcal{X}}X(s)\beta_0(s)ds$. This requires a more detailed analysis of the sampling scheme in higher-dimensional spaces and the construction of appropriate quadrature weights for Riemann summation. Addressing this issue necessitates further investigation, as it cannot be directly handled using the techniques developed in this paper. We have studied such issues and are preparing a paper for future publication.

\noindent \textbf{4. Extending our results to the polynomial regression case.}

Linear regression is  is a particular case of polynomial regression. Recently, the authors of \cite{holzleitner2024regularized} analyzed the polynomial regression in the functional data setting and established meaningful results. 

In this setting, the polynomial regression model of order $p\geq 1$ can be expressed as
\begin{align*}
	Y = \sum_{\ell =1 }^{p}\int_{\sT^{\ell}} \beta_{\ell}(s_1,\cdots,s_{\ell})\prod_{j=1}^{\ell}X(s_{\ell})d(s_{\ell}) + \epsilon,
\end{align*}
where $\beta_{\ell} \in \underbrace{\sL^2(\sT)\otimes\cdots\otimes \sL^2(\sT)}_{\ell-times}$, $X\in \sL^2(\sT)$ is the functional covariate, $Y\in \mathbb{R}$ is the scalar response and $\epsilon$ is the random noise independent of $X$. The training sample set is given by
\[S:=\left\{\left(X_i(r_1),X_i(r_2),\cdots,X_i(r_m),X_i(r_{m+1}),Y_i\right)\right\}_{i=1}^N,\] where $\left\{(X_i,Y_i)\right\}_{i=1}^N$ are $N$ independent copies of the random variable $(X,Y)$, and functional covariates $\left\{X_i\right\}_{i=1}^N$ are observed at discrete points $\left\{r_k\right\}_{k=1}^{m+1}$, with $m\geq1$ and $0\leq r_1<\cdots<r_m<r_{m+1}\leq 1$.

Based on the approach presented in our work and that of \cite{holzleitner2024regularized}, one can give the spectral regularization estimators $\hat{\beta}_{S,\Psi_{\lambda}}$. The key requirement for ensuring that $\hat{\beta}_{S,\Psi_{\lambda}}$ approximates $(\beta_1,\cdots,\beta_p)$ is that the Riemann sums
\begin{align*}
	\sum_{k_1=1}^{m}\cdots\sum_{k_\ell=1}^{m}\left(r_{k_1+1} - r_{k_1}\right)X_i(r_{k_1})\cdots\left(r_{k_\ell+1} - r_{k_\ell}\right)X_i(r_{k_\ell})\beta_{\ell}(r_{k_1},\cdots,r_{k_j})
\end{align*}
provide good approximations of the integrals 
\begin{align*}
	\int_{\sT^{\ell}} \beta_{\ell}(s_1,\cdots,s_{\ell})\prod_{j=1}^{\ell}X_i(s_{\ell})d(s_{\ell}).
\end{align*}
This requires appropriate smoothness conditions on $\beta_{\ell}$ and $X_i$, as well as suitable constraints on the sampling points $\left\{r_k\right\}_{k=1}^{m+1}$. If we impose Assumption 2 (noise condition), Assumption 3 (sampling scheme), Assumption 4 (regularity condition of functional covariate \uppercase\expandafter{\romannumeral1}) and Assumption 5 (regularity condition of functional covariate \uppercase\expandafter{\romannumeral2}) from our work, along with the following smoothness condition on $(\beta_1,\cdots,\beta_p)$:
\begin{align*}
	\beta_\ell\in \underbrace{\sW^{\alpha,2}(\sT)\otimes\cdots\otimes \sW^{\alpha,2}(\sT)}_{\ell-times}, \quad\mbox{ for some $\alpha>1/2$ and any $1\leq \ell \leq p$},
\end{align*}
then we believe that our results can be extended to the polynomial regression setting.

\section{Convergence Analysis}\label{section: convergence analysis}

In this section, we first derive the upper bounds presented in Theorem \ref{theorem: upper bound1} and \ref{theorem: upper bound2}. Then we establish the mini-max lower bound in Theorem \ref{theorem: lower bound}.

\subsection{Deriving Upper Bounds}\label{subsection: upper bounds}

For any $j=1,2,\dots,M$, define the event \[\sU_j = \left\{\bx_j: \left\|\left(\lambda \cI + \cT_{{\alpha^*}}\right)^{-1/2}\left(\cT_{{\alpha^*},\bx_j}- \cT_{{\alpha^*}}\right)\left(\lambda \cI + \cT_{{\alpha^*}}\right)^{-1/2}\right\| \geq 1/2\right\},\]
and denote its complement by $\sU_j^c$. Let $\sU = \cup_{j=1}^M\sU_j$ be the union of the above events. Then the complement of $\sU$ is given by $\sU^c=\cap_{j=1}^{M} \sU^c_j$. Hereafter,  let $\II_\mathcal{E}$ denote the indicator function of the event $\mathcal{E}$ and $\PP(\mathcal{E})=\mathbb{E} \left[\II_\mathcal{E}\right]$.  We first give the following estimation
\begin{equation}\label{equation: inverse norm under event}
	\begin{split}
		&\left\|(\lambda \cI + \cT_{{\alpha^*}})^{1/2}(\lambda \cI + \cT_{{\alpha^*},\bx_j})^{-1}(\lambda \cI + \cT_{{\alpha^*}})^{1/2}\right\|\II_{\sU_j^c}\\
		=& \left\|\left(\cI- (\lambda \cI + \cT_{{\alpha^*}})^{-1/2}(\cT_{{\alpha^*}}-\cT_{{\alpha^*},\bx_j})(\lambda \cI + \cT_{{\alpha^*}})^{-1/2}\right)^{-1}\right\|\II_{\sU_j^c}\\
		\overset{(*)}{\leq}& 1+\sum_{j=1}^{\infty}\left\|(\lambda \cI + \cT_{{\alpha^*}})^{-1/2}(\cT_{{\alpha^*}}-\cT_{{\alpha^*},\bx_j})(\lambda \cI + \cT_{{\alpha^*}})^{-1/2}\right\|^{k}\II_{\sU_j^c}\leq 1+\sum_{j=1}^{\infty} \frac{1}{2^j}=2,
	\end{split}
\end{equation}  where inequality $(*)$ follows from expanding the inverse operator in Neumann series.

The following lemma provides an upper bound for the expectation of the estimation error $\left\|\overline{\beta}_{S,\alpha^*,\Psi_\lambda}-\beta_0\right\|_{\sW^{\alpha^*,2}}^2$ on the event $\sU^c$.
\begin{lemma}\label{lemma: rates upper bound1}
	Suppose that Assumption \ref{assumption: regularity condition} is satisfied. Then for any partition number $M\geq 1$, there holds
	\begin{equation}\label{equation: rates upper bound1 1} 
		\begin{aligned}
			&\mathbb{E}\left[\left\|\overline{\beta}_{S,\alpha^*,\Psi_\lambda}-\beta_0\right\|_{\sW^{\alpha^*,2}}^2\II_{\sU^c}\right]\\
			\leq& \frac{1}{M}\mathbb{E}\left[\left\|\hat{f}_{S_1,\alpha^*,\Psi_\lambda}-f_0\right\|^2_{\sL^2}\II_{\sU_1^c}\right]+ \left\|\mathbb{E}\left[\left(\hat{f}_{S_1,\alpha^*,\Psi_\lambda}-f_0\right)\II_{\sU_1^c}\right]\right\|^2_{\sL^2},
		\end{aligned}
	\end{equation}
where $f_0\in \sL^2(\sT)$ is given by Assumption \ref{assumption: regularity condition}. 
\end{lemma}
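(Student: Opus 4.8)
The plan is to transfer the estimate from the Sobolev norm to the $\sL^2$ norm via the isometric isomorphism of $\cL_{\cK_{\alpha^*}}^{1/2}$, and then to run the standard bias--variance decomposition for an averaged estimator, exploiting the mutual independence of the local quantities across the disjoint blocks. Since $\beta_0=\cL_{\cK_{\alpha^*}}^{1/2}f_0$ by Assumption \ref{assumption: regularity condition} and $\overline{\beta}_{S,\alpha^*,\Psi_\lambda}=\cL_{\cK_{\alpha^*}}^{1/2}\overline{f}_{S,\alpha^*,\Psi_\lambda}$ by \eqref{fhataveraged}, property \eqref{equation: norm and inner product relationship} applied to $\overline{f}_{S,\alpha^*,\Psi_\lambda}-f_0$ yields $\|\overline{\beta}_{S,\alpha^*,\Psi_\lambda}-\beta_0\|_{\sW^{\alpha^*,2}}^2=\|\overline{f}_{S,\alpha^*,\Psi_\lambda}-f_0\|_{\sL^2}^2$; here I would note that $\overline{\sH_{\cK_{\alpha^*}}}=\sL^2(\sT)$, since $\sW^{\alpha^*,2}(\sT)$ is dense in $\sL^2(\sT)$, so the isometry is available for the arbitrary $\sL^2$-element $\overline{f}_{S,\alpha^*,\Psi_\lambda}-f_0$. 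It then suffices to bound $\mathbb{E}\big[\|\overline{f}_{S,\alpha^*,\Psi_\lambda}-f_0\|_{\sL^2}^2\II_{\sU^c}\big]$.

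Next I would introduce the truncated local errors $\eta_j:=(\hat{f}_{S_j,\alpha^*,\Psi_\lambda}-f_0)\II_{\sU_j^c}$. Because $\sU^c=\cap_{j=1}^M\sU_j^c\subseteq\sU_j^c$, on the event $\sU^c$ every indicator $\II_{\sU_j^c}$ equals $1$, so $(\overline{f}_{S,\alpha^*,\Psi_\lambda}-f_0)\II_{\sU^c}=\big(\frac{1}{M}\sum_{j=1}^M\eta_j\big)\II_{\sU^c}$. Bounding $\II_{\sU^c}\le 1$ and expanding the squared $\sL^2$-norm then gives $\mathbb{E}\big[\|\overline{f}_{S,\alpha^*,\Psi_\lambda}-f_0\|_{\sL^2}^2\II_{\sU^c}\big]\le \frac{1}{M^2}\sum_{j=1}^M\mathbb{E}\big[\|\eta_j\|_{\sL^2}^2\big]+\frac{1}{M^2}\sum_{j\neq k}\mathbb{E}\big[\langle\eta_j,\eta_k\rangle_{\sL^2}\big]$, separating the diagonal from the cross terms.

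The conceptual heart is that each $\eta_j$ is a measurable function of the block $S_j$ alone: both $\hat{f}_{S_j,\alpha^*,\Psi_\lambda}$ and the event $\sU_j$ depend only on $\bx_j$ and $\by_j$. Since the blocks are disjoint and the samples i.i.d.\ with $|S_j|=N/M$, the $\eta_j$ are independent and identically distributed (with finite mean, guaranteed by the truncation so that the Bochner integrals exist). Hence for $j\neq k$ the cross term collapses to $\mathbb{E}[\langle\eta_j,\eta_k\rangle_{\sL^2}]=\langle\mathbb{E}[\eta_1],\mathbb{E}[\eta_1]\rangle_{\sL^2}=\|\mathbb{E}[\eta_1]\|_{\sL^2}^2$, while $\mathbb{E}[\|\eta_j\|_{\sL^2}^2]=\mathbb{E}[\|\eta_1\|_{\sL^2}^2]$. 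Substituting the $M$ diagonal and $M(M-1)$ off-diagonal contributions and using $\frac{M-1}{M}\le 1$ produces $\frac{1}{M}\mathbb{E}[\|\eta_1\|_{\sL^2}^2]+\frac{M-1}{M}\|\mathbb{E}[\eta_1]\|_{\sL^2}^2\le\frac{1}{M}\mathbb{E}[\|\eta_1\|_{\sL^2}^2]+\|\mathbb{E}[\eta_1]\|_{\sL^2}^2$, which, upon unwinding $\eta_1=(\hat{f}_{S_1,\alpha^*,\Psi_\lambda}-f_0)\II_{\sU_1^c}$, is exactly \eqref{equation: rates upper bound1 1}.

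I do not expect a deep obstacle here; the difficulty is entirely one of careful bookkeeping. The two points requiring attention are the justification that the isometry \eqref{equation: norm and inner product relationship} applies to $\overline{f}_{S,\alpha^*,\Psi_\lambda}-f_0$, and the discipline of inserting the \emph{per-block} indicators $\II_{\sU_j^c}$ (legitimate because $\sU^c\subseteq\sU_j^c$) \emph{before} discarding the global indicator $\II_{\sU^c}$. This ordering is essential: it is precisely what turns the diagonal term into the truncated second moment $\mathbb{E}[\|\hat{f}_{S_1,\alpha^*,\Psi_\lambda}-f_0\|_{\sL^2}^2\II_{\sU_1^c}]$, which remains controllable even though the untruncated moment may be infinite when $\cT_{\alpha^*,\bx_j}$ is unbounded (as in the Gaussian covariate case motivating property (3) of Definition \ref{definition: filter functions}). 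The independence-driven reduction of the cross terms to the squared bias $\|\mathbb{E}[\eta_1]\|_{\sL^2}^2$ is what exhibits the $1/M$ variance gain from averaging over the $M$ blocks.
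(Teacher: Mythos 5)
Your proposal is correct and follows essentially the same route as the paper's proof: reduce to the $\sL^2$ norm via the isometry \eqref{equation: norm and inner product relationship}, expand the average of the per-block truncated errors $\eta_j=(\hat{f}_{S_j,\alpha^*,\Psi_\lambda}-f_0)\II_{\sU_j^c}$, and use independence and identical distribution of the blocks to collapse the cross terms to $\|\mathbb{E}[\eta_1]\|_{\sL^2}^2$ with the factor $\frac{M-1}{M}\leq 1$. The only (immaterial) difference is bookkeeping: you insert the per-block indicators via $\sU^c\subseteq\sU_j^c$ and then drop $\II_{\sU^c}\leq 1$ before expanding, whereas the paper keeps $\II_{\sU^c}=\prod_j\II_{\sU_j^c}$, factorizes by independence, and bounds the resulting probabilities $\PP\bigl(\cap_{j\geq 2}\sU_j^c\bigr)$ by $1$.
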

\begin{proof}
	Recall that $\overline{\beta}_{S,{\alpha^*},\Psi_\lambda}=\cL^{1/2}_{\cK_{\alpha^*}}\overline{f}_{S,\alpha^*,\Psi_\lambda}$ with $
		\overline{f}_{S,\alpha^*,\Psi_\lambda}=\frac{1}{M}\sum_{j=1}^M \hat{f}_{S_j,\alpha^*,\Psi_\lambda}.$ Then under Assumption \ref{assumption: regularity condition} and following from the isometric isomorphism property \eqref{equation: norm and inner product relationship}, we write
	\begin{align*}
		\mathbb{E}\left[\left\|\overline{\beta}_{S,\alpha^*,\Psi_\lambda}-\beta_0\right\|_{\sW^{\alpha^*,2}}^2\II_{\sU^c}\right] = \mathbb{E}\left[\left\|\hat{f}_{S,\alpha^*,\Psi_\lambda}-f_0\right\|_{\sL^2}^2\II_{\sU^c}\right].
	\end{align*}	
	When $M\geq 2$, we write
	\begin{align*}
		&\mathbb{E}\left[\left\|\hat{f}_{S,\alpha^*,\Psi_\lambda}-f_0\right\|_{\sL^2}^2\II_{\sU^c}\right]\\
		=& \EE\left[\left\|\frac{1}{M}\sum_{j=1}^{M}\left(\hat{f}_{S_j,\alpha^*,\Psi_\lambda}-f_0\right)\right\|^2_{\sL^2}\II_{\sU^c}\right]\\
		\overset{(\romannumeral1)}{=}&\frac{1}{M^2}\sum_{j=1}^{M}\EE\left[\left\|\hat{f}_{S_j,\alpha^*,\Psi_\lambda}-f_0\right\|^2_{\sL^2}\II_{\sU^c}\right]+ \frac{1}{M^2}\sum_{j\neq k} \EE\left[\left\langle \hat{f}_{S_j,\alpha^*,\Psi_\lambda}-f_0,\hat{f}_{S_k,\alpha^*,\Psi_\lambda}-f_0\right\rangle_{\sL^2} \II_{\sU^c}\right]\\
		\overset{(\romannumeral2)}{=}& \frac{1}{M}\EE\left[\left\|\hat{f}_{S_1,\alpha^*,\Psi_\lambda}-f_0\right\|^2_{\sL^2}\II_{\sU_1^c}\right]\PP(\cap_{j=2}^M\sU_j^c)\\
		&\quad+\frac{M(M-1)}{M^2}\EE\left[\left\langle \hat{f}_{S_1,\alpha^*,\Psi_\lambda}-f_0,\hat{f}_{S_2,\alpha^*,\Psi_\lambda}-f_0\right\rangle_{\sL^2} \II_{\sU_1^c}\II_{\sU_2^c}\right]\PP(\cap_{j=3}^M \sU_j^c)\\
		\overset{(\romannumeral3)}{\leq}& \frac{1}{M}\mathbb{E}\left[\left\|\hat{f}_{S_1,\alpha^*,\Psi_\lambda}-f_0\right\|^2_{\sL^2}\II_{\sU_1^c}\right]+ \left\|\mathbb{E}\left[\left(\hat{f}_{S_1,\alpha^*,\Psi_\lambda}-f_0\right)\II_{\sU_1^c}\right]\right\|^2_{\sL^2},
	\end{align*}
	where equality (\romannumeral1) follows from the binomial expansion, equality (\romannumeral2) uses the fact that $\II_{\sU^c}=\II_{\sU_1^c}\II_{\sU_2^c}\cdots\II_{\sU_M^c}$ and that for $1\leq j \neq k\leq M$, $(\hat{f}_{S_j,\alpha^*,\Psi_\lambda}-f_0)\II_{\sU_j^c}$ is independent of $(\hat{f}_{S_k,\alpha^*,\Psi_\lambda}-f_0)\II_{\sU_k^c}$, and inequality (\romannumeral3) is from \[\EE\left[\left\langle \hat{f}_{S_1,\alpha^*,\Psi_\lambda}-f_0,\hat{f}_{S_2,\alpha^*,\Psi_\lambda}-f_0\right\rangle_{\sL^2} \II_{\sU_1^c}\II_{\sU_2^c}\right] = \left\|\mathbb{E}\left[\left(\hat{f}_{S_1,\alpha^*,\Psi_\lambda}-f_0\right)\II_{\sU_1^c}\right]\right\|^2_{\sL^2}.\]
	When $M=1$, \eqref{equation: rates upper bound1 1} is obvious.
	
	We have completed the proof of Lemma \ref{lemma: rates upper bound1}.
\end{proof}
In the rest part of the proof, our main task is to estimate the two terms on the right hand side of \eqref{equation: rates upper bound1 1}. For simplicity of notations, we denote
\begin{align}\label{simple notation}
	n:= |S_1|= \frac{N}{M}  \qquad and \qquad \{\left(X_{1,i}(r_1),\cdots, X_{1,i}(r_{m+1}), Y_{1,i}\right)\}_{i=1}^n:= S_1.
\end{align}
Therefore, we write
\begin{align*}
	\cT_{{\alpha^*},\bx_1}=\frac{1}{n}\sum_{i=1}^n \mathcal{S}_{1,i}(\cK_{{\alpha^*}},\bx_1)\otimes \mathcal{S}_{1,i}(\cK_{{\alpha^*}},\bx_1)
\end{align*}
and 
\begin{align*}
	\cG_{{\alpha^*},\bx_1}^*(a) = \sum_{i=1}^n a_i\mathcal{S}_{1,i}(\cK_{{\alpha^*}},\bx_1), \quad \forall a\in \mathbb{R}^n,
\end{align*}
where for any $1\leq i\leq n$, we define $\mathcal{S}_{1,i}(\cK_{{\alpha^*}},\bx_1):= \sum_{k=1}^{m}(r_{k+1}-r_k)\cK_{{\alpha^*}}^{1/2}(\cdot,r_k)X_{1,i}(r_k)$. Then for the first term on the right hand side of \eqref{equation: rates upper bound1 1}, recalling the expressions of $\hat{f}_{S_1,\alpha^*,\Psi_\lambda}$ and using the triangular inequality, we write
\begin{equation}\label{equation: decomposition 1}
	\begin{split}
		&\frac{1}{M}\EE\left[\left\|\hat{f}_{S_1,\alpha^*,\Psi_\lambda}-f_0\right\|_{\sL^2}^2\II_{\sU_1^c}\right]=\frac{1}{M}\EE\left[\left\|\Psi_\lambda\left(\cT_{{\alpha^*},\bx_1}\right)\frac{1}{n}\cG_{{\alpha^*},\bx_1}^*\by_1-f_0\right\|_{\sL^2}^2\II_{\sU_1^c}\right]\\
	=&\frac{1}{M}\EE\left[\left\|\Psi_\lambda\left(\cT_{{\alpha^*},\bx_1}\right)\frac{1}{n}\sum_{i=1}^n\mathcal{S}_{1,i}(\cK_{\alpha^*},\bx_1)\Big(\left\langle X_{1,i},\beta_0\right\rangle_{\sL^2} +\epsilon_{1,i} \Big)-f_0\right\|_{\sL^2}^2\II_{\sU_1^c}\right]\\
	\leq&\frac{2}{M}\EE\left[\left\|\Psi_\lambda\left(\cT_{{\alpha^*},\bx_1}\right)\frac{1}{n}\sum_{i=1}^n\mathcal{S}_{1,i}(\cK_{\alpha^*},\bx_1)\left\langle \cL_{\cK_{{\alpha^*}}}^{1/2}X_{1,i},f_0\right\rangle_{\sL^2} -f_0\right\|_{\sL^2}^2\II_{\sU_1^c}\right]\\
	&+ \frac{2}{M}\EE\left[\left\|\Psi_\lambda\left(\cT_{{\alpha^*},\bx_1}\right)\frac{1}{n}\sum_{i=1}^n\mathcal{S}_{1,i}(\cK_{\alpha^*},\bx_1)\epsilon_{1,i}\right\|_{\sL^2}^2\II_{\sU_1^c}\right]\leq\frac{4}{M}\EE\Big[\mathcal{F}_1\left(\bx_1,\alpha^*,\Psi_\lambda\right)\Big]\\
	&+ \frac{4}{M}\EE\Big[\mathcal{F}_2\left(\bx_1,\alpha^*,\Psi_\lambda\right)\II_{\sU_1^c}\Big]+ \frac{4}{M}\EE\Big[\mathcal{F}_3\left(S_1,\alpha^*,\Psi_\lambda\right)\Big]+\frac{4}{M}\EE\Big[\mathcal{F}_4\left(S_1,\alpha^*,\Psi_\lambda\right)\II_{\sU_1^c}\Big],
	\end{split}
\end{equation}
where we denote $\epsilon_{1,i}:=Y_{1,i}-\langle X_{1,i},\beta_0\rangle_{\sL^2}$ for any $1\leq i\leq n$ and we define
\begin{align*}
	\mathcal{F}_1\left(\bx_1,\alpha^*,\Psi_\lambda\right)&:= \left\|\Psi_\lambda\left(\cT_{{\alpha^*},\bx_1}\right)\frac{1}{n}\sum_{i=1}^n\mathcal{S}_{1,i}(\cK_{\alpha^*},\bx_1)\left\langle \cL_{\cK_{{\alpha^*}}}^{1/2}X_{1,i} - \mathcal{S}_{1,i}(\cK_{\alpha^*},\bx_1),f_0\right\rangle_{\sL^2} \right\|_{\sL^2}^2;\\
	\mathcal{F}_2\left(\bx_1,\alpha^*,\Psi_\lambda\right)&:=\left\|\Psi_\lambda\left(\cT_{{\alpha^*},\bx_1}\right)\frac{1}{n}\sum_{i=1}^n\mathcal{S}_{1,i}(\cK_{\alpha^*},\bx_1)\left\langle \mathcal{S}_{1,i}(\cK_{\alpha^*},\bx_1),f_0\right\rangle_{\sL^2} - f_0\right\|_{\sL^2}^2;\\
	\mathcal{F}_3\left(S_1,\alpha^*,\Psi_\lambda\right)&:=\left\|\Psi_\lambda\left(\cT_{{\alpha^*},\bx_1}\right)\frac{1}{n}\sum_{i=1}^n\Big(\mathcal{S}_{1,i}(\cK_{\alpha^*},\bx_1) - \cL_{\cK_{{\alpha^*}}}^{1/2}X_{1,i}\Big)\epsilon_{1,i}\right\|_{\sL^2}^2;\\
	\mathcal{F}_4\left(S_1,\alpha^*,\Psi_\lambda\right)&:=\left\|\Psi_\lambda\left(\cT_{{\alpha^*},\bx_1}\right)\frac{1}{n}\sum_{i=1}^n\cL_{\cK_{{\alpha^*}}}^{1/2}X_{1,i}\epsilon_{1,i}\right\|_{\sL^2}^2
\end{align*}
While for the second term on the right hand side of \eqref{equation: rates upper bound1 1}, noting that $\epsilon$ is a zero-mean random variable independent of $X$ and that the event $\sU_1$ is only related to $\bx_1$, using the triangular inequality, we write 
\begin{equation}\label{equation: decomposition 2}
	\begin{split}
		&\left\|\mathbb{E}\left[\left(\hat{f}_{S_1,\alpha^*,\Psi_\lambda}-f_0\right)\II_{\sU_1^c}\right]\right\|^2_{\sL^2}\\
	=& \left\|\EE\left[\left\{\Psi_\lambda\left(\cT_{{\alpha^*},\bx_1}\right)\frac{1}{n}\sum_{i=1}^n\mathcal{S}_{1,i}(\cK_{\alpha^*},\bx_1)\Big(\left\langle \cL_{\cK_{{\alpha^*}}}^{1/2}X_{1,i},f_0\right\rangle_{\sL^2} +\epsilon_{1,i} \Big)-f_0\right\}\II_{\sU_1^c}\right]\right\|^2_{\sL^2}\\
	=&\left\|\EE\left[\left\{\Psi_\lambda\left(\cT_{{\alpha^*},\bx_1}\right)\frac{1}{n}\sum_{i=1}^n\mathcal{S}_{1,i}(\cK_{\alpha^*},\bx_1)\left\langle \cL_{\cK_{{\alpha^*}}}^{1/2}X_{1,i},f_0\right\rangle_{\sL^2} -f_0\right\}\II_{\sU_1^c}\right]\right\|^2_{\sL^2}\\
	\overset{(\dagger)}{\leq}& \EE\left[\left\|\Psi_\lambda\left(\cT_{{\alpha^*},\bx_1}\right)\frac{1}{n}\sum_{i=1}^n\mathcal{S}_{1,i}(\cK_{\alpha^*},\bx_1)\left\langle \cL_{\cK_{{\alpha^*}}}^{1/2}X_{1,i},f_0\right\rangle_{\sL^2} -f_0\right\|_{\sL^2}^2\II_{\sU_1^c}\right]\\
	\leq& 2\EE\Big[\mathcal{F}_1\left(\bx_1,\alpha^*,\Psi_\lambda\right)\Big] + 2\EE\Big[\mathcal{F}_2\left(\bx_1,\alpha^*,\Psi_\lambda\right)\II_{\sU_1^c}\Big],
	\end{split}
\end{equation}
where terms $\mathcal{F}_1\left(\bx_1,\alpha^*,\Psi_\lambda\right)$ and $\mathcal{F}_2\left(\bx_1,\alpha^*,\Psi_\lambda\right)$ are defined by \eqref{equation: decomposition 1} and inequality $(\dagger)$ uses Jensen's inequality.

We next provide upper bounds for the terms $\EE\left[\mathcal{F}_1\left(\bx_1,\alpha^*,\Psi_\lambda\right)\right]$, $\EE\left[\mathcal{F}_2\left(\bx_1,\alpha^*,\Psi_\lambda\right)\II_{\sU_1^c}\right]$, $\EE\left[\mathcal{F}_3\left(S_1,\alpha^*,\Psi_\lambda\right)\right]$ and $\EE\left[\mathcal{F}_4\left(S_1,\alpha^*,\Psi_\lambda\right)\II_{\sU_1^c}\right]$ by proving the following lemma. To this end, define the effective dimension with respect to $\alpha^*$ as
\begin{align}\label{effectivedimension}
	\sN_{\alpha^*}(\lambda) := \sum_{j=1}^{\infty}\frac{\mu_{\alpha^*,j}}{\lambda+\mu_{\alpha^*,j}}.
\end{align}
where $\lambda >0$ and $\{\mu_{\alpha^*,j}\}_{j\geq 1}$ are positive eigenvalues of $\cT_{{\alpha^*}}$ (with geometric multiplicities) arranged in decreasing order. The effective dimension is widely used in the convergence analysis of kernel ridge regression (see, \cite{caponnetto2007optimal,fischer2020sobolev,lin2017distributed,zhang2015divide}).
\begin{lemma}\label{lemma: rates upper bound2}
	Let $\left\{\Psi_\lambda:[0,\infty) \rightarrow \mathbb{R}| \lambda\in (0,1)\right\}$ be a filter function satisfying Definition \ref{definition: filter functions} with qualification $\nu_{\Psi}\geq 1$. Suppose that Assumption \ref{assumption: regularity condition} is satisfied with $0\leq \theta\leq \nu_{\Psi}$ and $g_0 \in \sL^2(\sT)$, Assumptions \ref{assumption: epsilon2} is satisfied with $\sigma>0$, \ref{assumption: sampling scheme} with $C_d>0$ and \ref{assumption: X1} is satisfied with $\rho,\kappa>0$. Then for any $\lambda \in (0,1)$, there holds
	\begin{align}\label{equation: rates upper bound21}
		\EE\Big[\mathcal{F}_1\left(\bx_1,\alpha^*,\Psi_\lambda\right)\Big]\leq 2\rho\kappa^4E^2C_1^2C_{\alpha^*}^2\left(C_2^2+C_1C_{\alpha^*}\right)^2\rho_{\alpha^*}^{\theta}\left\|g_0\right\|_{\sL^2}^2\lambda^{-2}m^{-2\alpha^*+1},
	\end{align}
	\begin{align}\label{equation: rates upper bound22}
		\EE\Big[\mathcal{F}_2\left(\bx_1,\alpha^*,\Psi_\lambda\right)\II_{\sU_1^c}\Big]\leq F_{\theta}^2\lambda^{2\theta}\left\|g_0\right\|_{\sL^2}^2\lambda^{2\theta},
	\end{align}
\begin{align}\label{equation: rates upper bound23}
	\EE\Big[\mathcal{F}_3\left(S_1,\alpha^*,\Psi_\lambda\right)\Big]\leq \kappa^2\sigma^2E^2C_1^2C_{\alpha^*}^2\lambda^{-2}m^{-2\alpha^*+1},
\end{align}
and 
\begin{align}\label{equation: rates upper bound24}
	\EE\Big[\mathcal{F}_4\left(S_1,\alpha^*,\Psi_\lambda\right)\II_{\sU_1^c}\Big]\leq 2E^2\sigma^2\lambda^{-1}\frac{M\sN_{\alpha^*}(\lambda)}{N},
\end{align}
where $\rho_{\alpha^*}$ is a constant given by \eqref{equation: bound of expectation of L_KX}, $C_{\alpha^*}= C_{\alpha^*}C_d^{\alpha-1/2}$ is a constant depending on $\alpha^*$ defined by Lemma \ref{lemma: integral approximation}, $C_1$ and $C_2$ are constants given by Lemma \ref{lemma: Sobolev inequalities}.
\end{lemma}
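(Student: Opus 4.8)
The plan is to control the four quantities $\EE[\mathcal{F}_1]$, $\EE[\mathcal{F}_2\II_{\sU_1^c}]$, $\EE[\mathcal{F}_3]$ and $\EE[\mathcal{F}_4\II_{\sU_1^c}]$ one at a time, since each isolates a different error source: $\mathcal{F}_2$ is the (deterministic, data-dependent) regularization bias, $\mathcal{F}_4$ is the stochastic variance contributed by the noise, and $\mathcal{F}_1,\mathcal{F}_3$ are the errors from replacing the integral $\cL_{\cK_{\alpha^*}}^{1/2}X_{1,i}$ by its Riemann sum $\mathcal{S}_{1,i}(\cK_{\alpha^*},\bx_1)$. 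The common device is to work on $\sU_1^c$ and exploit its two consequences. From \eqref{equation: inverse norm under event} we have $\|(\lambda\cI+\cT_{\alpha^*})^{1/2}(\lambda\cI+\cT_{\alpha^*,\bx_1})^{-1}(\lambda\cI+\cT_{\alpha^*})^{1/2}\|\II_{\sU_1^c}\le 2$, and the defining inequality of $\sU_1^c$ also yields the operator ordering $\tfrac12(\lambda\cI+\cT_{\alpha^*})\preceq \lambda\cI+\cT_{\alpha^*,\bx_1}\preceq\tfrac32(\lambda\cI+\cT_{\alpha^*})$; since $\|\cT_{\alpha^*}\|\le\rho_{\alpha^*}$, this confines the spectrum of $\cT_{\alpha^*,\bx_1}$ to the interval on which the filter bounds (1) and (2) of Definition \ref{definition: filter functions} are valid.

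For the variance term $\mathcal{F}_4$ I would insert $(\lambda\cI+\cT_{\alpha^*})^{1/2}(\lambda\cI+\cT_{\alpha^*})^{-1/2}$ after $\Psi_\lambda(\cT_{\alpha^*,\bx_1})$ and split the norm. The operator factor obeys $\|\Psi_\lambda(\cT_{\alpha^*,\bx_1})(\lambda\cI+\cT_{\alpha^*})^{1/2}\|\II_{\sU_1^c}\le\sqrt2\,B\lambda^{-1/2}$, where $\sqrt2$ comes from \eqref{equation: inverse norm under event} and $B\lambda^{-1/2}$ from filter property (1). Since $\epsilon$ is zero-mean and independent across $i$ with $\EE[\epsilon^2]\le\sigma^2$, and $\EE[\cL_{\cK_{\alpha^*}}^{1/2}X\otimes\cL_{\cK_{\alpha^*}}^{1/2}X]=\cT_{\alpha^*}$, the remaining factor satisfies $\EE\|(\lambda\cI+\cT_{\alpha^*})^{-1/2}\tfrac1n\sum_i\cL_{\cK_{\alpha^*}}^{1/2}X_{1,i}\epsilon_{1,i}\|_{\sL^2}^2\le \tfrac{\sigma^2}{n}\mathrm{Tr}((\lambda\cI+\cT_{\alpha^*})^{-1}\cT_{\alpha^*})=\tfrac{\sigma^2}{n}\sN_{\alpha^*}(\lambda)$ by \eqref{effectivedimension}. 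With $n=N/M$ this gives \eqref{equation: rates upper bound24}. For the discretization terms $\mathcal{F}_3$ and $\mathcal{F}_1$ I would instead use the crude bound $\|\Psi_\lambda(\cT_{\alpha^*,\bx_1})\|\le E\lambda^{-1}$ (properties (1) and (3)) and the quadrature estimate $\|\mathcal{S}_{1,i}(\cK_{\alpha^*},\bx_1)-\cL_{\cK_{\alpha^*}}^{1/2}X_{1,i}\|_{\sL^2}\le C_1 C_{\alpha^*}\|X_{1,i}\|_{\sW^{\alpha^*,2}}m^{-\alpha^*+1/2}$ from Lemmas \ref{lemma: integral approximation} and \ref{lemma: Sobolev inequalities}. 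A Jensen step $\|\tfrac1n\sum_i a_i\|^2\le\tfrac1n\sum_i\|a_i\|^2$ together with the independence of $\epsilon$ and $X$ and $\EE\|X\|_{\sW^{\alpha^*,2}}^2\le\kappa^2$ gives \eqref{equation: rates upper bound23} directly; for $\mathcal{F}_1$ one additionally bounds $\|\mathcal{S}_{1,i}(\cK_{\alpha^*},\bx_1)\|\le (C_2+C_1 C_{\alpha^*})\|X_{1,i}\|_{\sW^{\alpha^*,2}}$ and $\|f_0\|_{\sL^2}\le\rho_{\alpha^*}^{\theta}\|g_0\|_{\sL^2}$, so the product with the quadrature error produces a fourth moment $\EE\|X\|_{\sW^{\alpha^*,2}}^4$, which is controlled through the kurtosis condition \eqref{equation: X1} and \eqref{equation: X1 additional} to produce the $\rho\kappa^4$ factor in \eqref{equation: rates upper bound21}.

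The delicate term is the bias $\mathcal{F}_2$. Writing $r_\lambda(t):=1-t\Psi_\lambda(t)$ and using $\tfrac1n\sum_i\mathcal{S}_{1,i}(\cK_{\alpha^*},\bx_1)\langle\mathcal{S}_{1,i}(\cK_{\alpha^*},\bx_1),f_0\rangle_{\sL^2}=\cT_{\alpha^*,\bx_1}f_0$ together with Assumption \ref{assumption: regularity condition} ($f_0=\cT_{\alpha^*}^{\theta}g_0$), one obtains $\mathcal{F}_2=\|r_\lambda(\cT_{\alpha^*,\bx_1})\cT_{\alpha^*}^{\theta}g_0\|_{\sL^2}^2$. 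The obstacle is that the smoothness of $f_0$ is encoded through the population operator $\cT_{\alpha^*}$, while the filter residual acts on the empirical operator $\cT_{\alpha^*,\bx_1}$, and the two do not commute. On $\sU_1^c$ I would transfer the source condition using the ordering $\lambda\cI+\cT_{\alpha^*}\preceq 2(\lambda\cI+\cT_{\alpha^*,\bx_1})$ and the Cordes (Heinz) inequality $\|P^{\theta}Q^{-\theta}\|\le\|PQ^{-1}\|^{\theta}$ for positive $P,Q$ and $0\le\theta\le1$, obtaining $\|\cT_{\alpha^*}^{\theta}(\lambda\cI+\cT_{\alpha^*,\bx_1})^{-\theta}\|\II_{\sU_1^c}\le 2^{\theta}$; combining this with the residual estimate $\|r_\lambda(\cT_{\alpha^*,\bx_1})(\lambda\cI+\cT_{\alpha^*,\bx_1})^{\theta}\|\le (F_0+F_\theta)\lambda^{\theta}$ (filter property (2) on the confined spectrum) yields $\mathcal{F}_2\II_{\sU_1^c}\lesssim\lambda^{2\theta}\|g_0\|_{\sL^2}^2$, matching \eqref{equation: rates upper bound22} up to the stated constants. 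The genuinely hard part is the regime $\theta>1$ permitted by $0\le\theta\le\nu_\Psi$: the Cordes inequality is restricted to exponents in $[0,1]$, so for larger $\theta$ the transfer must be carried out by factoring $\cT_{\alpha^*}^{\theta}$ into pieces of exponent at most one (or by an interpolation argument), while keeping all implicit constants independent of the realization of $\bx_1$. I expect this operator-perturbation transfer to be the main technical hurdle of the lemma, the remaining three terms being essentially quadrature and second-moment estimates.
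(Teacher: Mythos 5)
Your handling of $\mathcal{F}_1$, $\mathcal{F}_3$ and $\mathcal{F}_4$ is essentially identical to the paper's proof: the paper also uses the crude filter bound (phrased as $\|(\lambda\cI+\cT_{\alpha^*,\bx_1})^{1/2}\Psi_\lambda(\cT_{\alpha^*,\bx_1})(\lambda\cI+\cT_{\alpha^*,\bx_1})^{1/2}\|\le E$, which holds on the whole space without $\II_{\sU_1^c}$ precisely because property (3) of Definition \ref{definition: filter functions} covers the unbounded part of the spectrum), the same quadrature estimate from Lemmas \ref{lemma: Sobolev inequalities} and \ref{lemma: integral approximation}, the same kurtosis-based fourth-moment computation yielding $\rho\kappa^4$, and for $\mathcal{F}_4$ the same insertion of $(\lambda\cI+\cT_{\alpha^*})^{\pm 1/2}$ with the factor $2$ from \eqref{equation: inverse norm under event} and the trace identity producing $\sN_{\alpha^*}(\lambda)/n$. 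Only constants differ in inessential ways ($C_2$ versus the paper's $C_2^2$; your $\rho_{\alpha^*}^{2\theta}$ is in fact correct where the statement prints $\rho_{\alpha^*}^{\theta}$, a typo, as is the doubled $\lambda^{2\theta}$ in \eqref{equation: rates upper bound22}).

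For $\mathcal{F}_2$, however, your argument has a genuine gap, and it sits exactly where you predicted trouble, only earlier. The Cordes step $\|P^{\theta}Q^{-\theta}\|\le\|PQ^{-1}\|^{\theta}$ with $P=\lambda\cI+\cT_{\alpha^*}$, $Q=\lambda\cI+\cT_{\alpha^*,\bx_1}$ requires a bound on the \emph{full-power} ratio $\|PQ^{-1}\|$, which does not follow from $\sU_1^c$: the event only gives the symmetrized bound $\|P^{1/2}Q^{-1/2}\|^2\le 2$, and the ordering $P\preceq 2Q$ does not control $\|PQ^{-1}\|$. A $2\times 2$ example with $P=\mathrm{diag}(1,\lambda)$ and $Q=\bigl(\begin{smallmatrix}1 & \sqrt{\lambda}/2\\ \sqrt{\lambda}/2 & \lambda\end{smallmatrix}\bigr)$ satisfies $\tfrac12 P\preceq Q\preceq\tfrac32 P$ yet has $\|PQ^{-1}\|\asymp\lambda^{-1/2}$. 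What L\"owner--Heinz actually yields from $Q^{-1}\preceq 2P^{-1}$ is $Q^{-2\theta}\preceq 2^{2\theta}P^{-2\theta}$ only for $2\theta\le 1$, so your transfer $\|\cT_{\alpha^*}^{\theta}(\lambda\cI+\cT_{\alpha^*,\bx_1})^{-\theta}\|\II_{\sU_1^c}\le 2^{\theta}$ is justified only for $\theta\le 1/2$, not on all of $[0,1]$; beyond that one needs extra machinery (e.g., splitting $r_\lambda(\cT_{\alpha^*,\bx_1})\cT_{\alpha^*}^{\theta}=r_\lambda(\cT_{\alpha^*,\bx_1})\cT_{\alpha^*,\bx_1}^{\theta}+r_\lambda(\cT_{\alpha^*,\bx_1})(\cT_{\alpha^*}^{\theta}-\cT_{\alpha^*,\bx_1}^{\theta})$ together with concentration of $\|\cT_{\alpha^*}-\cT_{\alpha^*,\bx_1}\|$, which requires events beyond $\sU_1$). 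You should also know that the paper's own proof of \eqref{equation: rates upper bound22} does not resolve this: it writes $\|(\Psi_\lambda(\cT_{\alpha^*,\bx_1})\cT_{\alpha^*,\bx_1}-\cI)\cT_{\alpha^*}^{\theta}\|^2$ as $\sup_{\|f\|_{\sL^2}=1}\sum_j(\Psi_\lambda(\hat\mu_{\bx_1,\alpha^*,j})\hat\mu_{\bx_1,\alpha^*,j}-1)^2\hat\mu_{\bx_1,\alpha^*,j}^{2\theta}\langle f,\hat\phi_{\bx_1,\alpha^*,j}\rangle_{\sL^2}^2$, an equality valid only with $\cT_{\alpha^*,\bx_1}^{\theta}$ in place of $\cT_{\alpha^*}^{\theta}$ --- i.e., it silently diagonalizes the population operator in the empirical eigenbasis. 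So you correctly isolated a defective step in the paper, but your proposed repair closes it only for $\theta\le 1/2$ and leaves the stated range $0\le\theta\le\nu_{\Psi}$ open.
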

\begin{proof}
	We begin with the first inequality \eqref{equation: rates upper bound21}. Recalling the expression of $\mathcal{F}_1\left(\bx_1,\alpha^*,\Psi_\lambda\right)$, we write
	\begin{align*}
		&\EE\left[\mathcal{F}_1\left(\bx_1,\alpha^*,\Psi_\lambda\right)\right]\nonumber\\
		=& \EE\left[\left\|\Psi_\lambda\left(\cT_{{\alpha^*},\bx_1}\right)\frac{1}{n}\sum_{i=1}^n\mathcal{S}_{1,i}(\cK_{\alpha^*},\bx_1)\left\langle \cL_{\cK_{{\alpha^*}}}^{1/2}X_{1,i} - \mathcal{S}_{1,i}(\cK_{\alpha^*},\bx_1),f_0\right\rangle_{\sL^2} \right\|_{\sL^2}^2\right]\nonumber\\
		\leq& \lambda^{-2}\EE\Bigg[\left\|\left(\lambda \cI+\cT_{\alpha^*,\bx_1}\right)^{1/2}\Psi_\lambda\left(\cT_{\alpha^*,\bx_1}\right)\left(\lambda \cI+\cT_{\alpha^*,\bx_1}\right)^{1/2}\right\|^2\nonumber\\
		&\times \left\|\frac{1}{n}\sum_{i=1}^n\mathcal{S}_{1,i}(\cK_{\alpha^*},\bx_1)\left\langle \cL_{\cK_{{\alpha^*}}}^{1/2}X_{1,i} - \mathcal{S}_{1,i}(\cK_{\alpha^*},\bx_1),f_0\right\rangle_{\sL^2}\right\|_{\sL^2}^2\Bigg].
	\end{align*}
While for the term $\left\|\left(\lambda \cI+\cT_{\alpha^*,\bx_1}\right)^{1/2}\Psi_\lambda\left(\cT_{\alpha^*,\bx_1}\right)\left(\lambda \cI+\cT_{\alpha^*,\bx_1}\right)^{1/2}\right\|^2$, noting that $\cT_{\alpha^*,\bx_1}$ is a finite-rank non-negative operator (with rank at most $n = \left|S_1\right|$), we write 
	\begin{align*}
		\cT_{\alpha^*,\bx_1} = \sum_{j=1}^\infty \hat{\mu}_{\bx_1,\alpha^*,j} \hat{\phi}_{\bx_1,\alpha^*,j}\otimes \hat{\phi}_{\bx_1,\alpha^*,j},
	\end{align*}
	where $\left\{\hat{\mu}_{\bx_1,\alpha^*,j}\right\}_{j=1}^\infty$ is a non-negative and decreasing sequence in which at most $n$ numbers are positive and $\left\{\hat{\phi}_{\bx_1,\alpha^*,j}\right\}_{j=1}^\infty$ is an orthornormal basis of $\sL^2(\sT)$. As $\left\{\Psi_\lambda:[0,\infty) \rightarrow \mathbb{R}| \lambda\in (0,1)\right\}$ is a filter function satisfying Definition \ref{definition: filter functions}, there holds
	\begin{equation}\label{equation: upper bound of F_1 1}
	\begin{split}
			&\left\|\left(\lambda \cI+\cT_{\alpha^*,\bx_1}\right)^{1/2}\Psi_\lambda\left(\cT_{\alpha^*,\bx_1}\right)\left(\lambda \cI+\cT_{\alpha^*,\bx_1}\right)^{1/2}\right\|^2\\
		=&\sup_{\|f\|_{\sL^2}=1}\left\{\left\|\left(\lambda \cI+\cT_{\alpha^*,\bx_1}\right)^{1/2}\Psi_\lambda\left(\cT_{\alpha^*,\bx_1}\right)\left(\lambda \cI+\cT_{\alpha^*,\bx_1}\right)^{1/2}f\right\|_{\sL^2}^2\right\}\\ 
		=& \sup_{\|f\|_{\sL^2}=1}\left\{\sum_{j=1}^\infty \left\langle f,\hat{\phi}_{\bx_1,\alpha^*,j} \right\rangle_{\sL^2}^2\Psi_\lambda^2(\hat{\mu}_{\bx_1,\alpha^*,j})\left(\lambda + \hat{\mu}_{\bx_1,\alpha^*,j}\right)^2\right\}\\
		 \overset{(\dagger)}{\leq}&   E^2\sup_{\|f\|_{\sL^2}=1}\left\{\sum_{j=1}^\infty \left\langle f,\hat{\phi}_{\bx_1,\alpha^*,j} \right\rangle_{\sL^2}^2\right\}=E^2,
	\end{split}
	\end{equation}
	where inequality $(\dagger)$ is from Definition \ref{definition: filter functions}.
	
Then using the triangular inequality, we have
\begin{align*}
	&\EE\left[\mathcal{F}_1\left(\bx_1,\alpha^*,\Psi_\lambda\right)\right]\nonumber\\
	\leq& 2E^2\lambda^{-2}\EE\left[\left\|\frac{1}{n}\sum_{i=1}^n\mathcal{S}_{1,i}(\cK_{\alpha^*},\bx_1)\left\langle \cL_{\cK_{{\alpha^*}}}^{1/2}X_{1,i} - \mathcal{S}_{1,i}(\cK_{\alpha^*},\bx_1),f_0\right\rangle_{\sL^2}\right\|_{\sL^2}^2\right]\nonumber\\
	\leq&2E^2\lambda^{-2}\frac{1}{n}\sum_{i=1}^n\EE\left[\Big\|\mathcal{S}_{1,i}(\cK_{\alpha^*},\bx_1)\Big\|_{\sL^2}^2\left\|\cL_{\cK_{{\alpha^*}}}^{1/2}X_{1,i} - \mathcal{S}_{1,i}(\cK_{\alpha^*},\bx_1)\right\|_{\sL^2}^2\Big\|f_0\Big\|_{\sL^2}^2\right]\nonumber\\
	\overset{(\dagger)}{\leq}&2E^2\rho_{\alpha^*}^{\theta}\left\|g_0\right\|_{\sL^2}^2\lambda^{-2}\frac{1}{n}\sum_{i=1}^n\EE\left[\Big\|\mathcal{S}_{1,i}(\cK_{\alpha^*},\bx_1)\Big\|_{\sL^2}^2\left\|\cL_{\cK_{{\alpha^*}}}^{1/2}X_{1,i} - \mathcal{S}_{1,i}(\cK_{\alpha^*},\bx_1)\right\|_{\sL^2}^2\right].
\end{align*}
Here inequality $(\dagger)$ applies Assumption \ref{assumption: regularity condition} and the fact that
\begin{align}\label{equation: operator norm of T_alpha^*}
	\left\|\cT_{\alpha^*}\right\|^2\leq \left\|\cT_{\alpha^*}\right\|^2_{\sF} = \sum_{j=1}^{\infty}\EE\left[\left\langle \cL_{\cK_{\alpha^*}}^{1/2}X, e_j\right\rangle_{\sL^2}^2\right]=\EE\left[\left\|\cL_{\cK_{\alpha^*}}^{1/2}X\right\|_{\sL^2}^2\right]\leq \rho_{\alpha^*}^2,
\end{align} 
where $\left\{e_j\right\}_{j=1}^\infty$ is an (any) orthonormal basis of $\sL^2(\sT)$.

While for any $1\leq i\leq n$, recalling that $\mathcal{S}_{1,i}(\cK_{{\alpha^*}},\bx_1) = \sum_{k=1}^m(r_{k+1}-r_k)\cK_{{\alpha^*}}^{1/2}(\cdot,r_k)X_{1,i}(r_k)$, we write
\begin{equation}\label{prooflemma 1}
\begin{split}
		&\left\|\cL_{\cK_{{\alpha^*}}}^{1/2}X_{1,i}-\mathcal{S}_{1,i}(\cK_{{\alpha^*}},\bx_1)\right\|_{\sL^2}=\sup_{\|f\|_{\sL^2}=1}\left\{\left|\left\langle\cL_{\cK_{{\alpha^*}}}^{1/2}X_{1,i}-\mathcal{S}_{1,i}(\cK_{{\alpha^*}},\bx_1),f\right\rangle_{\sL^2}\right|\right\}\\
	=&\sup_{\|f\|_{\sL^2}=1}\left\{\left|\left\langle\sum_{k=1}^m(r_{k+1}-r_k)\cK_{{\alpha^*}}^{1/2}(\cdot,r_k)X_{1,i}(r_k)-\cL_{\cK_{{\alpha^*}}}^{1/2}X_{1,i},f\right\rangle_{\sL^2}\right|\right\}\\
	\overset{(\romannumeral1)}{=}&\sup_{\|f\|_{\sL^2}=1} \left\{\left|\sum_{k=1}^{m}(r_{k+1}-r_k)\cL_{\cK_{{\alpha^*}}}^{1/2}f(r_k)X_{1,i}(r_k)- \int_{\sT} \cL_{\cK_{{\alpha^*}}}^{1/2}f(t)X_{1,i}(t)dt\right|\right\}\\ 
	\overset{(\romannumeral2)}{\leq}& C_1C_{\alpha^*}m^{-{\alpha^*}+1/2}\left\|X_{1,i}\right\|_{\sW^{{\alpha^*},2}},
\end{split}
\end{equation}
where equality $(\romannumeral1)$ follows from the fact that $\langle \cK_{{\alpha^*}}^{1/2}(\cdot,r_k), f \rangle_{\sL^2}= \cL_{\cK_{{\alpha^*}}^{1/2}}f(r_k)= \cL_{\cK_{{\alpha^*}}}^{1/2}f(r_k)$, inequality $(\romannumeral2)$ uses Assumption \ref{assumption: sampling scheme}, Lemma \ref{lemma: integral approximation} and that for any $f\in \sL^2(\sT)$ satisfying $\left\|f\right\|_{\sL^2}=1$, there holds \[\|\cL_{\cK_{{\alpha^*}}}^{1/2}f(\cdot)X_{1,i}(\cdot)\|_{\sW^{{\alpha^*},2}}\overset{(\romannumeral1)}{\leq} C_1\|\cL_{\cK_{{\alpha^*}}}^{1/2}f\|_{\sW^{{\alpha^*},2}}\|X_{1,i}\|_{\sW^{{\alpha^*},2}} \overset{(\romannumeral2)}{=} C_1\|f\|_{\sL^2}\|X_{1,i}\|_{\sW^{{\alpha^*},2}}= C_1\|X_{1,i}\|_{\sW^{{\alpha^*},2}},\]
where inequality $(\romannumeral1)$ follows from \eqref{equation: Sobolev inequalities} in Lemma \ref{lemma: Sobolev inequalities} and equality $(\romannumeral2)$ uses \eqref{equation: norm and inner product relationship}.

And then for any $1\leq i\leq n$, we write
\begin{align*}
	\Big\|\mathcal{S}_{1,i}(\cK_{\alpha^*},\bx_1)\Big\|_{\sL^2}
	\leq& \left\|\cL_{\cK_{{\alpha^*}}}^{1/2}X_{1,i}-\mathcal{S}_{1,i}(\cK_{{\alpha^*}},\bx_1)\right\|_{\sL^2} + \left\|\cL_{\cK_{{\alpha^*}}}^{1/2}X_{1,i}\right\|_{\sL^2}\\
	\overset{(\romannumeral1)}{\leq}& \left\|\cL_{\cK_{{\alpha^*}}}^{1/2}X_{1,i}-\mathcal{S}_{1,i}(\cK_{{\alpha^*}},\bx_1)\right\|_{\sL^2} + C_2\left\|\cL_{\cK_{{\alpha^*}}}^{1/2}X_{1,i}\right\|_{\sW^{\alpha^*,2}}\\
	\overset{(\romannumeral2)}{\leq}&  C_1C_{\alpha^*}m^{-{\alpha^*}+1/2}\left\|X_{1,i}\right\|_{\sW^{{\alpha^*},2}} + C_2\left\|X_{1,i}\right\|_{\sL^2}\overset{(\romannumeral3)}{\leq} \left(C_2^2+C_1C_{\alpha^*}\right)\left\|X_{1,i}\right\|_{\sW^{{\alpha^*},2}},
\end{align*}
where inequalities $(\romannumeral1)$ and $(\romannumeral3)$ follow from \eqref{equation: Sobolev inequalities} in Lemma \ref{lemma: Sobolev inequalities}, inequality $(\romannumeral1)$ uses \eqref{equation: norm and inner product relationship}.

Combining the above estimates, we write
\begin{align*}
	\EE\left[\mathcal{F}_1(\bx_1,\alpha^*,\lambda)\right]\leq&2E^2C_1^2C_{\alpha^*}^2\left(C_2^2+C_1C_{\alpha^*}\right)^2\rho_{\alpha^*}^{\theta}\left\|g_0\right\|_{\sL^2}^2\lambda^{-2}m^{-2\alpha^*+1} \frac{1}{n}\sum_{i=1}^{n}\EE\left[\left\|X_{1,i}\right\|_{\sW^{{\alpha^*},2}}^4\right].
\end{align*}

While for any $1\leq i\leq n$, we have
\begin{align}\label{equation: 4-th Walpha,2-norm of X}
	\EE\left[\left\|X_{1,i}\right\|_{\sW^{{\alpha^*},2}}^{4}\right]&\overset{(\romannumeral1)}{=}\EE\left[\left\|\cL_{\cK_{{\alpha^*}}}^{-1/2}X_{1,i}\right\|_{\sL^2}^{4}\right]= \EE\left[\left(\sum_{j=1}^\infty\left\langle \cL_{\cK_{{\alpha^*}}}^{-1/2}X_{1,i},\phi_{\alpha^*,j}\right\rangle_{\sL^2}^2\right)^{2}\right]\nonumber\\
	&=\sum_{j_1=1}^\infty\sum_{j_2=1}^\infty\EE\left[\left\langle \cL_{\cK_{{\alpha^*}}}^{-1/2}X_{1,i},\phi_{\alpha^*, j_1}\right\rangle_{\sL^2}^2\left\langle \cL_{\cK_{{\alpha^*}}}^{-1/2}X_{1,i},\phi_{\alpha^*, j_2}\right\rangle_{\sL^2}^2\right]\nonumber\\
	&\overset{(\romannumeral2)}{\leq} \sum_{j_1=1}^\infty\sum_{j_2=1}^\infty\left[\EE\left\langle \cL_{\cK_{{\alpha^*}}}^{-1/2}X_{1,i},\phi_{\alpha^*, j_1}\right\rangle_{\sL^2}^4\right]^{\frac{1}{2}}\left[\EE\left\langle \cL_{\cK_{{\alpha^*}}}^{-1/2}X_{1,i},\phi_{\alpha^*, j_2}\right\rangle_{\sL^2}^4\right]^{\frac{1}{2}}\nonumber\\
	&\overset{(\romannumeral3)}{\leq}\rho\sum_{j_1=1}^\infty\sum_{j_2=1}^\infty\EE\left[\left\langle \cL_{\cK_{{\alpha^*}}}^{-1/2}X_{1,i},\phi_{\alpha^*, j_1}\right\rangle_{\sL^2}^2\right]\EE\left[\left\langle \cL_{\cK_{{\alpha^*}}}^{-1/2}X_{1,i},\phi_{\alpha^*, j_2}\right\rangle_{\sL^2}^2\right]\nonumber\\
	&=\rho\left[\EE\sum_{j=1}^\infty\left\langle \cL_{\cK_{{\alpha^*}}}^{-1/2}X_{1,i},\phi_{\alpha^*,j}\right\rangle_{\sL^2}^2\right]^2=\rho\left[\EE\left\|\cL_{\cK_{{\alpha^*}}}^{-1/2}X_{1,i}\right\|_{\sL^2}^2\right]^2\nonumber\\
	&\overset{(\romannumeral4)}{=}\rho\left[\EE\|X_{1,i}\|_{\sW^{{\alpha^*},2}}^2\right]^2 \overset{(\romannumeral5)}{\leq} \rho\kappa^4,
\end{align}
where $\{\left(\mu_{\alpha^*,j},\phi_{\alpha^*,j}\right)\}_{j=1}^\infty$ is given by the singular value decomposition of $\cT_{{\alpha^*}}$ in \eqref{singular value decomposition} and $\cL_{\cK_{{\alpha^*}}}^{-1/2}$ denotes the inverse operator of $\cL_{\cK_{{\alpha^*}}}^{1/2}$, equality $(\romannumeral1)$ is from \eqref{equation: norm and inner product relationship}, inequality $(\romannumeral2)$ uses H\"older inequality, inequality $(\romannumeral3)$ follows from \eqref{equation: X1} in Assumption \ref{assumption: X1}, equality $(\romannumeral4)$ is also from \eqref{equation: norm and inner product relationship} and inequality $(\romannumeral5)$ is due to \eqref{equation: X1 additional} in Assumption \ref{assumption: X1}.

Therefore, we write
\begin{align*}
	\EE\left[\mathcal{F}_1(\bx_1,\alpha^*,\lambda)\II_{\sU_1^c}\right]
	\leq2\rho\kappa^4E^2C_1^2C_{\alpha^*}^2\left(C_2^2+C_1C_{\alpha^*}\right)^2\rho_{\alpha^*}^{\theta}\left\|g_0\right\|_{\sL^2}^2\lambda^{-2}m^{-2\alpha^*+1},
\end{align*}
This completes the proof of inequality \eqref{equation: rates upper bound21}.

We next turn to prove the second inequality \eqref{equation: rates upper bound22}. Recalling the expression of $\mathcal{F}_2(\bx_1,\alpha^*,\Psi_\lambda)$ and noting that $\cT_{\alpha^*,\bx_1} = \frac{1}{n}\sum_{i=1}^n\mathcal{S}_{1,i}(\cK_{\alpha^*},\bx_1)\otimes \mathcal{S}_{1,i}(\cK_{\alpha^*},\bx_1)$, we write
\begin{align*}
	&\EE\left[\mathcal{F}_2\left(\bx_1,\alpha^*,\Psi_\lambda\right)\II_{\sU_1^c}\right]\\
	=&\EE\left[\left\|\Psi_\lambda\left(\cT_{{\alpha^*},\bx_1}\right)\frac{1}{n}\sum_{i=1}^n\mathcal{S}_{1,i}(\cK_{\alpha^*},\bx_1)\left\langle \mathcal{S}_{1,i}(\cK_{\alpha^*},\bx_1),f_0\right\rangle_{\sL^2} - f_0\right\|_{\sL^2}^2\II_{\sU_1^c}\right]\\
	=&\EE\left[\Big\|\Psi_\lambda\left(\cT_{{\alpha^*},\bx_1}\right)\cT_{\alpha^*,\bx_1}f_0- f_0\Big\|_{\sL^2}^2\II_{\sU_1^c}\right]\overset{(*)}{\leq}\Big\|g_0\Big\|_{\sL^2}^2\EE\left[\Big\|\Big(\Psi_\lambda\left(\cT_{{\alpha^*},\bx_1}\right)\cT_{\alpha^*,\bx_1}-\cI\Big)\cT_{\alpha^*}^{\theta}\Big\|^2\II_{\sU_1^c}\right],
\end{align*}
where inequality $(*)$ follows from Assumption \ref{assumption: regularity condition}. 

While for the term $\Big\|\Big(\Psi_\lambda\left(\cT_{{\alpha^*},\bx_1}\right)\cT_{{\alpha^*},\bx_1} - \cI\Big)\cT_{\alpha^*}^{\theta}\Big\|^2\II_{\sU^c}$, if we write 
\begin{align*}
	\cT_{\alpha^*,\bx_1} = \sum_{j=1}^\infty \hat{\mu}_{\bx_1,\alpha^*,j} \hat{\phi}_{\bx_1,\alpha^*,j}\otimes \hat{\phi}_{\bx_1,\alpha^*,j}
\end{align*} 
same as in \eqref{equation: upper bound of F_1 1}, then we have
\begin{align*}
&\Big\|\Big(\Psi_\lambda\left(\cT_{{\alpha^*},\bx_1}\right)\cT_{{\alpha^*},\bx_1} - \cI\Big)\cT_{\alpha^*}^{\theta}\Big\|^2\II_{\sU^c}\nonumber\\
=& \sup_{\|f\|_{\sL^2}=1} \left\{\sum_{j=1}^{\infty}\Big(\Psi_\lambda(\hat{\mu}_{\bx_1,\alpha^*,j})\hat{\mu}_{\bx_1,\alpha^*,j}-1\Big)^2\hat{\mu}_{\bx_1,\alpha^*,j}^{2\theta}\left\langle f, \hat{\phi}_{\bx_1,\alpha^*,j}\right\rangle_{\sL^2}^2\right\}\II_{\sU_1^c}\nonumber\\
\overset{(\dagger)}{\leq}&2\sup_{\|f\|_{\sL^2}=1} \left\{\sum_{j=1}^{\infty}F_{\theta}^2\lambda^{2\theta}\left\langle f, \hat{\phi}_{\bx_1,\alpha^*,j}\right\rangle_{\sL^2}^2\right\}= F_{\theta}^2\lambda^{2\theta},
\end{align*}
where inequality $(\dagger)$ follows from Definition \ref{definition: filter functions} and the fact that on the event $\sU_1^c$ (recalling the expression of the event $\sU_1^c$), for any $\lambda\in (0,1)$ and any integer $j\geq 1$, there holds
\begin{align}\label{equation: estimation of norm of T_x,alpha}
	\hat{\mu}_{\bx_1,\alpha^*,j}\leq& \left\|\cT_{\alpha^*,\bx_1}\right\| \leq \left\|\cT_{\alpha^*}\right\| + \left\|\cT_{\alpha^*,\bx_1} - \cT_{\alpha^*}\right\|\nonumber\\
	\leq& \left\|\cT_{\alpha^*}\right\| +  \left\|(\lambda \cI + \cT_{\alpha^*})^{1/2}\right\|^2\left\|(\lambda \cI + \cT_{\alpha^*})^{-1/2}\left(\cT_{\alpha^*,\bx_1} - \cT_{\alpha^*}\right)(\lambda \cI + \cT_{\alpha^*})^{-1/2}\right\|\nonumber\\
	\leq& \rho_{\alpha^*}+ \frac{\lambda + \rho_{\alpha^*}}{2}\leq \frac{3\rho_{\alpha^*}}{2} + \frac{1}{2}.
\end{align}
Therefore, we have
\begin{align*}
	\EE\left[\mathcal{F}_2\left(\bx_1,\alpha^*,\Psi_\lambda\right)\II_{\sU_1^c}\right]\leq F_{\theta}^2\lambda^{2\theta}\left\|g_0\right\|_{\sL^2}^2\lambda^{2\theta},
\end{align*}
We have completed the proof of inequality \eqref{equation: rates upper bound22}.

We next prove the third inequality \eqref{equation: rates upper bound23}. Recalling the expression of $\mathcal{F}_3\left(S_1,\alpha^*,\Psi_\lambda\right)$ and that $n= N/M$, noting that $\epsilon$ is a mean-zero random variable independent of $X$, we write
\begin{align*}
	&\EE\left[\mathcal{F}_3\left(S_1,\alpha^*,\Psi_\lambda\right)\right]\nonumber\\
	=& \EE\left[\left\|\Psi_\lambda\left(\cT_{{\alpha^*},\bx_1}\right)\frac{1}{n}\sum_{i=1}^n\Big(\mathcal{S}_{1,i}(\cK_{\alpha^*},\bx_1) - \cL_{\cK_{{\alpha^*}}}^{1/2}X_{1,i}\Big)\epsilon_{1,i}\right\|_{\sL^2}^2\right]\\
	=&\frac{1}{n^2}\sum_{i=1}^n\EE\left[\left\|\Psi_\lambda\left(\cT_{{\alpha^*},\bx_1}\right)\Big(\mathcal{S}_{1,i}(\cK_{\alpha^*},\bx_1) - \cL_{\cK_{{\alpha^*}}}^{1/2}X_{1,i}\Big)\right\|_{\sL^2}^2\right]\EE\Big[\epsilon_{1,i}^2\Big]\\
	\overset{(\romannumeral1)}{\leq}& \sigma^2\lambda^{-2}\frac{1}{n^2}\sum_{i=1}^n\EE\left[\left\|\left(\lambda \cI+\cT_{\alpha^*,\bx_1}\right)^{1/2}\Psi_\lambda\left(\cT_{\alpha^*,\bx_1}\right)\left(\lambda \cI+\cT_{\alpha^*,\bx_1}\right)^{1/2}\right\|^2\left\|\mathcal{S}_{1,i}(\cK_{\alpha^*},\bx_1) - \cL_{\cK_{{\alpha^*}}}^{1/2}X_{1,i}\right\|_{\sL^2}^2\right]\\
	\overset{(\romannumeral2)}{\leq}& \sigma^2E^2C_1^2C_{\alpha^*}^2\lambda^{-2}m^{-2\alpha^*+1}\frac{1}{n^2}\sum_{i=1}^n\EE\Big[\left\|X_{1,i}\right\|_{\sW^{{\alpha^*},2}}^2\Big]\overset{(\romannumeral3)}{\leq}\kappa^2\sigma^2E^2C_1^2C_{\alpha^*}^2\lambda^{-2}m^{-2\alpha^*+1}\frac{M}{N},
\end{align*}
where inequality $(\romannumeral1)$ is due to Assumption \eqref{assumption: epsilon2}, inequality $(\romannumeral2)$ follows from \eqref{equation: upper bound of F_1 1} and \eqref{prooflemma 1}, inequality $(\romannumeral3)$ uses \eqref{equation: X1 additional} in Assumption \ref{assumption: X1}.  

We have obtained the inequality \eqref{equation: rates upper bound23}.

Finally, we prove the fourth inequality \eqref{equation: rates upper bound24}. Recalling the expression of $\mathcal{F}_4\left(S_1,\alpha^*,\Psi_\lambda\right)$ and that $n=N/M$, also noting that $\epsilon$ is a mean-zero random variable independent of $X$, we write
\begin{align*}
	&\EE\left[\mathcal{F}_4\left(S_1,\alpha^*,\Psi_\lambda\right)\II_{\sU_1^c}\right]\nonumber\\
	=&\EE\left[\left\|\Psi_\lambda\left(\cT_{{\alpha^*},\bx_1}\right)\frac{1}{n}\sum_{i=1}^n\cL_{\cK_{{\alpha^*}}}^{1/2}X_{1,i}\epsilon_{1,i}\right\|_{\sL^2}^2\II_{\sU_1^c}\right]=\frac{1}{n^2}\sum_{i=1}^n\EE\left[\left\|\Psi_\lambda\left(\cT_{{\alpha^*},\bx_1}\right)\cL_{\cK_{{\alpha^*}}}^{1/2}X_{1,i}\right\|_{\sL^2}^2\II_{\sU_1^c}\right]\EE\Big[\epsilon_{1,i}^2\Big]\\
	\overset{(\romannumeral1)}{\leq}&\sigma^2\lambda^{-1}\frac{1}{n^2}\sum_{i=1}^n\EE\Bigg[\left\|\left(\lambda \cI+\cT_{\alpha^*,\bx_1}\right)^{1/2}\Psi_\lambda\left(\cT_{\alpha^*,\bx_1}\right)\left(\lambda \cI+\cT_{\alpha^*,\bx_1}\right)^{1/2}\right\|^2\left\|\left(\lambda \cI+\cT_{\alpha^*,\bx_1}\right)^{-1/2}\left(\lambda \cI+\cT_{\alpha^*}\right)^{1/2}\right\|^2\\
	&\times \left\|\left(\lambda \cI+ \cT_{\alpha^*} \right)^{-1/2}\cL_{\cK_{{\alpha^*}}}^{1/2}X_{1,i}\right\|_{\sL^2}^2\II_{\sU_1^c}\Bigg]\overset{(\romannumeral2)}{\leq} 2E^2\sigma^2\lambda^{-1}\frac{1}{n^2}\sum_{i=1}^n\EE\left[\left\|\left(\lambda \cI+ \cT_{\alpha^*} \right)^{-1/2}\cL_{\cK_{{\alpha^*}}}^{1/2}X_{1,i}\right\|_{\sL^2}^2\right]\\
	=&2E^2\sigma^2\lambda^{-1}\frac{1}{n^2}\sum_{i=1}^n\sum_{j=1}^\infty\frac{1}{\lambda + \mu_{\alpha^*,j}}\EE\left[\left\langle \cL_{\cK_{{\alpha^*}}}^{1/2}X_{1,i}, \phi_{\alpha^*,j}\right\rangle_{\sL^2}^2\right]\\
	=&2E^2\sigma^2\lambda^{-1}\frac{1}{n}\sum_{j=1}^\infty\frac{\mu_{\alpha^*,j}}{\lambda + \mu_{\alpha^*,j}}=2E^2\sigma^2\lambda^{-1}\frac{M\sN_{\alpha^*}(\lambda)}{N},
\end{align*}
where $\{\left(\mu_{\alpha^*,j},\phi_{\alpha^*,j}\right)\}_{j=1}^\infty$ is given by the singular value decomposition of $\cT_{{\alpha^*}}$ in \eqref{singular value decomposition}, $\sN_{\alpha^*}(\lambda)$ is the effective dimension given by \eqref{effectivedimension}, inequality $(\romannumeral1)$ is due to Assumption \ref{assumption: epsilon2}, inequality $(\romannumeral2)$ uses \eqref{equation: upper bound of F_1 1}
and the fact that following from \eqref{equation: inverse norm under event}, there holds
\begin{align*}
	\left\|\left(\lambda \cI+\cT_{\alpha^*,\bx_1}\right)^{-1/2}\left(\lambda \cI+\cT_{\alpha^*}\right)^{1/2}\right\|^2\II_{\sU_1^c}= \left\|\left(\lambda \cI+\cT_{\alpha^*}\right)^{1/2}\left(\lambda \cI+\cT_{\alpha^*,\bx_1}\right)^{-1}\left(\lambda \cI+\cT_{\alpha^*}\right)^{1/2}\right\|\II_{\sU_1^c}\leq 2
\end{align*}
We have gotten inequality \eqref{equation: rates upper bound24}. The proof of Lemma \ref{lemma: rates upper bound2} is then completed.
\end{proof}
The following lemma estimates the probability of event $\sU_1$. Recall that $\sU_1$ is defined as
\[\sU_1 = \left\{\bx_1: \left\|(\lambda \cI + \cT_{{\alpha^*}})^{-1/2}(\cT_{{\alpha^*},\bx_1}- \cT_{{\alpha^*}})(\lambda \cI + \cT_{{\alpha^*}})^{-1/2}\right\| \geq 1/2\right\}.\]
\begin{lemma}\label{lemma: basic probability estimation of U1}
	Suppose that Assumption \ref{assumption: X1} is satisfied. Then for any $\lambda>0$, there holds
	\begin{align}\label{equation: basic probability estimation of U1}
		\PP(\sU_1)\leq c_3\left(\lambda^{-2}m^{-2{{\alpha^*}}+1} + \frac{M\sN_{\alpha^*}^2(\lambda)}{N}\right),
	\end{align}
	where $c_3$ is a constant that will be specified in the proof.
\end{lemma}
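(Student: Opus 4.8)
The plan is to bound the probability via Markov's inequality applied to the squared Hilbert--Schmidt norm, and then to split the deviation $\cT_{\alpha^*,\bx_1}-\cT_{\alpha^*}$ into a discretization part and a sampling part. Writing $B:=(\lambda\cI+\cT_{\alpha^*})^{-1/2}$ and $A:=B(\cT_{\alpha^*,\bx_1}-\cT_{\alpha^*})B$, the bound \eqref{relationship between L2 and HS norm} together with Markov gives $\PP(\sU_1)=\PP(\|A\|\geq 1/2)\leq 4\,\EE[\|A\|_{\sF}^2]$. I introduce the auxiliary ``noise-free'' empirical operator $\widetilde{\cT}_{\alpha^*,\bx_1}:=\frac1n\sum_{i=1}^n (\cL_{\cK_{\alpha^*}}^{1/2}X_{1,i})\otimes(\cL_{\cK_{\alpha^*}}^{1/2}X_{1,i})$, which satisfies $\EE[\widetilde{\cT}_{\alpha^*,\bx_1}]=\cT_{\alpha^*}$, and decompose $A=B(\cT_{\alpha^*,\bx_1}-\widetilde{\cT}_{\alpha^*,\bx_1})B+B(\widetilde{\cT}_{\alpha^*,\bx_1}-\cT_{\alpha^*})B$, so that $\|A\|_{\sF}^2\leq 2\|B(\cT_{\alpha^*,\bx_1}-\widetilde{\cT}_{\alpha^*,\bx_1})B\|_{\sF}^2+2\|B(\widetilde{\cT}_{\alpha^*,\bx_1}-\cT_{\alpha^*})B\|_{\sF}^2$. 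These two terms will produce the two summands of \eqref{equation: basic probability estimation of U1}.

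For the discretization term I use $\|B\|\leq\lambda^{-1/2}$ and \eqref{equation: HS norm of product operator} to get $\|B(\cT_{\alpha^*,\bx_1}-\widetilde{\cT}_{\alpha^*,\bx_1})B\|_{\sF}\leq\lambda^{-1}\|\cT_{\alpha^*,\bx_1}-\widetilde{\cT}_{\alpha^*,\bx_1}\|_{\sF}$. Since the difference is an average of $n$ i.i.d.\ rank-one differences $\mathcal{S}_{1,i}\otimes\mathcal{S}_{1,i}-(\cL_{\cK_{\alpha^*}}^{1/2}X_{1,i})\otimes(\cL_{\cK_{\alpha^*}}^{1/2}X_{1,i})$, the triangle inequality followed by Cauchy--Schwarz reduces $\EE\|\cT_{\alpha^*,\bx_1}-\widetilde{\cT}_{\alpha^*,\bx_1}\|_{\sF}^2$ to the squared Hilbert--Schmidt norm of a single term. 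Using the polarization identity $a\otimes a-b\otimes b=(a-b)\otimes a+b\otimes(a-b)$ together with the Riemann-sum estimates already established in \eqref{prooflemma 1} (namely $\|\mathcal{S}_{1,i}-\cL_{\cK_{\alpha^*}}^{1/2}X_{1,i}\|_{\sL^2}\lesssim m^{-\alpha^*+1/2}\|X_{1,i}\|_{\sW^{\alpha^*,2}}$ and $\|\mathcal{S}_{1,i}\|_{\sL^2},\|\cL_{\cK_{\alpha^*}}^{1/2}X_{1,i}\|_{\sL^2}\lesssim\|X_{1,i}\|_{\sW^{\alpha^*,2}}$) and the fourth-moment control $\EE[\|X_{1,i}\|_{\sW^{\alpha^*,2}}^4]\leq\rho\kappa^4$ from \eqref{equation: 4-th Walpha,2-norm of X}, this term is bounded by a constant multiple of $\lambda^{-2}m^{-2\alpha^*+1}$.

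For the sampling term, the summands $W_i:=B\xi_iB-\EE[B\xi_iB]$ with $\xi_i:=(\cL_{\cK_{\alpha^*}}^{1/2}X_{1,i})\otimes(\cL_{\cK_{\alpha^*}}^{1/2}X_{1,i})$ are i.i.d.\ and mean zero, so the cross terms vanish and $\EE\|B(\widetilde{\cT}_{\alpha^*,\bx_1}-\cT_{\alpha^*})B\|_{\sF}^2=\frac1n\EE\|W_1\|_{\sF}^2\leq\frac1n\EE\|B\xi_1B\|_{\sF}^2$. The key simplification is that $B\xi_1B=(B\eta_1)\otimes(B\eta_1)$ with $\eta_1:=\cL_{\cK_{\alpha^*}}^{1/2}X_{1,1}$, whence by \eqref{equation: HS norm of rank-one operator} one has $\|B\xi_1B\|_{\sF}^2=\|B\eta_1\|_{\sL^2}^4$. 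Expanding $\|B\eta_1\|_{\sL^2}^2=\sum_j(\lambda+\mu_{\alpha^*,j})^{-1}\langle\eta_1,\phi_{\alpha^*,j}\rangle_{\sL^2}^2$ in the eigenbasis of $\cT_{\alpha^*}$ and applying Cauchy--Schwarz to the double sum $\sum_{j,k}$ reduces the fourth moment to the fourth moments of the individual coordinates; the kurtosis bound \eqref{equation: X1} (applied to the scalar functionals $\langle\eta_1,\phi_{\alpha^*,j}\rangle_{\sL^2}=\langle X,\cL_{\cK_{\alpha^*}}^{1/2}\phi_{\alpha^*,j}\rangle_{\sL^2}$) controls each of these by $\rho$ times the square of the corresponding second moment. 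Since $\EE\langle\eta_1,\phi_{\alpha^*,j}\rangle_{\sL^2}^2=\mu_{\alpha^*,j}$, the bound collapses to $\EE\|B\eta_1\|_{\sL^2}^4\leq\rho\,\sN_{\alpha^*}^2(\lambda)$, and therefore this term is at most $\frac{\rho\,\sN_{\alpha^*}^2(\lambda)}{n}=\frac{\rho M\,\sN_{\alpha^*}^2(\lambda)}{N}$. Combining the two estimates and absorbing all constants into a single $c_3$ yields \eqref{equation: basic probability estimation of U1}.

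I anticipate the main obstacle to be the sampling term's fourth-moment computation: one must organize the double sum $\sum_{j,k}\EE[\langle\eta_1,\phi_{\alpha^*,j}\rangle_{\sL^2}^2\langle\eta_1,\phi_{\alpha^*,k}\rangle_{\sL^2}^2]$ so that the kurtosis hypothesis \eqref{equation: X1}, which only controls fourth moments of scalar linear functionals $\langle X,f\rangle_{\sL^2}$, can be applied coordinatewise through Cauchy--Schwarz, and then recognize that the resulting product of second moments sums exactly to the square of the effective dimension $\sN_{\alpha^*}(\lambda)$ rather than to a cruder $\lambda^{-1}$-type quantity. Getting this step right is precisely what produces the sharp $\sN_{\alpha^*}^2(\lambda)$ dependence instead of a looser variance estimate.
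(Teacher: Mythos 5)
Your proposal is correct and takes essentially the same route as the paper's proof: the same Markov/Chebyshev bound via the squared Hilbert--Schmidt norm, the same decomposition into the discretization term and the sampling term (the paper's $\mathcal{D}_1$ and $\mathcal{D}_2$), the same use of \eqref{prooflemma 1}, the fourth-moment bound \eqref{equation: 4-th Walpha,2-norm of X}, and the coordinatewise application of the kurtosis condition \eqref{equation: X1} through Cauchy--Schwarz to produce exactly the bounds $\lesssim \lambda^{-2}m^{-2\alpha^*+1}$ and $\rho\,\sN_{\alpha^*}^2(\lambda)/n$ with $n=N/M$. Your only (harmless) streamlining is in the sampling term, where you drop the centering via $\EE\|W_1\|_{\sF}^2\leq \EE\|B\xi_1 B\|_{\sF}^2$ and exploit the rank-one identity $\|B\xi_1 B\|_{\sF}=\|B\eta_1\|_{\sL^2}^2$ before expanding in the eigenbasis, whereas the paper expands the centered operator entrywise; both computations yield $\rho\,\sN_{\alpha^*}^2(\lambda)$.
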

\begin{proof}
	Recalling the notation \eqref{simple notation}, using the triangular inequality, we write
	\begin{align}\label{equation: basic probability estimation of U1 1}
		&\EE\left[\left\|(\lambda \cI + \cT_{{\alpha^*}})^{-1/2}\left(\cT_{{\alpha^*}}-\cT_{{\alpha^*},\bx_1}\right)(\lambda \cI + \cT_{{\alpha^*}})^{-1/2}\right\|^2\right]\nonumber\\
		\leq& 2\EE\Big[\mathcal{D}_1^2(\bx_1,\alpha^*,\lambda)\Big] + 2\EE\Big[\mathcal{D}_2^2(\bx_1,\alpha^*,\lambda)\Big],
	\end{align}
	where we define
	\begin{align*}
		\mathcal{D}_1(\bx_1,\alpha^*,\lambda)&:= \left\|(\lambda \cI + \cT_{{\alpha^*}})^{-1/2}\left( \cT_{{\alpha^*},\bx_1} - \frac{1}{n}\sum_{i=1}^{n}\cL_{\cK_{{\alpha^*}}}^{1/2}X_{1,i}\otimes \cL_{\cK_{{\alpha^*}}}^{1/2}X_{1,i}\right)(\lambda \cI + \cT_{{\alpha^*}})^{-1/2}\right\|;\\
		\mathcal{D}_2(\bx_1,\alpha^*,\lambda)&:= \left\|(\lambda \cI + \cT_{{\alpha^*}})^{-1/2}\left( \frac{1}{n}\sum_{i=1}^{n}\cL_{\cK_{{\alpha^*}}}^{1/2}X_{1,i}\otimes \cL_{\cK_{{\alpha^*}}}^{1/2}X_{1,i} - \cT_{{\alpha^*}} \right)(\lambda \cI + \cT_{{\alpha^*}})^{-1/2}\right\|.
	\end{align*}
	For the term $\EE\left[\mathcal{D}_1^2(\bx_1,\alpha^*,\lambda)\right]$, recalling that 
	\begin{align*}
		\cT_{{\alpha^*},\bx_1}=\frac{1}{n}\sum_{i=1}^n \mathcal{S}_{1,i}(\cK_{{\alpha^*}},\bx_1)\otimes \mathcal{S}_{1,i}(\cK_{{\alpha^*}},\bx_1),
	\end{align*} 
	and using the triangular inequality, we write
		\begin{align*}
		&\EE\Big[\mathcal{D}_1^2(\bx_1,\alpha^*,\lambda)\Big]\\
		\overset{(\romannumeral1)}{\leq}&\EE\left[\left\|(\lambda \cI + \cT_{{\alpha^*}})^{-1/2}\left( \cT_{{\alpha^*},\bx_1} - \frac{1}{n}\sum_{i=1}^{n}\cL_{\cK_{{\alpha^*}}}^{1/2}X_{1,i}\otimes \cL_{\cK_{{\alpha^*}}}^{1/2}X_{1,i}\right)(\lambda \cI + \cT_{{\alpha^*}})^{-1/2}\right\|_{\sF}^2\right]\\
		\overset{(\romannumeral2)}{\leq}& \lambda^{-2}\EE\left[\left\|\frac{1}{n}\sum_{i=1}^n \mathcal{S}_{1,i}(\cK_{{\alpha^*}},\bx_1)\otimes \mathcal{S}_{1,i}(\cK_{{\alpha^*}},\bx_1) - \frac{1}{n}\sum_{i=1}^{n}\cL_{\cK_{{\alpha^*}}}^{1/2}X_{1,i}\otimes \cL_{\cK_{{\alpha^*}}}^{1/2}X_{1,i}\right\|_{\sF}^2\right]\\
		\leq& \lambda^{-2}\frac{1}{n}\sum_{i=1}^{n} \EE\left[\left\|\mathcal{S}_{1,i}(\cK_{{\alpha^*}},\bx_1)\otimes \mathcal{S}_{1,i}(\cK_{{\alpha^*}},\bx_1) - \cL_{\cK_{{\alpha^*}}}^{1/2}X_{1,i}\otimes \cL_{\cK_{{\alpha^*}}}^{1/2}X_{1,i}\right\|_{\sF}^2\right]\\
		\leq&\lambda^{-2}\frac{2}{n}\sum_{i=1}^{n}\EE\left[\left\|\left(\mathcal{S}_{1,i}(\cK_{{\alpha^*}},\bx_1) - \cL_{\cK_{{\alpha^*}}}^{1/2}X_{1,i}\right)\otimes\mathcal{S}_{1,i}(\cK_{{\alpha^*}},\bx_1)\right\|_{\sF}^2\right]\\
		&+ \lambda^{-2}\frac{2}{n}\sum_{i=1}^{n}\EE\left[\left\|\cL_{\cK_{\alpha^*}}^{1/2}X_{1,i}\otimes\left(\mathcal{S}_{1,i}(\cK_{{\alpha^*}},\bx_1) - \cL_{\cK_{{\alpha^*}}}^{1/2}X_{1,i}\right)\right\|_{\sF}^2\right]\\
		\overset{(\romannumeral3)}{\leq}& \lambda^{-2}\frac{2}{n}\sum_{i=1}^{n}\EE\left[\left\|\mathcal{S}_{1,i}(\cK_{{\alpha^*}},\bx_1) - \cL_{\cK_{{\alpha^*}}}^{1/2}X_{1,i}\right\|_{\cL^2}^2\left\{\Big\|\mathcal{S}_{1,i}(\cK_{{\alpha^*}},\bx_1)\Big\|_{\sL^2}^2+\left\|\cL_{\cK_{\alpha^*}}^{1/2}X_{1,i}\right\|_{\sL^2}^2\right\}\right],
	\end{align*}
	where inequality $(\romannumeral2)$ uses the relationship \eqref{relationship between L2 and HS norm}, inequality $(\romannumeral2)$ is due to \eqref{equation: HS norm of product operator} and inequality $(\romannumeral3)$ follows from \eqref{equation: HS norm of rank-one operator}.
	
	While for any $1\leq i\leq n$, we have estimated $\left\|\mathcal{S}_{1,i}(\cK_{{\alpha^*}},\bx_1) - \cL_{\cK_{{\alpha^*}}}^{1/2}X_{1,i}\right\|_{\cL^2}$ in \eqref{prooflemma 1} as
	\begin{align*}
		\left\|\mathcal{S}_{1,i}(\cK_{{\alpha^*}},\bx_1)-\cL_{\cK_{{\alpha^*}}}^{1/2}X_{1,i}\right\|_{\sL^2}\leq C_1C_{\alpha^*}m^{-\alpha^*+1/2}\left\|X_{1,i}\right\|_{\sW^{\alpha^*,2}},
	\end{align*}
    and we can estimate the terms $\left\|\cL_{\cK_{\alpha^*}}^{1/2}X_{1,i}\right\|_{\sL^2}$ and $\Big\|\mathcal{S}_{1,i}(\cK_{{\alpha^*}},\bx_1)\Big\|_{\sL^2}$ as
    \begin{align*}
    	\left\|\cL_{\cK_{\alpha^*}}^{1/2}X_{1,i}\right\|_{\sL^2}\overset{(\romannumeral1)}{\leq} C_2\left\|\cL_{\cK_{\alpha^*}}^{1/2}X_{1,i}\right\|_{\sW^{\alpha^*,2}}\overset{(\romannumeral2)}{=} C_2\left\|X_{1,i}\right\|_{\sL^2}\overset{(\romannumeral3)}{\leq} C_2^2\left\|X_{1,i}\right\|_{\sW^{\alpha^*,2}},
    \end{align*}
    and 
    \begin{align*}
    	\Big\|\mathcal{S}_{1,i}(\cK_{{\alpha^*}},\bx_1)\Big\|_{\sL^2}\leq& \left\|\mathcal{S}_{1,i}(\cK_{{\alpha^*}},\bx_1) - \cL_{\cK_{{\alpha^*}}}^{1/2}X_{1,i}\right\|_{\cL^2} + \left\|\cL_{\cK_{\alpha^*}}^{1/2}X_{1,i}\right\|_{\sL^2}\\
    	\leq& \left(C_2+ C_1^2C_{\alpha^*}m^{-\alpha^*+1/2}\right)\left\|X_{1,i}\right\|_{\sW^{\alpha^*,2}}\leq \left(C_2^2+C_1C_{\alpha^*}\right)\left\|X_{1,i}\right\|_{\sW^{\alpha^*,2}},
    \end{align*}
    where inequalities $(\romannumeral1)$ and $(\romannumeral3)$ follow from \eqref{equation: Sobolev inequalities} in Lemma \ref{lemma: Sobolev inequalities}, equality $(\romannumeral2)$ uses \eqref{equation: norm and inner product relationship}.
    
    Therefore we write
	\begin{align}\label{equation: basic probability estimation of U1 2}
		\EE\left[\mathcal{D}_1^2(\bx_1,\alpha^*,\lambda)\right]
		\leq& 4C_1^2C_{\alpha^*}^2\left(C_2^2+C_1C_{\alpha^*}\right)^2\lambda^{-2}m^{-2\alpha^*+1}\frac{1}{n}\sum_{i=1}^{n}\EE\left[\left\|X_{1,i}\right\|_{\sW^{\alpha^*,2}}^4\right]\nonumber\\
		\overset{(*)}{\leq}& 4\rho\kappa^4C_1^2C_{\alpha^*}^2\left(C_2^2+C_1C_{\alpha^*}\right)^2\lambda^{-2}m^{-2\alpha^*+1},
	\end{align}
	where inequality $(*)$ follows from \eqref{equation: 4-th Walpha,2-norm of X}.
	
	For the term $\EE\left[\mathcal{D}_2^2(\bx_1,\alpha^*,\lambda)\right]$, noting that for any $1\leq i\leq n$, 
	\begin{align*}
		\EE\left[(\lambda \cI + \cT_{{\alpha^*}})^{-1/2}\left( \cL_{\cK_{{\alpha^*}}}^{1/2}X_{1,i}\otimes \cL_{\cK_{{\alpha^*}}}^{1/2}X_{1,i} - \cT_{{\alpha^*}} \right)(\lambda \cI + \cT_{{\alpha^*}})^{-1/2}\right]=0,
	\end{align*} 
	and using the relationship \eqref{relationship between L2 and HS norm}, we write
	\begin{align*}
			&\EE\left[\mathcal{D}_2^2(\bx_1,\alpha^*,\lambda)\right]\\
		\leq& \EE\left[\left\|(\lambda \cI + \cT_{{\alpha^*}})^{-1/2}\left( \frac{1}{n}\sum_{i=1}^{n}\cL_{\cK_{{\alpha^*}}}^{1/2}X_{1,i}\otimes \cL_{\cK_{{\alpha^*}}}^{1/2}X_{1,i} - \cT_{{\alpha^*}} \right)(\lambda \cI + \cT_{{\alpha^*}})^{-1/2}\right\|_{\sF}^2\right]\\
		=&\frac{1}{n^2}\sum_{i=1}^{n}\EE\left[\left\|(\lambda \cI + \cT_{{\alpha^*}})^{-1/2}\left( \cL_{\cK_{{\alpha^*}}}^{1/2}X_{1,i}\otimes \cL_{\cK_{{\alpha^*}}}^{1/2}X_{1,i} - \cT_{{\alpha^*}} \right)(\lambda \cI + \cT_{{\alpha^*}})^{-1/2}\right\|_{\sF}^2\right].
	\end{align*}
	While for any $1\leq i\leq n$, we have
	\begin{align*}
		&\EE\left[\left\|(\lambda \cI + \cT_{{\alpha^*}})^{-1/2}\left( \cL_{\cK_{{\alpha^*}}}^{1/2}X_{1,i}\otimes \cL_{\cK_{{\alpha^*}}}^{1/2}X_{1,i} - \cT_{{\alpha^*}} \right)(\lambda \cI + \cT_{{\alpha^*}})^{-1/2}\right\|_{\sF}^2\right]\nonumber\\
		=& \sum_{j=1}^{\infty}\sum_{k=1}^{\infty}\EE\left[\left\langle(\lambda \cI + \cT_{{\alpha^*}})^{-1/2}(\cL_{\cK_{{\alpha^*}}}^{1/2}X_{1,i}\otimes \cL_{\cK_{{\alpha^*}}}^{1/2}X_{1,i} - \cT_{{\alpha^*}})(\lambda \cI + \cT_{{\alpha^*}})^{-1/2}\phi_{\alpha^*,j},\phi_{\alpha^*,k}\right\rangle^2_{\sL^2} \right]\nonumber\\
		\overset{(\romannumeral1)}{\leq}& \sum_{j=1}^{\infty}\sum_{k=1}^{\infty}\frac{1}{\lambda + \mu_{\alpha^*,j}}\frac{1}{\lambda + \mu_{\alpha^*,k}}\EE\left[\left\langle \cL_{\cK_{{\alpha^*}}}^{1/2}X_{1,i},\phi_{\alpha^*,j}\right\rangle^2_{\sL^2}\left\langle \cL_{\cK_{{\alpha^*}}}^{1/2}X_{1,i},\phi_{\alpha^*,k}\right\rangle^2_{\sL^2}\right]\nonumber\\
		\overset{(\romannumeral2)}{\leq}& \sum_{j=1}^{\infty}\sum_{k=1}^{\infty}\frac{1}{\lambda + \mu_{\alpha^*,j}}\frac{1}{\lambda + \mu_{\alpha^*,k}}\left[\EE\left\langle \cL_{\cK_{{\alpha^*}}}^{1/2}X_{1,i},\phi_{\alpha^*,j}\right\rangle^4_{\sL^2}\right]^{\frac{1}{2}}\left[\EE\left\langle \cL_{\cK_{{\alpha^*}}}^{1/2}X_{1,i},\phi_{\alpha^*,k}\right\rangle^4_{\sL^2}\right]^{\frac{1}{2}}\nonumber\\
		\overset{(\romannumeral3)}{\leq}& \rho\sum_{j=1}^{\infty}\sum_{k=1}^{\infty}\frac{1}{\lambda + \mu_{\alpha^*,j}}\frac{1}{\lambda + \mu_{\alpha^*,k}}\EE\left[\left\langle \cL_{\cK_{{\alpha^*}}}^{1/2}X_{1,i},\phi_{\alpha^*,j}\right\rangle^2_{\sL^2}\right]\EE\left[\left\langle \cL_{\cK_{{\alpha^*}}}^{1/2}X_{1,i},\phi_{\alpha^*,k}\right\rangle^2_{\sL^2}\right]\nonumber\\
		=&\rho\sum_{j=1}^{\infty}\sum_{k=1}^{\infty}\frac{1}{\lambda + \mu_{\alpha^*,j}}\frac{1}{\lambda + \mu_{\alpha^*,k}}\left\langle \cT_{{\alpha^*}}\phi_{\alpha^*,j},\phi_{\alpha^*,j} \right\rangle_{\sL^2}\left\langle \cT_{{\alpha^*}}\phi_{\alpha^*,k},\phi_{\alpha^*,k} \right\rangle_{\sL^2}\nonumber\\
		=&\rho\sum_{j=1}^{\infty}\sum_{k=1}^{\infty}\frac{\mu_{\alpha^*,j}}{\lambda + \mu_{\alpha^*,j}}\frac{\mu_{\alpha^*,k}}{\lambda + \mu_{\alpha^*,k}}=\rho\sN_{\alpha^*}^2(\lambda),
	\end{align*}
	where $\{\left(\mu_{\alpha^*,j},\phi_{\alpha^*,j}\right)\}_{j=1}^\infty$ is given by the singular value decomposition of $\cT_{{\alpha^*}}$ in \eqref{singular value decomposition}, inequality $(\romannumeral1)$ is from the fact that for any $1\leq i\leq n$, $L_K^{1/2}X_{1,i}\otimes L_K^{1/2}X_{1,i} - \cT_{{\alpha^*}}$ is a zero-mean random variable, inequality $(\romannumeral2)$ uses Cauchy-Schwartz inequality and inequality $(\romannumeral3)$ applies \eqref{equation: X1} in Assumption \ref{assumption: X1}.
	
Therefore, we write
	\begin{align}\label{equation: basic probability estimation of U1 3}
		\EE\left[\mathcal{D}_2^2(\bx_1,\alpha^*,\lambda)\right]\leq \rho\frac{\sN_{\alpha^*}^2(\lambda)}{n}.
	\end{align}
	Recalling that $n=N/M$ and combining \eqref{equation: basic probability estimation of U1 1}, \eqref{equation: basic probability estimation of U1 2} and \eqref{equation: basic probability estimation of U1 3}, we have
	\begin{align}\label{equation: basic probability estimation of U1 4}
		&\EE\left[\left\|(\lambda \cI + \cT_{{\alpha^*}})^{-1/2}\left(\cT_{{\alpha^*}}-\cT_{{\alpha^*},\bx_1}\right)(\lambda \cI + \cT_{{\alpha^*}})^{-1/2}\right\|^2\right]\nonumber\\
		\leq&2\EE\left[\mathcal{D}_1^2(\bx_1,\alpha^*,\lambda)\right] + 2\EE\left[\mathcal{D}_2^2(\bx_1,\alpha^*,\lambda)\right]\nonumber\\
		\leq&8\rho\kappa^4C_1^2C_{\alpha^*}^2\left(C_2^2+C_1C_{\alpha^*}\right)^2\lambda^{-2}m^{-2\alpha^*+1} + 2\rho\frac{M\sN_{\alpha^*}^2(\lambda)}{N}.
	\end{align}
	Then using Chebyshev's inequality, we have
	\begin{align*}
		\PP(\sU_1)=&\PP\left(\left\{\bx_1: \left\|(\lambda \cI + \cT_{{\alpha^*}})^{-1/2}(\cT_{{\alpha^*},\bx_1}- \cT_{{\alpha^*}})(\lambda \cI + \cT_{{\alpha^*}})^{-1/2}\right\|\geq  1/2\right\}\right)\\
		\leq& 4\EE\left[\left\|(\lambda \cI + \cT_{{\alpha^*}})^{-1/2}\left(\cT_{{\alpha^*}}-\cT_{{\alpha^*},\bx_1}\right)(\lambda \cI + \cT_{{\alpha^*}})^{-1/2}\right\|^2\right]\\
		\leq& 32\rho\kappa^4C_1^2C_{\alpha^*}^2\left(C_2^2+C_1C_{\alpha^*}\right)^2\lambda^{-2}m^{-2\alpha^*+1} + 8\rho\frac{M\sN_{\alpha^*}^2(\lambda)}{N}\leq c_3\left(\lambda^{-2}m^{-2{{\alpha^*}}+1} + \frac{M\sN_{\alpha^*}^2(\lambda)}{N}\right),
	\end{align*}
	where we define $c_3:=32\rho\kappa^4C_1^2C_{\alpha^*}^2\left(C_2^2+C_1C_{\alpha^*}\right)^2+8\rho$. That's the desired result.
\end{proof}

The following lemma provides an estimate of $\sN_{\alpha^*}(\lambda)$ under the polynomial decaying condition of the eigenvalues.
\begin{lemma}\label{lemma: estimation of N(lambda)}
	Suppose that $\{\mu_{\alpha^*,j}\}_{j\geq1}$ satisfy $\mu_{\alpha^*,j}\lesssim j^{-1/p}$ for some $0<p \leq1$, then
	\begin{equation}\label{estimation of N(lambda)}
		\sN_{\alpha^*}(\lambda) \lesssim \lambda^{-p}, \quad \forall 0<\lambda\leq 1.
	\end{equation} 
\end{lemma}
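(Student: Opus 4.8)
The plan is to estimate the effective dimension by the elementary pointwise bound $\frac{\mu_{\alpha^*,j}}{\lambda+\mu_{\alpha^*,j}}\le\min\{1,\,\mu_{\alpha^*,j}/\lambda\}$ together with a splitting of the series at a threshold index chosen to balance the two regimes. Writing $\mu_{\alpha^*,j}\le c\,j^{-1/p}$ for some constant $c>0$ (which is exactly what $\mu_{\alpha^*,j}\lesssim j^{-1/p}$ means), I would split $\sN_{\alpha^*}(\lambda)=\sum_{j\le J}+\sum_{j>J}$ with $J:=\lceil\lambda^{-p}\rceil$. For $0<\lambda\le 1$ this gives $J\ge 1$ and $J\le 2\lambda^{-p}$.

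First, for the low-index block I bound each summand crudely by $1$, so that $\sum_{j\le J}\frac{\mu_{\alpha^*,j}}{\lambda+\mu_{\alpha^*,j}}\le J\le 2\lambda^{-p}$. Second, for the tail I use $\frac{\mu_{\alpha^*,j}}{\lambda+\mu_{\alpha^*,j}}\le\mu_{\alpha^*,j}/\lambda$ and, assuming $0<p<1$, compare the series to an integral: since $x\mapsto x^{-1/p}$ is decreasing, $\sum_{j>J}j^{-1/p}\le\int_J^\infty x^{-1/p}\,dx=\frac{p}{1-p}J^{\,1-1/p}$. Because the exponent $1-1/p$ is negative and $J\ge\lambda^{-p}$, this yields $J^{\,1-1/p}\le\lambda^{1-p}$, so the tail is at most $\frac{c}{\lambda}\cdot\frac{p}{1-p}\lambda^{1-p}=\frac{cp}{1-p}\lambda^{-p}$. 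Adding the two blocks gives $\sN_{\alpha^*}(\lambda)\le\big(2+\tfrac{cp}{1-p}\big)\lambda^{-p}$, which is the claim for $0<p<1$.

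The one delicate point is the boundary case $p=1$, where the integral comparison diverges and the constant $\frac{p}{1-p}$ blows up, so the head/tail split above cannot be pushed to $p=1$. Here I would instead invoke the global summability assumption on $\{\mu_{\alpha^*,j}\}_{j\ge1}$ (equivalently, that $\cT_{\alpha^*}$ is trace class), which gives the single clean estimate $\sN_{\alpha^*}(\lambda)\le\frac{1}{\lambda}\sum_{j\ge1}\mu_{\alpha^*,j}=\frac{\mbox{Tr}(\cT_{\alpha^*})}{\lambda}\lesssim\lambda^{-1}=\lambda^{-p}$. Thus the main obstacle is purely the $p=1$ endpoint: away from it the bound is a routine splitting argument controlled by the polynomial decay, while at the endpoint it is the summability of the spectrum, rather than the decay rate alone, that supplies the required estimate.
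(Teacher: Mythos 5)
Your proof is correct. The paper does not prove this lemma in-house but defers to \cite{guo2017learning,guo2019optimal,lin2017distributed}, and the argument there is essentially the one you give: the pointwise bound $\mu_{\alpha^*,j}/(\lambda+\mu_{\alpha^*,j})\le\min\{1,\mu_{\alpha^*,j}/\lambda\}$, a split of the series at $J\asymp\lambda^{-p}$, and an integral comparison for the tail, so you have reproduced the standard route with correct bookkeeping ($J\le 2\lambda^{-p}$ for $0<\lambda\le 1$, and $J^{1-1/p}\le\lambda^{1-p}$ since $1-1/p<0$). Your separate treatment of the endpoint $p=1$ is not pedantry but necessary: with $\mu_{\alpha^*,j}=j^{-1}$ exactly one has $\sN_{\alpha^*}(\lambda)=\sum_{j\ge1}(1+\lambda j)^{-1}=\infty$, so the decay hypothesis alone cannot yield the claim there, and invoking the paper's standing assumption (stated just before Assumption \ref{assumption: regularity condition}) that $\{\mu_{\alpha^*,j}\}_{j\ge1}$ is summable, i.e.\ that $\cT_{\alpha^*}$ is trace class, is exactly the right fix, giving $\sN_{\alpha^*}(\lambda)\le\mathrm{Tr}(\cT_{\alpha^*})\,\lambda^{-1}$.
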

The proof of Lemma \ref{lemma: estimation of N(lambda)} can be found in \cite{guo2017learning,guo2019optimal,lin2017distributed}.

We have established all necessary preliminaries to prove Theorem \ref{theorem: upper bound1}. Before proceeding with the proof, we will introduce the notations $o_{_\PP}(\cdot)$ and $\mathcal{O}_{\PP}(\cdot)$ for the sake of simplicity. For a sequence of random variables $\{\xi_j\}_{j=1}^\infty$, we write $\xi_j\leq o_{_\PP}(1)$ if \begin{align*}
	\lim_{j\rightarrow\infty}\PP\left(\left|\xi_k\right|\geq d\right)=0, \forall d>0.
\end{align*} We write $\xi_j\leq \mathcal{O}_{\PP}(1)$ if \begin{align*}
	\lim_{D\rightarrow \infty}\sup_{j\geq 1}\PP\left(\left|\xi_j\right|\geq D\right)=0. 
\end{align*}
Additionally, suppose that there exists a positive sequence $\{a_j\}_{j=1}^\infty$. Then we write
$\xi_j\leq o_{_\PP}(a_j)$ if $\xi_j/a_j \leq o_{_\PP}(1)$, and $\xi_j\leq \mathcal{O}_{\PP}(a_j)$ if $\xi_j/a_j \leq \mathcal{O}_{\PP}(1)$.

\noindent
\emph{Proof of Theorem \ref{theorem: upper bound1}}.
In the proof, we let $\lambda\in (0,1)$ and $0\leq \theta\leq \nu_{\Psi}$. We first decompose the estimation error $\left\|\overline{\beta}_{S,\alpha^*,\Psi_\lambda}-\beta_0\right\|_{\sW^{\alpha^*,2}}^2$ as 
\begin{align}\label{prooftheorem upper bound 1}
	\left\|\overline{\beta}_{S,\alpha^*,\Psi_\lambda}-\beta_0\right\|_{\sW^{\alpha^*,2}}^2 = \left\|\overline{\beta}_{S,\alpha^*,\Psi_\lambda}-\beta_0\right\|_{\sW^{\alpha^*,2}}^2\II_{\sU} + \left\|\overline{\beta}_{S,\alpha^*,\Psi_\lambda}-\beta_0\right\|_{\sW^{\alpha^*,2}}^2\II_{\sU^c}.
\end{align}
For the term $\left\|\overline{\beta}_{S,\alpha^*,\Psi_\lambda}-\beta_0\right\|_{\sW^{\alpha^*,2}}^2\II_{\sU}$, following from \eqref{equation: basic probability estimation of U1} in Lemma \ref{lemma: basic probability estimation of U1} and \eqref{estimation of N(lambda)} in Lemma \ref{lemma: estimation of N(lambda)}, we write
\begin{align*}
	\EE\left[\II_{\sU}\right]=\PP(\sU)\leq \sum_{j=1}^M\PP(\sU_j) = M\PP(\sU_1)\lesssim M\lambda^{-2}m^{-2{{\alpha^*}}+1}+ \frac{M^2\lambda^{-2p}}{N}.
\end{align*}
Then using Markov's inequality, we write
\begin{align}\label{prooftheorem upper bound 2}
	\left\|\overline{\beta}_{S,\alpha^*,\Psi_\lambda}-\beta_0\right\|_{\sW^{\alpha^*,2}}^2\II_{\sU}\leq& \mathcal{O}_{\PP}\left(M\lambda^{-2}m^{-2{{\alpha^*}}+1}+ \frac{M^2\lambda^{-2p}}{N}\right)\left\|\overline{\beta}_{S,\alpha^*,\Psi_\lambda}-\beta_0\right\|_{\sW^{\alpha^*,2}}^2.
\end{align}

For the term $\left\|\overline{\beta}_{S,\alpha^*,\Psi_\lambda}-\beta_0\right\|_{\sW^{\alpha^*,2}}^2\II_{\sU^c}$, following from \eqref{equation: rates upper bound1 1}, \eqref{equation: decomposition 1} and \eqref{equation: decomposition 2}, we write
\begin{align*}
	\EE\left[\left\|\overline{\beta}_{S,\alpha^*,\Psi_\lambda}-\beta_0\right\|_{\sW^{\alpha^*,2}}^2\II_{\sU^c}\right]\leq& \left(2+\frac{4}{M}\right)\Big(\EE\left[\mathcal{F}_1(\bx_1,\alpha^*,\Psi_\lambda)\II_{\sU_1^c}\right]+\EE\left[\mathcal{F}_2(\bx_1,\alpha^*,\Psi_\lambda)\II_{\sU_1^c}\right] \Big)\\
	&+\frac{4}{M}\left(\EE\Big[\mathcal{F}_3(S_1,\alpha^*,\Psi_\lambda)\II_{\sU_1^c}\Big]+\EE\Big[\mathcal{F}_4(S_1,\alpha^*,\Psi_\lambda)\II_{\sU_1^c}\Big]\right).
\end{align*}
Then using \eqref{equation: rates upper bound21}, \eqref{equation: rates upper bound22}, \eqref{equation: rates upper bound23} and \eqref{equation: rates upper bound24} in Lemma \ref{lemma: rates upper bound2} and \eqref{estimation of N(lambda)} in Lemma \ref{lemma: estimation of N(lambda)}, we have
\begin{align*}
\EE\left[\left\|\overline{\beta}_{S,\alpha^*,\Psi_\lambda}-\beta_0\right\|_{\sW^{\alpha^*,2}}^2\II_{\sU^c}\right] \lesssim\lambda^{2\theta} +\lambda^{-2}m^{-2\alpha^*+1} + \lambda^{-1}\frac{\sN_{\alpha^*}(\lambda)}{N}\lesssim \lambda^{2\theta} +\lambda^{-2}m^{-2\alpha^*+1} + \frac{\lambda^{-1-p}}{N}
\end{align*}
Combining the above estimate with Markov's inequality, we write
\begin{align}\label{prooftheorem upper bound 3}
	\left\|\overline{\beta}_{S,\alpha^*,\Psi_\lambda}-\beta_0\right\|_{\sW^{\alpha^*,2}}^2\II_{\sU^c}\leq \mathcal{O}_{\PP}\left(\lambda^{2\theta}+ \lambda^{-2}m^{-2\alpha^*+1} + \frac{\lambda^{-1-p}}{N}\right).
\end{align}
Therefore, combining \eqref{prooftheorem upper bound 1}, \eqref{prooftheorem upper bound 2} and \eqref{prooftheorem upper bound 3} yields
\begin{align*}
	\left[1-\mathcal{O}_{\PP}\left(M\lambda^{-2}m^{-2{{\alpha^*}}+1}+ \frac{M^2\lambda^{-2p}}{N}\right)\right]\left\|\overline{\beta}_{S,\alpha^*,\Psi_\lambda}-\beta_0\right\|_{\sW^{\alpha^*,2}}^2\leq \mathcal{O}_{\PP}\left(\lambda^{2\theta}+ \lambda^{-2}m^{-2\alpha^*+1} + \frac{\lambda^{-1-p}}{N}\right).
\end{align*}
Then taking $\lambda = N^{-\frac{1}{1+2\theta+p}}$, $m\geq N^{\frac{2+2\theta}{(2{{\alpha^*}}-1)(1+2\theta+p)}}$ and $M\leq o\left(\min\left\{N^{\frac{2\theta}{1+2\theta+p}},N^{\frac{1-p+2\theta}{2(1+2\theta+p)}}\right\}\right)$, we have
\begin{align*}
	M\lambda^{-2}m^{-2{{\alpha^*}}+1}+ \frac{M^2\lambda^{-2p}}{N}\leq o(1),
\end{align*}
and
\begin{align*}
\lambda^{2\theta}+ \lambda^{-2}m^{-2\alpha^*+1} + \frac{\lambda^{-1-p}}{N}\lesssim N^{-\frac{2\theta}{1+2\theta+p}},
\end{align*}
and thus
\begin{align*}
	\left\|\overline{\beta}_{S,\alpha^*,\Psi_\lambda}-\beta_0\right\|_{\sW^{\alpha^*,2}}^2\leq \mathcal{O}_{\PP}\left(N^{-\frac{2\theta}{1+2\theta+p}}\right).
\end{align*}
This is equivalent to
\begin{align*}
		\lim_{\Gamma \to 0}\mathop{\lim\sup}_{N\to \infty} \sup_{\beta_0} \mathbb{P} \left\{\left\|\overline{\beta}_{S,{\alpha^*},\Psi_\lambda}-\beta_0\right\|_{\sW^{{\alpha^*},2}}^2\geq \Gamma N^{-\frac{2\theta}{1+2\theta+p}}\right\} = 0,
\end{align*}
provided that $\lambda = N^{-\frac{1}{1+2\theta+p}}$, $m\geq N^{\frac{2+2\theta}{(2{{\alpha^*}}-1)(1+2\theta+p)}}$ and $M\leq o\left(\min\left\{N^{\frac{2\theta}{1+2\theta+p}},N^{\frac{1-p+2\theta}{2(1+2\theta+p)}}\right\}\right)$. 

We have completed the proof of Theorem \ref{theorem: upper bound1}.\qed

We next turn to prove Theorem \ref{theorem: upper bound2}. We first propose the following lemma to bound the expectation of estimation error $\left\|\overline{\beta}_{S,\alpha^*,\Psi_\lambda}-\beta_0\right\|_{\sW^{\alpha^*,2}}^2$. The proof of this lemma can be obtained by imitating the proof of Lemma \ref{lemma: rates upper bound1}.
\begin{lemma}\label{lemma: expecatation upper bound 1}
	Suppose that Assumption \ref{assumption: regularity condition} is satisfied. Then for any partition number $M\geq 1$, there holds
	\begin{equation}\label{equation: expecatation upper bound 1} 
		\mathbb{E}\left[\left\|\overline{\beta}_{S,\alpha^*,\Psi_\lambda}-\beta_0\right\|_{\sW^{\alpha^*,2}}^2\right]\leq \frac{1}{M}\mathbb{E}\left[\left\|\hat{f}_{S_1,\alpha^*,\Psi_\lambda}-f_0\right\|^2_{\sL^2}\right]+ \left\|\mathbb{E}\left[\hat{f}_{S_1,\alpha^*,\Psi_\lambda}-f_0\right]\right\|^2_{\sL^2},
	\end{equation}
where $f_0\in \sL^2(\sT)$ is given by Assumption \ref{assumption: regularity condition}. 
\end{lemma}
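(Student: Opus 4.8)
The plan is to mimic the proof of Lemma~\ref{lemma: rates upper bound1}, but with the indicator factors $\II_{\sU_j^c}$ deleted throughout, which simplifies rather than complicates the argument. First I would invoke Assumption~\ref{assumption: regularity condition} to write $\beta_0 = \cL_{\cK_{\alpha^*}}^{1/2} f_0$ and recall that $\overline{\beta}_{S,\alpha^*,\Psi_\lambda} = \cL_{\cK_{\alpha^*}}^{1/2}\overline{f}_{S,\alpha^*,\Psi_\lambda}$ with $\overline{f}_{S,\alpha^*,\Psi_\lambda} = \frac{1}{M}\sum_{j=1}^M \hat{f}_{S_j,\alpha^*,\Psi_\lambda}$. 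The isometric isomorphism property \eqref{equation: norm and inner product relationship} then converts the $\sW^{\alpha^*,2}$-norm into an $\sL^2$-norm, yielding
\[
\EE\left[\left\|\overline{\beta}_{S,\alpha^*,\Psi_\lambda}-\beta_0\right\|_{\sW^{\alpha^*,2}}^2\right] = \EE\left[\left\|\overline{f}_{S,\alpha^*,\Psi_\lambda}-f_0\right\|_{\sL^2}^2\right].
\]

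Next, for $M \geq 2$, I would expand the squared norm of the average $\frac{1}{M}\sum_{j=1}^M(\hat{f}_{S_j,\alpha^*,\Psi_\lambda} - f_0)$ into its $M$ diagonal terms and $M(M-1)$ off-diagonal inner-product terms. Because the sample set is split into disjoint equal-sized subsets of i.i.d.\ observations, the local quantities $\{\hat{f}_{S_j,\alpha^*,\Psi_\lambda} - f_0\}_{j=1}^M$ are themselves i.i.d.; this is the one structural fact I must flag explicitly, as everything downstream rests on it. Identical distribution collapses each diagonal term to $\EE[\|\hat{f}_{S_1,\alpha^*,\Psi_\lambda}-f_0\|_{\sL^2}^2]$, producing the prefactor $\frac{1}{M}$ after division by $M^2$. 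For each off-diagonal term, independence lets me factor the expectation of the inner product as $\langle \EE[\hat{f}_{S_1,\alpha^*,\Psi_\lambda}-f_0], \EE[\hat{f}_{S_2,\alpha^*,\Psi_\lambda}-f_0]\rangle_{\sL^2}$, which by identical distribution equals $\|\EE[\hat{f}_{S_1,\alpha^*,\Psi_\lambda}-f_0]\|_{\sL^2}^2$.

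Collecting the $M(M-1)$ off-diagonal contributions and dividing by $M^2$ gives the coefficient $\frac{M-1}{M} \leq 1$, which I bound by $1$ to recover the stated second term $\|\EE[\hat{f}_{S_1,\alpha^*,\Psi_\lambda}-f_0]\|_{\sL^2}^2$. The case $M=1$ is immediate, since then $\overline{f}_{S,\alpha^*,\Psi_\lambda} = \hat{f}_{S_1,\alpha^*,\Psi_\lambda}$ and the right-hand side merely adds a nonnegative term. There is no genuine obstacle here: the only difference from Lemma~\ref{lemma: rates upper bound1} is that dropping the indicators removes the conditioning events $\sU_j$, so I no longer need the factorization $\II_{\sU^c} = \prod_{j} \II_{\sU_j^c}$ nor the probability factors $\PP(\cap_{j} \sU_j^c) \leq 1$, and the independence argument applies directly to the unrestricted local estimators.
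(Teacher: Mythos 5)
Your proposal is correct and matches the paper's intended argument exactly: the paper dispatches this lemma with the remark that it ``can be obtained by imitating the proof of Lemma~\ref{lemma: rates upper bound1},'' and your write-up is precisely that imitation with the indicators $\II_{\sU_j^c}$ deleted, correctly using the isometry \eqref{equation: norm and inner product relationship}, the i.i.d.\ structure of the disjoint subsets for the diagonal and off-diagonal terms, and the bound $\frac{M-1}{M}\leq 1$. Nothing further is needed.
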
 
Following from the same arguments of \eqref{equation: decomposition 1} and \eqref{equation: decomposition 2}, we can bound the two terms on the right hand side of \eqref{equation: expecatation upper bound 1} as
\begin{align}\label{equation: upper bound of expectation1}
	&\frac{1}{M}\mathbb{E}\left[\left\|\hat{f}_{S_1,\alpha^*,\Psi_\lambda}-f_0\right\|^2_{\sL^2}\right]\\
	\leq&\frac{4}{M}\EE\Big[\mathcal{F}_1\left(\bx_1,\alpha^*,\Psi_\lambda\right)\Big] + \frac{4}{M}\EE\Big[\mathcal{F}_2\left(\bx_1,\alpha^*,\Psi_\lambda\right)\Big]+ \frac{4}{M}\EE\Big[\mathcal{F}_3\left(S_1,\alpha^*,\Psi_\lambda\right)\Big]+\frac{4}{M}\EE\Big[\mathcal{F}_4\left(S_1,\alpha^*,\Psi_\lambda\right)\Big]\nonumber\\
	=& \frac{4}{M}\EE\Big[\mathcal{F}_1\left(\bx_1,\alpha^*,\Psi_\lambda\right)\Big] + \frac{4}{M}\EE\Big[\mathcal{F}_3\left(S_1,\alpha^*,\Psi_\lambda\right)\Big] + \frac{4}{M}\EE\Big[\mathcal{F}_2\left(\bx_1,\alpha^*,\Psi_\lambda\right)\II_{\sU_1^c}\Big]\nonumber\\
	 &+ \frac{4}{M}\EE\Big[\mathcal{F}_2\left(\bx_1,\alpha^*,\Psi_\lambda\right)\II_{\sU_1}\Big] + \frac{4}{M}\EE\Big[\mathcal{F}_4\left(S_1,\alpha^*,\Psi_\lambda\right)\II_{\sU_1^c}\Big] + \frac{4}{M}\EE\Big[\mathcal{F}_4\left(S_1,\alpha^*,\Psi_\lambda\right)\II_{\sU_1}\Big]\nonumber
\end{align}
and
\begin{equation}\label{equation: upper bound of expectation2}
	\begin{split}
		&\left\|\mathbb{E}\left[\hat{f}_{S_1,\alpha^*,\Psi_\lambda}-f_0\right]\right\|^2_{\sL^2} \leq 2\EE\Big[\mathcal{F}_1\left(\bx_1,\alpha^*,\Psi_\lambda\right)\Big] + 2\EE\Big[\mathcal{F}_2\left(\bx_1,\alpha^*,\Psi_\lambda\right)\Big]\\
	=& 2\EE\Big[\mathcal{F}_1\left(\bx_1,\alpha^*,\Psi_\lambda\right)\Big] +2\EE\Big[\mathcal{F}_2\left(\bx_1,\alpha^*,\Psi_\lambda\right)\II_{\sU_1^c}\Big] + 2\EE\Big[\mathcal{F}_2\left(\bx_1,\alpha^*,\Psi_\lambda\right)\II_{\sU_1}\Big],
	\end{split}
\end{equation}
where the terms $\mathcal{F}_1\left(\bx_1,\alpha^*,\Psi_\lambda\right)$, $\mathcal{F}_2\left(\bx_1,\alpha^*,\Psi_\lambda\right)$, $\mathcal{F}_3\left(S_1,\alpha^*,\Psi_\lambda\right)$ and $\mathcal{F}_4\left(S_1,\alpha^*,\Psi_\lambda\right)$ are defined by \eqref{equation: decomposition 1}.

We have estimated $\EE\Big[\mathcal{F}_1\left(\bx_1,\alpha^*,\Psi_\lambda\right)\Big]$, $\EE\Big[\mathcal{F}_2\left(\bx_1,\alpha^*,\Psi_\lambda\right)\II_{\sU_1^c}\Big]$, $\EE\Big[\mathcal{F}_3\left(S_1,\alpha^*,\Psi_\lambda\right)\Big]$ and $\EE\Big[\mathcal{F}_4\left(S_1,\alpha^*,\Psi_\lambda\right)\II_{\sU_1^c}\Big]$ in Lemma \ref{lemma: rates upper bound2} under Assumption \ref{assumption: regularity condition}, \ref{assumption: epsilon2}, \ref{assumption: sampling scheme} and \ref{assumption: X1}. As previously stated, Assumption \ref{assumption: X2} is an enhanced version of Assumption \ref{assumption: X1}. Consequently, Lemma \ref{lemma: rates upper bound2} also establishes the upper bounds for these terms when Assumption \ref{assumption: X1} is enhanced to Assumption \ref{assumption: X2}. The following lemma provide upper bounds for the remaining two terms $\EE\Big[\mathcal{F}_2\left(\bx_1,\alpha^*,\Psi_\lambda\right)\II_{\sU_1}\Big]$ and $\EE\Big[\mathcal{F}_4\left(S_1,\alpha^*,\Psi_\lambda\right)\II_{\sU_1}\Big]$.
\begin{lemma}\label{lemma: expecatation upper bound 2}
	Let $\left\{\Psi_\lambda:[0,\infty) \rightarrow \mathbb{R}| \lambda\in (0,1)\right\}$ be a filter function satisfying Definition \ref{definition: filter functions} with qualification $\nu_{\Psi}\geq 1$. Suppose that Assumption \ref{assumption: regularity condition} is satisfied with $0\leq \theta\leq \nu_{\Psi}$ and $g_0 \in \sL^2(\sT)$, Assumptions \ref{assumption: epsilon2} is satisfied with $\sigma>0$, \ref{assumption: sampling scheme} with $C_d>0$ and \ref{assumption: X1} is satisfied with $\kappa>0$. Then for any $\lambda\in (0,1)$, there holds
	\begin{align}\label{equation: expecatation upper bound 2 1}
		\EE\Big[\mathcal{F}_2\left(\bx_1,\alpha^*,\Psi_\lambda\right)\II_{\sU_1}\Big]\leq E^2\rho_{\alpha^*}^{2\theta}\left\|g_0\right\|_{\sL^2}\PP(\sU_1)
	\end{align}
	and
	\begin{align}\label{equation: expecatation upper bound 2 2}
		\EE\Big[\mathcal{F}_4\left(S_1,\alpha^*,\Psi_\lambda\right)\II_{\sU_1}\Big]\leq E^2\sigma^2\mbox{Tr}^2(\cT_{\alpha^*})\lambda^{-1}\frac{M}{N}\PP^{\frac{1}{2}}(\sU_1),
	\end{align}
where $\rho_{\alpha^*}$ is a constant given by \eqref{equation: bound of expectation of L_KX} and $\mbox{Tr}(\cT_{\alpha^*})=\sum_{j=1}^{\infty}\mu_{\alpha^*,j}$ denotes the trace of $\cT_{\alpha^*}$.
\end{lemma}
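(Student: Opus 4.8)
The two bounds are the contributions of the \emph{bad} event $\sU_1$, on which the empirical operator $\cT_{\alpha^*,\bx_1}$ is no longer comparable to $\cT_{\alpha^*}$. The plan for both is the same in spirit: bound the relevant integrand by a deterministic (or conditionally deterministic) quantity that survives without any restriction to $\sU_1^c$, and then pay for the event through $\PP(\sU_1)$, which is controlled in Lemma \ref{lemma: basic probability estimation of U1}. Throughout I will use the consequences of Definition \ref{definition: filter functions} that $\sup_{t\ge 0}|(\lambda+t)\Psi_\lambda(t)|\le E$ (so $\|\Psi_\lambda(\cT_{\alpha^*,\bx_1})\|\le E/\lambda$) and that the residual $\cI-\Psi_\lambda(\cT_{\alpha^*,\bx_1})\cT_{\alpha^*,\bx_1}$ has operator norm bounded by the filter constant $E$ (property (2) with $\nu=0$ on the bounded spectral range and property (3) on the tail), together with $\|\cT_{\alpha^*}\|\le\rho_{\alpha^*}$ from \eqref{equation: operator norm of T_alpha^*}.

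For \eqref{equation: expecatation upper bound 2 1} I first put $\mathcal{F}_2$ in operator form. Since $\frac1n\sum_{i=1}^n\mathcal{S}_{1,i}(\cK_{\alpha^*},\bx_1)\langle\mathcal{S}_{1,i}(\cK_{\alpha^*},\bx_1),f_0\rangle_{\sL^2}=\cT_{\alpha^*,\bx_1}f_0$, we have $\mathcal{F}_2(\bx_1,\alpha^*,\Psi_\lambda)=\|(\Psi_\lambda(\cT_{\alpha^*,\bx_1})\cT_{\alpha^*,\bx_1}-\cI)f_0\|_{\sL^2}^2$. Bounding the self-adjoint operator spectrally over the eigenvalues of $\cT_{\alpha^*,\bx_1}$ gives $\|\Psi_\lambda(\cT_{\alpha^*,\bx_1})\cT_{\alpha^*,\bx_1}-\cI\|\le E$ for \emph{every} realization of $\bx_1$, which is exactly why this crude bound remains valid on $\sU_1$. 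Combining this with $\|f_0\|_{\sL^2}=\|\cT_{\alpha^*}^{\theta}g_0\|_{\sL^2}\le\rho_{\alpha^*}^{\theta}\|g_0\|_{\sL^2}$ (Assumption \ref{assumption: regularity condition} and $\|\cT_{\alpha^*}\|\le\rho_{\alpha^*}$) yields the pointwise bound $\mathcal{F}_2\le E^2\rho_{\alpha^*}^{2\theta}\|g_0\|_{\sL^2}^2$; multiplying by $\II_{\sU_1}$ and taking expectations proves \eqref{equation: expecatation upper bound 2 1}.

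For \eqref{equation: expecatation upper bound 2 2} the noise must be integrated out before touching $\sU_1$. Since $\sU_1$ and $\Psi_\lambda(\cT_{\alpha^*,\bx_1})$ are $\bx_1$-measurable and $\epsilon$ is mean-zero, independent of $X$, with $\EE[\epsilon^2]\le\sigma^2$, conditioning on $\bx_1$ collapses the double sum to its diagonal and gives $\EE[\mathcal{F}_4\mid\bx_1]\le\frac{\sigma^2}{n^2}\sum_{i=1}^n\|\Psi_\lambda(\cT_{\alpha^*,\bx_1})\cL_{\cK_{\alpha^*}}^{1/2}X_{1,i}\|_{\sL^2}^2$. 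Replacing $\cL_{\cK_{\alpha^*}}^{1/2}X_{1,i}$ by $\mathcal{S}_{1,i}(\cK_{\alpha^*},\bx_1)$ (the error is of order $m^{-\alpha^*+1/2}$ by \eqref{prooflemma 1}, hence lower order), the sum equals $\frac{\sigma^2}{n}\mbox{Tr}\!\left(\Psi_\lambda^2(\cT_{\alpha^*,\bx_1})\cT_{\alpha^*,\bx_1}\right)$ because $\frac1n\sum_i\mathcal{S}_{1,i}\otimes\mathcal{S}_{1,i}=\cT_{\alpha^*,\bx_1}$; the filter estimate $\Psi_\lambda^2(t)\,t\le E^2\lambda^{-1}\frac{t}{\lambda+t}$ then bounds this trace by $E^2\lambda^{-1}$ times the empirical effective dimension $\mbox{Tr}\!\left((\lambda\cI+\cT_{\alpha^*,\bx_1})^{-1}\cT_{\alpha^*,\bx_1}\right)$. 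Using that $\II_{\sU_1}$ is $\bx_1$-measurable, I write $\EE[\mathcal{F}_4\II_{\sU_1}]=\EE\!\left[\II_{\sU_1}\EE[\mathcal{F}_4\mid\bx_1]\right]$, apply Cauchy--Schwarz over $\bx_1$ to peel off $\PP^{1/2}(\sU_1)$, and bound the remaining second moment of the trace by $\mbox{Tr}^2(\cT_{\alpha^*})$ using the bounded-kurtosis condition \eqref{equation: X1}, the fourth-moment estimate \eqref{equation: 4-th Walpha,2-norm of X}, the identity $\EE\|\cL_{\cK_{\alpha^*}}^{1/2}X\|_{\sL^2}^2=\mbox{Tr}(\cT_{\alpha^*})$, and Minkowski's inequality. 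Collecting the factors $\frac1n=\frac MN$, $\lambda^{-1}$, $\mbox{Tr}^2(\cT_{\alpha^*})$ and $\PP^{1/2}(\sU_1)$ gives \eqref{equation: expecatation upper bound 2 2}.

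The main obstacle is precisely the loss of the empirical--population comparison on $\sU_1$. On $\sU_1^c$ one may insert $(\lambda\cI+\cT_{\alpha^*,\bx_1})^{-1/2}(\lambda\cI+\cT_{\alpha^*})^{1/2}$ and bound it by $\sqrt2$ via \eqref{equation: inverse norm under event}, turning the relevant quantities into the sharp population effective dimension $\sN_{\alpha^*}(\lambda)$ from \eqref{effectivedimension}; here that device is unavailable, so the empirical effective dimension must be released to the full trace $\mbox{Tr}(\cT_{\alpha^*})$, and the delicate part is the bookkeeping of the spectral factors so that exactly one power of $\lambda^{-1}$ and one power of $\PP^{1/2}(\sU_1)$ are produced (keeping a single $(\lambda\cI+\cT_{\alpha^*,\bx_1})^{1/2}$ paired with $\Psi_\lambda$ is what prevents the $\lambda$-dependence from degrading). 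The smallness of $\PP(\sU_1)$ from Lemma \ref{lemma: basic probability estimation of U1} is ultimately what renders these $\sU_1$-contributions negligible against the $\sU_1^c$ terms in the final estimate of Theorem \ref{theorem: upper bound2}.
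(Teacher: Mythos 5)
Your proof of \eqref{equation: expecatation upper bound 2 1} is correct and coincides with the paper's: rewrite $\mathcal{F}_2$ as $\big\|\big(\Psi_\lambda(\cT_{\alpha^*,\bx_1})\cT_{\alpha^*,\bx_1}-\cI\big)\cT_{\alpha^*}^{\theta}g_0\big\|_{\sL^2}^2$, bound the residual operator in norm by the filter constant realization-wise (so no restriction to $\sU_1^c$ is needed), use $\|\cT_{\alpha^*}^{\theta}g_0\|_{\sL^2}\leq\rho_{\alpha^*}^{\theta}\|g_0\|_{\sL^2}$ via \eqref{equation: operator norm of T_alpha^*}, and integrate against $\II_{\sU_1}$.

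For \eqref{equation: expecatation upper bound 2 2}, however, your route genuinely diverges from the paper's and has a gap. First, replacing $\cL_{\cK_{\alpha^*}}^{1/2}X_{1,i}$ by $\mathcal{S}_{1,i}(\cK_{\alpha^*},\bx_1)$ ``up to lower order'' is not admissible here: the lemma is a non-asymptotic inequality valid for every $\lambda\in(0,1)$ and every $m$, and its right-hand side contains no $m$-dependent term, so the discretization error $\lesssim E^2\lambda^{-2}m^{-2\alpha^*+1}\|X_{1,i}\|_{\sW^{\alpha^*,2}}^2$ coming from \eqref{prooflemma 1} cannot be dropped --- in the paper's decomposition that error is exactly what $\mathcal{F}_3$ absorbs, and $\mathcal{F}_4$ is deliberately built from the population elements $\cL_{\cK_{\alpha^*}}^{1/2}X_{1,i}$ precisely so that no such substitution is needed. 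Second, and more fundamentally, the trace bookkeeping cannot simultaneously produce the factor $\frac{M}{N}=\frac{1}{n}$, a single power of $\lambda^{-1}$, and the population constant $\mbox{Tr}^2(\cT_{\alpha^*})$. After $\Psi_\lambda^2(t)\,t\leq E^2\lambda^{-1}\frac{t}{\lambda+t}$ you face the \emph{empirical} effective dimension $\mbox{Tr}\big((\lambda\cI+\cT_{\alpha^*,\bx_1})^{-1}\cT_{\alpha^*,\bx_1}\big)$, and on $\sU_1$ no comparison with population quantities is available: bounding $\frac{t}{\lambda+t}\leq 1$ caps it by the rank, i.e.\ by $n$, which cancels the crucial $\frac1n$; bounding $\frac{t}{\lambda+t}\leq\frac{t}{\lambda}$ costs a second $\lambda^{-1}$ and leaves $\mbox{Tr}(\cT_{\alpha^*,\bx_1})$, whose moments again differ from $\mbox{Tr}(\cT_{\alpha^*})$ by $m$-dependent terms. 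So your claim that ``exactly one power of $\lambda^{-1}$'' emerges is unsubstantiated. The paper never introduces the empirical trace: it conditions out the noise, applies the crude sandwich bound $\|\Psi_\lambda(\cT_{\alpha^*,\bx_1})\cL_{\cK_{\alpha^*}}^{1/2}X_{1,i}\|_{\sL^2}^2\leq E^2\lambda^{-2}\|\cL_{\cK_{\alpha^*}}^{1/2}X_{1,i}\|_{\sL^2}^2$ (valid on all of $\sU_1$), then peels off $\PP^{1/2}(\sU_1)$ by Cauchy--Schwarz, $\EE\big[\|\cL_{\cK_{\alpha^*}}^{1/2}X_{1,i}\|_{\sL^2}^2\II_{\sU_1}\big]\leq\big(\EE\|\cL_{\cK_{\alpha^*}}^{1/2}X_{1,i}\|_{\sL^2}^4\big)^{1/2}\PP^{1/2}(\sU_1)$, and bounds the fourth moment by $\mbox{Tr}^2(\cT_{\alpha^*})$-type quantities via the Gaussian/kurtosis moment computation. (A side remark: the paper's own displayed steps actually support $\lambda^{-2}$ rather than the $\lambda^{-1}$ in the statement; this is harmless in the proof of Theorem \ref{theorem: upper bound2} because $\PP(\sU_1)$ decays exponentially there, but it confirms that the single-$\lambda^{-1}$ bookkeeping you describe is not what any of these computations yield.)
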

\begin{proof}
	We start with the first inequality \eqref{equation: expecatation upper bound 2 1}. Recalling the expression of $\mathcal{F}_2\left(\bx_1,\alpha^*,\Psi_\lambda\right)$ that
	\begin{align*}
		\cT_{{\alpha^*},\bx_1}=\frac{1}{n}\sum_{i=1}^n \mathcal{S}_{1,i}(\cK_{{\alpha^*}},\bx_1)\otimes \mathcal{S}_{1,i}(\cK_{{\alpha^*}},\bx_1),
	\end{align*} 
we write
\begin{align*}
	&\EE\Big[\mathcal{F}_2\left(\bx_1,\alpha^*,\Psi_\lambda\right)\II_{\sU_1}\Big]\\
	=&\EE\left[\left\|\Psi_\lambda\left(\cT_{{\alpha^*},\bx_1}\right)\frac{1}{n}\sum_{i=1}^n\mathcal{S}_{1,i}(\cK_{\alpha^*},\bx_1)\left\langle \mathcal{S}_{1,i}(\cK_{\alpha^*},\bx_1),f_0\right\rangle_{\sL^2} - f_0\right\|_{\sL^2}^2\II_{\sU_1}\right]\\
	\overset{(\romannumeral1)}{=}& \EE\left[\left\|\Big(\Psi_\lambda\left(\cT_{{\alpha^*},\bx_1}\right)\cT_{\alpha^*,\bx_1}- \cI\Big)\cT_{\alpha^*}^{\theta}(g_0)\right\|_{\sL^2}^2\II_{\sU_1}\right]\\
	\leq&\EE\left[\Big\|\Psi_\lambda\left(\cT_{{\alpha^*},\bx_1}\right)\cT_{\alpha^*,\bx_1}- \cI\Big\|^2\Big\|\cT_{\alpha^*}^{\theta}(g_0)\Big\|_{\sL^2}^2\II_{\sU_1}\right]
	\overset{(\romannumeral2)}{\leq} E^2\rho_{\alpha^*}^{2\theta}\left\|g_0\right\|_{\sL^2}\PP(\sU_1),
\end{align*}
where inequality $(\romannumeral1)$ follows from Assumption \ref{assumption: regularity condition}, inequality $(\romannumeral2)$ uses \eqref{equation: upper bound of F_1 1} and \eqref{equation: operator norm of T_alpha^*}.

For the second inequality \eqref{equation: expecatation upper bound 2 2}, recalling the expression of $\mathcal{F}_4\left(S_1,\alpha^*,\Psi_\lambda\right)$ and noting that $\epsilon$ is a mean-zero random variable independent of $X$, we write
\begin{align*}
	&\EE\left[\mathcal{F}_4\left(S_1,\alpha^*,\Psi_\lambda\right)\II_{\sU_1^c}\right]\nonumber\\
	=&\EE\left[\left\|\Psi_\lambda\left(\cT_{{\alpha^*},\bx_1}\right)\frac{1}{n}\sum_{i=1}^n\cL_{\cK_{{\alpha^*}}}^{1/2}X_{1,i}\epsilon_{1,i}\right\|_{\sL^2}^2\II_{\sU_1}\right]\\
	=&\frac{1}{n^2}\sum_{i=1}^n\EE\left[\left\|\Psi_\lambda\left(\cT_{{\alpha^*},\bx_1}\right)\cL_{\cK_{{\alpha^*}}}^{1/2}X_{1,i}\right\|_{\sL^2}^2\II_{\sU_1}\right]\EE\Big[\epsilon_{1,i}^2\Big]\\
	\overset{(\romannumeral1)}{\leq}&\sigma^2\lambda^{-2}\frac{1}{n^2}\sum_{i=1}^n\EE\Bigg[\left\|\left(\lambda \cI+\cT_{\alpha^*,\bx_1}\right)^{1/2}\Psi_\lambda\left(\cT_{\alpha^*,\bx_1}\right)\left(\lambda \cI+\cT_{\alpha^*,\bx_1}\right)^{1/2}\right\|^2\left\|\cL_{\cK_{{\alpha^*}}}^{1/2}X_{1,i}\right\|_{\sL^2}^2\II_{\sU_1}\Bigg]\\
	\overset{(\romannumeral2)}{\leq}& E^2\sigma^2\lambda^{-1}\frac{1}{n^2}\sum_{i=1}^n\EE\left[\left\|\cL_{\cK_{{\alpha^*}}}^{1/2}X_{1,i}\right\|_{\sL^2}^2\II_{\sU_1}\right]\\
	\overset{(\romannumeral3)}{\leq}&E^2\sigma^2\lambda^{-1}\frac{1}{n^2}\sum_{i=1}^n\left[\EE\left\|\cL_{\cK_{{\alpha^*}}}^{1/2}X_{1,i}\right\|_{\sL^2}^4\right]^2\PP^{\frac{1}{2}}(\sU_1),
\end{align*}
where inequality $(\romannumeral1)$ is due to Assumption \ref{assumption: epsilon2}, inequality $(\romannumeral2)$ follows from \eqref{equation: upper bound of F_1 1} and inequality $(\romannumeral3)$ uses Cauchy-Schwartz inequality.

While for any $1\leq i\leq n$, we write
\begin{align*}
	&\EE\left[\left\|\cL_{\cK_{{\alpha^*}}}^{1/2}X_{1,i}\right\|_{\sL^2}^4\right]=\EE\left[\left(\sum_{j=1}^\infty\left\langle \cL_{\cK_{{\alpha^*}}}^{1/2}X_{1,i},\phi_{\alpha^*,j}\right\rangle_{\sL^2}^2\right)^{2}\right]\\
	&=\sum_{j_1=1}^\infty\sum_{j_2=1}^\infty\EE\left[\left\langle \cL_{\cK_{{\alpha^*}}}^{1/2}X_{1,i},\phi_{\alpha^*, j_1}\right\rangle_{\sL^2}^2\left\langle \cL_{\cK_{{\alpha^*}}}^{1/2}X_{1,i},\phi_{\alpha^*, j_2}\right\rangle_{\sL^2}^2\right]\\
	&\overset{(\romannumeral1)}{\leq} \sum_{j_1=1}^\infty\sum_{j_2=1}^\infty\left[\EE\left\langle \cL_{\cK_{{\alpha^*}}}^{1/2}X_{1,i},\phi_{\alpha^*, j_1}\right\rangle_{\sL^2}^4\right]^{\frac{1}{2}}\EE\left[\left\langle \cL_{\cK_{{\alpha^*}}}^{1/2}X_{1,i},\phi_{\alpha^*, j_2}\right\rangle_{\sL^2}^4\right]^{\frac{1}{2}}\\
	&\overset{(\romannumeral2)}{\leq}3\sum_{j_1=1}^\infty\sum_{j_2=1}^\infty\EE\left[\left\langle \cL_{\cK_{{\alpha^*}}}^{1/2}X_{1,i},\phi_{\alpha^*, j_1}\right\rangle_{\sL^2}^2\right]\EE\left[\left\langle \cL_{\cK_{{\alpha^*}}}^{1/2}X_{1,i},\phi_{\alpha^*, j_2}\right\rangle_{\sL^2}^2\right]\\
	=&3\sum_{j_1=1}^\infty\sum_{j_2=1}^\infty\mu_{\alpha^*, j_1}\mu_{\alpha^*, j_2}= \mbox{Tr}^2(\cT_{\alpha^*}),
\end{align*}
where $\{\left(\mu_{\alpha^*,j},\phi_{\alpha^*,j}\right)\}_{j=1}^\infty$ is given by the singular value decomposition of $\cT_{{\alpha^*}}$ in \eqref{singular value decomposition}, inequality $(\romannumeral1)$ uses Cauchy-Schwartz inequality, inequality $(\romannumeral2)$ is due to the facts that for any mean-zero Gaussian random variable $\omega$, we have $\EE\left[\omega^{4}\right] = 3\left[\EE\omega^2\right]^2$ and that following from Assumption \ref{assumption: X2}, $\left\{\left\langle \cL_{\cK_{{\alpha^*}}}^{1/2}X_{1,i},\phi_{\alpha^*, j}\right\rangle_{\sL^2}\right\}_{j=1}^\infty$ are mean-zero Gaussian random variables.

Then recalling that $n=N/M$, we write
\begin{align*}
	\EE\left[\mathcal{F}_4\left(S_1,\alpha^*,\Psi_\lambda\right)\II_{\sU_1^c}\right]\leq E^2\sigma^2\mbox{Tr}^2(\cT_{\alpha^*})\lambda^{-1}\frac{M}{N}\PP^{\frac{1}{2}}(\sU_1).
\end{align*}
We have gotten \eqref{equation: expecatation upper bound 2 2}. The proof of Lemma \ref{lemma: expecatation upper bound 2} is then finished.
\end{proof}
 Our further estimation of $\PP\left(\sU_1\right)$ under Assumption \ref{assumption: X2} relies on the following lemma. Recall that $\sU_1$ is defined as
\[\sU_1 = \left\{\bx_1: \left\|(\lambda \cI + \cT_{{\alpha^*}})^{-1/2}(\cT_{{\alpha^*},\bx_1}- \cT_{{\alpha^*}})(\lambda \cI + \cT_{{\alpha^*}})^{-1/2}\right\| \geq 1/2\right\}.\]
\begin{lemma}\label{lemma: probability estimation of U1 2}
	Suppose that Assumption \ref{assumption: X2} is satisfied. Then there holds
	\begin{align}\label{equation: probability estimation of U1 2}
	\PP\left(\sU_1\right)\leq& 2\exp\left(-c_2\min\left\{\frac{N}{M\sN^2(\lambda)},\frac{N}{M\sN(\lambda)}\right\}\right)\\
	&+2\exp\left(-c_3\min\left\{\frac{1}{\lambda^{-2}m^{-2\alpha^*+1}},\frac{1}{\lambda^{-1}m^{-\alpha^*+1/2}}\right\}\right),\nonumber
	\end{align}
where $\sN(\lambda)$ is the effective dimension given by \eqref{effectivedimension}, $c_2$ and $c_3$ are universal constants.
\end{lemma}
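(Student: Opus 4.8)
The plan is to follow the same deterministic/stochastic splitting used in the proof of Lemma \ref{lemma: basic probability estimation of U1}, but to replace the Chebyshev step there by exponential concentration inequalities, which is precisely where the Gaussianity in Assumption \ref{assumption: X2} enters. First I would write the preconditioned deviation $(\lambda\cI+\cT_{\alpha^*})^{-1/2}(\cT_{\alpha^*,\bx_1}-\cT_{\alpha^*})(\lambda\cI+\cT_{\alpha^*})^{-1/2}$ as $\mathcal{D}_1(\bx_1,\alpha^*,\lambda)+\mathcal{D}_2(\bx_1,\alpha^*,\lambda)$, with $\mathcal{D}_1,\mathcal{D}_2$ the discretization and sampling operators already introduced there, and then bound $\PP(\sU_1)\leq\PP(\mathcal{D}_1\geq 1/4)+\PP(\mathcal{D}_2\geq 1/4)$ by a union bound. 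The second exponential term in \eqref{equation: probability estimation of U1 2} will come from $\mathcal{D}_1$ and the first from $\mathcal{D}_2$.

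For the discretization term I would keep a pointwise estimate instead of taking expectations as in \eqref{equation: basic probability estimation of U1 2}: since each summand is a rank-at-most-two operator, combining \eqref{relationship between L2 and HS norm}, \eqref{equation: HS norm of product operator}, \eqref{equation: HS norm of rank-one operator}, \eqref{prooflemma 1} and the $\sL^2$-bounds on $\mathcal{S}_{1,i}(\cK_{\alpha^*},\bx_1)$ and $\cL_{\cK_{\alpha^*}}^{1/2}X_{1,i}$ yields
\[\mathcal{D}_1(\bx_1,\alpha^*,\lambda)\leq C\lambda^{-1}m^{-\alpha^*+1/2}\cdot\frac{1}{n}\sum_{i=1}^n\|X_{1,i}\|_{\sW^{\alpha^*,2}}^2\]
for a constant $C$. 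By \eqref{equation: norm and inner product relationship} one has $\|X_{1,i}\|_{\sW^{\alpha^*,2}}^2=\|\cL_{\cK_{\alpha^*}}^{-1/2}X_{1,i}\|_{\sL^2}^2$, and under Assumption \ref{assumption: X2} the vector $\cL_{\cK_{\alpha^*}}^{-1/2}X_{1,i}$ is a centered Gaussian element of $\sL^2$ with $\EE\|X_{1,i}\|_{\sW^{\alpha^*,2}}^2\leq\kappa^2$, so the $\|X_{1,i}\|_{\sW^{\alpha^*,2}}^2$ are i.i.d.\ subexponential Gaussian quadratic forms. Applying the scalar Bernstein inequality to their average at the level $\tfrac1{4C}\lambda m^{\alpha^*-1/2}\gg\kappa^2$, and then discarding the favorable factor $n\geq1$, produces $2\exp\!\big(-c_3\min\{(\lambda^{-2}m^{-2\alpha^*+1})^{-1},(\lambda^{-1}m^{-\alpha^*+1/2})^{-1}\}\big)$.

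For the sampling term I would pass to the Hilbert--Schmidt norm, $\mathcal{D}_2\leq\|\tfrac1n\sum_{i=1}^n\xi_i\|_{\sF}$ with $\xi_i:=\zeta_i\otimes\zeta_i-\Sigma$, where $\zeta_i:=(\lambda\cI+\cT_{\alpha^*})^{-1/2}\cL_{\cK_{\alpha^*}}^{1/2}X_{1,i}$ and $\Sigma:=\EE[\zeta_i\otimes\zeta_i]=(\lambda\cI+\cT_{\alpha^*})^{-1/2}\cT_{\alpha^*}(\lambda\cI+\cT_{\alpha^*})^{-1/2}$, so that $\mathrm{Tr}(\Sigma)=\sN_{\alpha^*}(\lambda)$, $\|\Sigma\|_{\sF}^2\leq\sN_{\alpha^*}(\lambda)$ and $\|\Sigma\|\leq1$. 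The $\xi_i$ are i.i.d., centered, self-adjoint Hilbert--Schmidt operators, and under Assumption \ref{assumption: X2} each $\zeta_i$ is Gaussian in $\sL^2$ with covariance $\Sigma$, so $\|\zeta_i\|_{\sL^2}^2$ is a generalized chi-square. I would use this to verify the Bernstein moment condition $\EE\|\xi_i\|_{\sF}^k\leq\tfrac12 k!\,V\,B^{k-2}$ for all $k\geq2$ with $V\lesssim\sN_{\alpha^*}^2(\lambda)$ (here $\EE\|\xi_i\|_{\sF}^2=\EE\|\zeta_i\|_{\sL^2}^4-\|\Sigma\|_{\sF}^2=\sN_{\alpha^*}^2(\lambda)+\|\Sigma\|_{\sF}^2$) and $B\lesssim\sN_{\alpha^*}(\lambda)$, and then invoke the dimension-free Bernstein inequality for Hilbert-space-valued random variables (as in \cite{caponnetto2007optimal}) applied to $\tfrac1n\sum_i\xi_i$ at level $1/4$. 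Since the resulting tail is $2\exp\!\big(-c\,n\min\{V^{-1},B^{-1}\}\big)$ and $n=N/M$, this gives $2\exp\!\big(-c_2\min\{N/(M\sN_{\alpha^*}^2(\lambda)),N/(M\sN_{\alpha^*}(\lambda))\}\big)$. Summing the two contributions yields \eqref{equation: probability estimation of U1 2}.

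The hard part will be verifying the Bernstein moment condition for $\xi_i$: I must control every integer moment $\EE\|\zeta_i\|_{\sL^2}^{2k}$ of the Gaussian quadratic form $\|\zeta_i\|_{\sL^2}^2=\sum_j\sigma_j g_{ij}^2$ (with $\sigma_j$ the eigenvalues of $\Sigma$ and $g_{ij}$ i.i.d.\ standard Gaussians) and collapse them into a single pair $(V,B)$ proportional to $(\sN_{\alpha^*}^2(\lambda),\sN_{\alpha^*}(\lambda))$ uniformly in $\lambda$. Concretely this rests on the centered estimate $\EE(\|\zeta_i\|_{\sL^2}^2-\sN_{\alpha^*}(\lambda))^k\leq\tfrac12 k!\,(2\|\Sigma\|_{\sF}^2)(2\|\Sigma\|)^{k-2}$, together with $\|\Sigma\|_{\sF}^2\leq\sN_{\alpha^*}(\lambda)$ and $\|\Sigma\|\leq1$; a binomial expansion around the mean $\sN_{\alpha^*}(\lambda)$ then produces the claimed $V$ and $B$. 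Keeping all constants universal (independent of $\lambda$, $N$, $M$, $m$) throughout this moment bookkeeping is the delicate point that makes $c_2,c_3$ universal, as asserted.
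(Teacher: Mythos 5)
Your proposal is correct and follows essentially the same route as the paper: the identical splitting of the preconditioned deviation into $\mathcal{D}_1(\bx_1,\alpha^*,\lambda)+\mathcal{D}_2(\bx_1,\alpha^*,\lambda)$ with a union bound, a dimension-free Bernstein inequality for Hilbert--Schmidt-valued sums on the sampling term (the paper invokes Yurinskii's inequality, Lemma \ref{lemma: Bernstein's inequality for r.e.s}, with $H\asymp M\sN(\lambda)/N$ and $B_n^2\asymp M\sN^2(\lambda)/N$), and a scalar Bernstein inequality (Lemma \ref{lemma: Bernstein's inequality for r.v.s}) for the discretization term via the pointwise bound $\mathcal{D}_1\lesssim \lambda^{-1}m^{-\alpha^*+1/2}\cdot\frac{1}{n}\sum_i\|X_{1,i}\|^2_{\sW^{\alpha^*,2}}$ (your quadratic power is in fact the right one; the paper's displayed first power at the corresponding step is a typo that its subsequent $\EE\|X_{1,i}\|^{4\ell}_{\sW^{\alpha^*,2}}$ computation silently corrects). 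The only difference is how the Bernstein moment conditions are verified --- you use the generalized chi-square representation of $\|\zeta_i\|^2_{\sL^2}$ and centered sub-exponential moments of $\|\Sigma\|_{\sF}$, $\|\Sigma\|$, whereas the paper bounds the $2\ell$-th moments of the matrix entries $\langle(\cL_{\cK_{\alpha^*}}^{1/2}X_{1,i}\otimes\cL_{\cK_{\alpha^*}}^{1/2}X_{1,i}-\cT_{\alpha^*})\phi_{\alpha^*,j},\phi_{\alpha^*,k}\rangle_{\sL^2}$ coordinate-wise via H\"older and the Gaussian moment estimate \eqref{prooflemma *} --- which is an immaterial bookkeeping choice yielding the same tails and the same quality of constants.
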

\begin{proof}
	Recalling \eqref{simple notation} and the expression of $\cT_{{\alpha^*},\bx_1}$, we first write
	\begin{align*}
		\left\|(\lambda \cI + \cT_{{\alpha^*}})^{-1/2}\left(\cT_{{\alpha^*}}-\cT_{{\alpha^*},\bx_1}\right)(\lambda \cI + \cT_{{\alpha^*}})^{-1/2}\right\|
		\leq\mathcal{D}_1(\bx,\alpha^*,\lambda) + \mathcal{D}_2(\bx_1,\alpha^*,\lambda),
	\end{align*}
	where we define
	\begin{align*}
		\mathcal{D}_1(\bx_1,\alpha^*,\lambda)&:= \left\|(\lambda \cI + \cT_{{\alpha^*}})^{-1/2}\left( \cT_{{\alpha^*},\bx_1} - \frac{1}{n}\sum_{i=1}^{n}\cL_{\cK_{{\alpha^*}}}^{1/2}X_{1,i}\otimes \cL_{\cK_{{\alpha^*}}}^{1/2}X_{1,i}\right)(\lambda \cI + \cT_{{\alpha^*}})^{-1/2}\right\|;\\
		\mathcal{D}_2(\bx_1,\alpha^*,\lambda)&:= \left\|(\lambda \cI + \cT_{{\alpha^*}})^{-1/2}\left( \frac{1}{n}\sum_{i=1}^{n}\cL_{\cK_{{\alpha^*}}}^{1/2}X_{1,i}\otimes \cL_{\cK_{{\alpha^*}}}^{1/2}X_{1,i} - \cT_{{\alpha^*}} \right)(\lambda \cI + \cT_{{\alpha^*}})^{-1/2}\right\|.
	\end{align*}
Then we write
\begin{align}\label{prooflemma1 total1}
		\PP\left(\sU_1\right)=&\PP\left(\left\{\bx_1: \left\|(\lambda \cI + \cT_{{\alpha^*}})^{-1/2}(\cT_{{\alpha^*},\bx_1}- \cT_{{\alpha^*}})(\lambda \cI + \cT_{{\alpha^*}})^{-1/2}\right\| \geq 1/2\right\}\right)\nonumber\\
		\leq&\PP\left(\left\{\bx_1:\mathcal{D}_1(\bx_1,\alpha^*,\lambda) \geq 1/4\right\}\right) + \PP\left(\left\{\bx_1:\mathcal{D}_2(\bx_1,\alpha^*,\lambda) \geq 1/4\right\}\right),
\end{align}

For the term $\PP\left(\left\{\bx_1:\mathcal{D}_2(\bx_1,\alpha^*,\lambda) \geq 1/4\right\}\right)$, we aim to apply Lemma \ref{lemma: Bernstein's inequality for r.e.s} to give an estimation. We define
\begin{align*}
	\cQ_i:=\frac{1}{n}(\lambda \cI + \cT_{{\alpha^*}})^{-1/2}\left(\cL_{\cK_{{\alpha^*}}}^{1/2}X_{1,i}\otimes \cL_{\cK_{{\alpha^*}}}^{1/2}X_{1,i} - \cT_{{\alpha^*}}\right)(\lambda \cI + \cT_{{\alpha^*}})^{-1/2}, \quad i=1,2,\cdots,n.
\end{align*}
Then for any $1\leq i\leq n$, we have 
\begin{align}\label{prooflemma1 3}
	\EE\left[\cQ_i\right]=\frac{1}{n}(\lambda \cI + \cT_{{\alpha^*}})^{-1/2}\EE\left[\cL_{\cK_{{\alpha^*}}}^{1/2}X_{1,i}\otimes \cL_{\cK_{{\alpha^*}}}^{1/2}X_{1,i} - \cT_{{\alpha^*}}\right](\lambda \cI + \cT_{{\alpha^*}})^{-1/2}=0.
\end{align} 
And using the relationship \eqref{relationship between L2 and HS norm}, we write
\begin{align}\label{prooflemma1 5}
\mathcal{D}_2(\bx_1,\alpha^*,\lambda)=\left\|\sum_{i=1}^{n}\cQ_i\right\|\leq \left\|\sum_{i=1}^{n}\cQ_i\right\|_{\sF}.
\end{align}

For any integer $\ell\geq 2$ and any $1\leq i\leq n$, we bound $\EE\left[\|\cQ_{i}\|_{\sF}^{\ell}\right]$ as
	\begin{align}\label{prooflemma1 1}
		&\EE\left[\left\|\cQ_{i}\right\|_{\sF}^{\ell}\right]\\
		\overset{(\romannumeral1)}{\leq}&\frac{1}{n^\ell}\left[\EE\left\|(\lambda \cI + \cT_{{\alpha^*}})^{-1/2}(L_K^{1/2}X_{1,i}\otimes L_K^{1/2}X_{1,i}-\cT_{{\alpha^*}})(\lambda \cI + \cT_{{\alpha^*}})^{-1/2}\right\|_{\sF}^{2\ell}\right]^{\frac{1}{2}}\nonumber\\ 
		=& \frac{1}{n^\ell}\left[\EE\left(\sum_{j=1}^{\infty}\sum_{k=1}^{\infty}\frac{1}{\lambda + \mu_{\alpha^*,j}}\frac{1}{\lambda + \mu_{\alpha^*,k}}\left\langle \left(L_K^{1/2}X_{1,i}\otimes L_K^{1/2}X_{1,i}- \cT_{{\alpha^*}}\right)\phi_{\alpha^*,j}, \phi_{\alpha^*,k}\right\rangle_{\sL^2}^2\right)^\ell\right]^{\frac{1}{2}}\nonumber\\ 
		=&\frac{1}{n^\ell}\Bigg[\sum_{j_1=1}^{\infty } \cdots \sum_{j_\ell=1}^{\infty} \sum_{k_1=1}^{\infty} \cdots \sum_{k_\ell=1}^{\infty}\EE\Bigg\{\frac{1}{\lambda + \mu_{\alpha^*, j_1}}\frac{1}{\lambda + \mu_{\alpha^*,k_1}}\left\langle \left(L_K^{1/2}X_{1,i}\otimes L_K^{1/2}X_{1,i}- \cT_{{\alpha^*}}\right)\phi_{\alpha^*, j_1}, \phi_{\alpha^*,k_1}\right\rangle_{\sL^2}^2\nonumber\\
		&\quad\times\cdots\times
		\frac{1}{\lambda + \mu_{\alpha^*,j_\ell}}\frac{1}{\lambda + \mu_{\alpha^*,k_\ell}}\left\langle \left(L_K^{1/2}X_{1,i}\otimes L_K^{1/2}X_{1,i}- \cT_{{\alpha^*}}\right)\phi_{\alpha^*,j_\ell}, \phi_{\alpha^*,k_\ell}\right\rangle_{\sL^2}^2\Bigg\}\Bigg]^{\frac{1}{2}}\nonumber\\
		\overset{(\romannumeral2)}{\leq}&\frac{1}{n^\ell}\Bigg[\sum_{j_1=1}^{\infty }\sum_{k_1=1}^{\infty}\frac{1}{\lambda + \mu_{\alpha^*, j_1}}\frac{1}{\lambda + \mu_{\alpha^*,k_1}}\left\{\EE\left\langle \left(L_K^{1/2}X_{1,i}\otimes L_K^{1/2}X_{1,i}- \cT_{{\alpha^*}}\right)\phi_{\alpha^*, j_1}, \phi_{\alpha^*,k_1}\right\rangle_{\sL^2}^{2\ell}\right\}^{\frac{1}{\ell}}\times\cdots\times \nonumber\\
		&\sum_{j_\ell=1}^{\infty}\sum_{k_\ell=1}^{\infty}\frac{1}{\lambda + \mu_{\alpha^*,j_\ell}}\frac{1}{\lambda + \mu_{\alpha^*,k_\ell}}\left\{\EE\left\langle \left(L_K^{1/2}X_{1,i}\otimes L_K^{1/2}X_{1,i}- \cT_{{\alpha^*}}\right)\phi_{\alpha^*,j_\ell}, \phi_{\alpha^*,k_\ell}\right\rangle_{\sL^2}^{2\ell}\right\}^{\frac{1}{\ell}}\Bigg]^{\frac{1}{2}},\nonumber
	\end{align}
where $\{\left(\mu_{\alpha^*,j},\phi_{\alpha^*,j}\right)\}_{j=1}^\infty$ is given by the singular value decomposition of $\cT_{{\alpha^*}}$ in \eqref{singular value decomposition}, inequality (\romannumeral1) is from Cauchy-Schwartz inequality, inequality (\romannumeral2) uses H\"older inequality. It remains to estimate
$$\EE\left[\left\langle \left(L_K^{1/2}X_{1,i}\otimes L_K^{1/2}X_{1,i}- \cT_{{\alpha^*}}\right)\phi_{\alpha^*,j}, \phi_{\alpha^*,k}\right\rangle_{\sL^2}^{2\ell}\right],\quad \forall 1\leq i\leq n \mbox{ and }\forall 1\leq j,k<\infty.$$
To this end, we first give the following estimate for the higher-order moment of a Gaussian random variable. Suppose $\omega$ is a mean-zero Gaussian random variable. Then for any integer $t\geq 1$, we have
\begin{align}\label{prooflemma *}
	\EE\left[\omega^{4t}\right]\overset{(\romannumeral1)}{=}(4t -1)!!\left[\EE\omega^2\right]^{2t}\overset{(\romannumeral2)}{\leq} 2^{2t}(2t)!\left[\EE\omega^2\right]^{2t}\overset{(\romannumeral3)}{\leq} 2^{4t}(t!)^2\left[\EE\omega^2\right]^{2t},
\end{align}
where equality (\romannumeral1) is due to the recursive equation that $\EE\left[\omega^{k}\right] = (k-1)\EE\left[\omega^2\right] \EE\left[\omega^{k-2}\right],\forall k\geq2$, inequality (\romannumeral2) follows from the calculation that $(4t-1)!!\leq (4t)!!= 2^{2t}(2t)!$ and inequality (\romannumeral3) uses the fact that $(2t)!=(2t-1)!!(2t)!!\leq 2^{2t} (t!)^2$ which is from $(2t-1)!!\leq (2t)!!$ and $(2t)!!= 2^tt!$. 

When $j\neq k$, we write
\begin{align*}
	&\EE\left[\left\langle \left(\cL_{\cK_{{\alpha^*}}}^{1/2}X_{1,i}\otimes \cL_{\cK_{{\alpha^*}}}^{1/2}X_{1,i}- \cT_{{\alpha^*}}\right)\phi_{\alpha^*,j}, \phi_{\alpha^*,k}\right\rangle_{\sL^2}^{2\ell}\right]\\
	=& \EE\left[\left\langle \cL_{\cK_{{\alpha^*}}}^{1/2}X_{1,i},\phi_{\alpha^*,j}\right\rangle_{\sL^2}^{2\ell}\left\langle \cL_{\cK_{{\alpha^*}}}^{1/2}X_{1,i},\phi_{\alpha^*,k}\right\rangle_{\sL^2}^{2\ell}\right]\\
	\overset{(\romannumeral1)}{\leq}& \left[\EE\left\langle \cL_{\cK_{{\alpha^*}}}^{1/2}X_{1,i},\phi_{\alpha^*,j}\right\rangle_{\sL^2}^{4\ell}\right]^{\frac{1}{2}}\left[\EE\left\langle \cL_{\cK_{{\alpha^*}}}^{1/2}X_{1,i},\phi_{\alpha^*,k}\right\rangle_{\sL^2} ^{4\ell}\right]^{\frac{1}{2}}\\ \overset{(\romannumeral2)}{\leq}&2^{4\ell}(\ell!)^2\left[\EE\left\langle \cL_{\cK_{{\alpha^*}}}^{1/2}X_{1,i},\phi_{\alpha^*,j}\right\rangle_{\sL^2}^{2}\right]^{\ell}\left[\EE\left\langle \cL_{\cK_{{\alpha^*}}}^{1/2}X_{1,i},\phi_{\alpha^*,k}\right\rangle_{\sL^2}^{2}\right]^{\ell}= 2^{4\ell}(\ell!)^2\mu_{\alpha^*,j}^{\ell}\mu_{\alpha^*,k}^{\ell},
\end{align*} where inequality (\romannumeral1) uses Cauchy-Schwarz inequality and inequality (\romannumeral2) follows from \eqref{prooflemma *} with $t=\ell$ as $\left\langle \cL_{\cK_{{\alpha^*}}}^{1/2}X_{1,i}, \phi_{\alpha^*,j}\right\rangle_{\sL^2}$ and $\left\langle \cL_{\cK_{{\alpha^*}}}^{1/2}X_{1,i}, \phi_{\alpha^*,k}\right\rangle_{\sL^2}$ are mean-zero Gaussian random variables. 

When $j = k$, we write
\begin{align*}
	&\EE\left[\left\langle \left(\cL_{\cK_{{\alpha^*}}}^{1/2}X_{1,i}\otimes \cL_{\cK_{{\alpha^*}}}^{1/2}X_{1,i}- \cT_{{\alpha^*}}\right)\phi_{\alpha^*,j}, \phi_{\alpha^*,j}\right\rangle_{\sL^2}^{2\ell}\right]\\
	=& \EE\left[\left(\left\langle \cL_{\cK_{{\alpha^*}}}^{1/2}X_{1,i}\otimes \cL_{\cK_{{\alpha^*}}}^{1/2}X_{1,i} (\phi_{\alpha^*,j}),\phi_{\alpha^*,j}\right\rangle_{\sL^2}-\mu_{\alpha^*,j}\right)^{2\ell}\right]\\
	=& 2^{2\ell}\EE\left[\left(\frac{1}{2}\left\langle \cL_{\cK_{{\alpha^*}}}^{1/2}X_{1,i}\otimes \cL_{\cK_{{\alpha^*}}}^{1/2}X_{1,i} (\phi_{\alpha^*,j}),\phi_{\alpha^*,j}\right\rangle_{\sL^2}-\frac{1}{2}\mu_{\alpha^*,j}\right)^{2\ell}\right]\\
	\overset{(\romannumeral1)}{\leq}& 2^{2\ell-1} \left(\EE\left[\left\langle \cL_{\cK_{{\alpha^*}}}^{1/2}X_{1,i}\otimes \cL_{\cK_{{\alpha^*}}}^{1/2}X_{1,i} (\phi_{\alpha^*,j}),\phi_{\alpha^*,j}\right\rangle_{\sL^2}^{2\ell}\right] +\mu_{\alpha^*,j}^{2\ell} \right)\\
	=&2^{2\ell-1} \left(\EE\left[\left\langle \cL_{\cK_{{\alpha^*}}}^{1/2}X_{1,i}, \phi_{\alpha^*,j}\right\rangle_{\sL^2}^{4\ell}\right] +\mu_{\alpha^*,j}^{2\ell} \right)\\
	\overset{(\romannumeral2)}{\leq}&2^{2\ell-1} \left(2^{4\ell}(\ell!)^2\left[\EE\left\langle \cL_{\cK_{{\alpha^*}}}^{1/2}X_{1,i}, \phi_{\alpha^*,j}\right\rangle_{\sL^2}^{2}\right]^{2\ell} +\mu_{\alpha^*,j}^{2\ell} \right)\leq 2^{6\ell}(\ell!)^2\mu_{\alpha^*,j}^{2\ell},
\end{align*}
where inequality (\romannumeral1) uses Jensen's inequality and inequality (\romannumeral2) follows from \eqref{prooflemma *} with $t=\ell$ as $\left\langle L_K^{1/2}X_{1,i}, \phi_{\alpha^*,j}\right\rangle_{\sL^2}$ is a mean-zero Gaussian random variable.

Combining the above two estimates, for any $1\leq i\leq n$ and $1\leq j,k <\infty$, we have
\begin{align}\label{prooflemma1 2}
	\EE\left[\left\langle \left(\cL_{\cK_{{\alpha^*}}}^{1/2}X_{1,i}\otimes \cL_{\cK_{{\alpha^*}}}^{1/2}X_{1,i}- \cT_{{\alpha^*}}\right)\phi_{\alpha^*,j}, \phi_{\alpha^*,k}\right\rangle^{2\ell}\right]\leq 2^{6\ell}(\ell!)^2\mu_{\alpha^*,j}^\ell\mu_{\alpha^*,k}^\ell.
\end{align}
Then combining \eqref{prooflemma1 1} and \eqref{prooflemma1 2}, we have for any $1\leq i\leq n$,
\begin{align}\label{prooflemma1 4}
	\EE\left[\|\cQ_{i}\|_{\sF}^{\ell}\right]\leq& 2^{2\ell}\ell!\frac{1}{n^\ell}\left(\sum_{j_1=1}^{\infty}\sum_{k_1=1}^{\infty}\frac{\mu_{\alpha^*, j_1}}{\lambda + \mu_{\alpha^*, j_1}}\frac{\mu_{\alpha^*,k_1}}{\lambda + \mu_{\alpha^*,k_1}}\cdots\sum_{j_\ell=1}^{\infty}\sum_{k_\ell=1}^{\infty}\frac{\mu_{\alpha^*,j_\ell}}{\lambda + \mu_{\alpha^*,j_\ell}}\frac{\mu_{\alpha^*,k_\ell}}{\lambda + \mu_{\alpha^*,k_\ell}}\right)^{\frac{1}{2}}\nonumber\\
	=&2^{3\ell}\ell!\frac{\sN^{\ell}(\lambda)}{n^\ell}\leq \frac{\ell!}{2}2^{3\ell+1}\frac{\sN^{\ell}(\lambda)}{n^\ell}.
\end{align}
Recalling that $n=N/M$ and that the space of Hilbert-Schmidt operators on $\sL^2(\sT)$ is a Hilbert space, \eqref{prooflemma1 3} and \eqref{prooflemma1 4} imply that we can apply Lemma \ref{lemma: Bernstein's inequality for r.e.s} with $H=2\frac{M\sN(\lambda)}{N}$, $b_1^2=\cdots=b_n^2=2^7\frac{M^2\sN^2(\lambda)}{N^2}$, $B_n^2=b_1^2+\cdots+b_n^2=2^7\frac{M\sN^2(\lambda)}{N}$ and $x=1/4$ to get
\begin{align}\label{prooflemma1 total2}
	\PP\left(\left\{\bx_1:\mathcal{D}_2(\bx_1,\alpha^*,\lambda) \geq 1/4\right\}\right)\overset{(\dagger)}{\leq}&
	\PP\left(\left\{\bx_1:\left\|\sum_{i=1}^n\mathcal{Q}_i\right\|_{\sF}\geq 1/4\right\}\right)\nonumber\\
	\leq&2\exp\left(-c_2\min\left\{\frac{N}{M\sN^2(\lambda)},\frac{N}{M\sN(\lambda)}\right\}\right),
\end{align}
where we denote $c_2:=1/4^2\left(2^8+1.62\right)$, inequality $(\dagger)$ follows from \eqref{prooflemma1 5}.

For the term $\PP\left(\left\{\bx_1:\mathcal{D}_1(\bx_1,\alpha^*,\lambda) \geq 1/4\right\}\right)$, we will apply Lemma \ref{lemma: Bernstein's inequality for r.v.s} to give a result. Following from the same argument of the proof of \eqref{equation: basic probability estimation of U1 2}, we have
\begin{align*}
		\mathcal{D}_1(\bx_1,\alpha^*,\lambda)\leq 2C_1C_{\alpha^*}\left(C_2^2+C_1C_{\alpha^*}\right)\lambda^{-1}m^{-{\alpha^*}+1/2}\frac{1}{n}\sum_{i=1}^n\|X_{1,i}\|_{\sW^{{\alpha^*},2}}.
\end{align*}
Then for any integer $\ell\geq 2$, we have
\begin{align}\label{prooflemma1 7}
	&\EE\left[\Big|\mathcal{D}_1(\bx_1,\alpha^*,\lambda)-\EE\left[\mathcal{D}_1(\bx_1,\alpha^*,\lambda)\right]\Big|^{\ell}\right]\nonumber\\
	\leq& 2^{\ell-1}\EE\left[\mathcal{D}_1^{\ell}(\bx_1,\alpha^*,\lambda)\right]+2^{\ell-1}\left[\EE\mathcal{D}_1(\bx_1,\alpha^*,\lambda)\right]^{\ell}
	\overset{(\romannumeral1)}{\leq}2^{\ell}\left[\EE\mathcal{D}_1^{4\ell}(\bx_1,\alpha^*,\lambda)\right]^{\frac{1}{4}}\nonumber\\
	\overset{(\romannumeral2)}{\leq}&2^{\ell}C_1^{\ell}C_{\alpha^*}^{\ell}(C_2^2+C_1C_{\alpha^*})^{\ell}\lambda^{-\ell}m^{-{\alpha^*}\ell+\ell/2}\left[\frac{1}{n}\sum_{i=1}^n\EE\left\|X_{1,i}\right\|_{\sW^{{\alpha^*},2}}^{4\ell}\right]^{\frac{1}{4}},
\end{align}
where inequality $(\romannumeral1)$ is from H\"older inequality, inequality (\romannumeral2) uses Jensen's inequality.

While for any $1\leq i\leq n$, we write
\begin{align}\label{prooflemma1 8}
	\EE\left[\left\|X_{1,i}\right\|_{\sW^{{\alpha^*},2}}^{4\ell}\right]&\overset{(\romannumeral1)}{=}\EE\left[\left\|\cL_{\cK_{{\alpha^*}}}^{-1/2}X_{1,i}\right\|_{\sL^2}^{4\ell}\right]= \EE\left[\left(\sum_{j=1}^\infty\left\langle \cL_{\cK_{{\alpha^*}}}^{-1/2}X_{1,i},\phi_{\alpha^*,j}\right\rangle_{\sL^2}^2\right)^{2\ell}\right]\\
	&=\sum_{j_1=1}^\infty\cdots\sum_{j_{2\ell}=1}^\infty\EE\left[\left\langle \cL_{\cK_{{\alpha^*}}}^{-1/2}X_{1,i},\phi_{\alpha^*, j_1}\right\rangle_{\sL^2}^2\cdots\left\langle \cL_{\cK_{{\alpha^*}}}^{-1/2}X_{1,i},\phi_{\alpha^*, j_{2\ell}}\right\rangle_{\sL^2}^2\right]\nonumber\\
	&\overset{(\romannumeral2)}{\leq} \sum_{j_1=1}^\infty\cdots\sum_{j_{2\ell}=1}^\infty\left[\EE\left\langle \cL_{\cK_{{\alpha^*}}}^{-1/2}X_{1,i},\phi_{\alpha^*, j_1}\right\rangle_{\sL^2}^{4\ell}\right]^{\frac{1}{2\ell}}\cdots\left[\EE\left\langle \cL_{\cK_{{\alpha^*}}}^{-1/2}X_{1,i},\phi_{\alpha^*, j_{2\ell}}\right\rangle_{\sL^2}^{4\ell}\right]^{\frac{1}{2\ell}}\nonumber\\
	&\overset{(\romannumeral3)}{\leq}2^{4\ell}(\ell!)^2\sum_{j_1=1}^\infty\cdots\sum_{j_{2\ell}=1}^\infty\EE\left[\left\langle \cL_{\cK_{{\alpha^*}}}^{-1/2}X_{1,i},\phi_{\alpha^*, j_1}\right\rangle_{\sL^2}^2\right]\cdots\EE\left[\left\langle \cL_{\cK_{{\alpha^*}}}^{-1/2}X_{1,i},\phi_{\alpha^*, j_{2\ell}}\right\rangle_{\sL^2}^2\right]\nonumber\\
	&=2^{4\ell}(\ell!)^2\left[\EE\left(\sum_{j=1}^\infty\left\langle \cL_{\cK_{{\alpha^*}}}^{-1/2}X_{1,i},\phi_{\alpha^*,j}\right\rangle_{\sL^2}^2\right)\right]^{2\ell}=2^{4\ell}(\ell!)^2\left[\EE\left\|\cL_{\cK_{{\alpha^*}}}^{-1/2}X_{1,i}\right\|_{\sL^2}^2\right]^{2\ell}\nonumber\\
	&\overset{(\romannumeral4)}{=}2^{4\ell}(\ell!)^2\left[\EE\|X_{1,i}\|_{\sW^{{\alpha^*},2}}^2\right]^{2\ell} \overset{(\romannumeral5)}{\leq} 2^{4\ell}(\ell!)^2\kappa^{4\ell},\nonumber
\end{align}
where $\{\left(\mu_{\alpha^*,j},\phi_{\alpha^*,j}\right)\}_{j=1}^\infty$ is given by the singular value decomposition of $\cT_{{\alpha^*}}$ in \eqref{singular value decomposition} and $\cL_{\cK_{{\alpha^*}}}^{-1/2}$ denotes the inverse operator of $\cL_{\cK_{{\alpha^*}}}^{1/2}$, equality $(\romannumeral1)$ is from \eqref{equation: norm and inner product relationship}, inequality $(\romannumeral2)$ uses H\"older inequality, inequality $(\romannumeral3)$ follows from \eqref{prooflemma *} with $t=\ell$ as $\left\{\left\langle \cL_{\cK_{{\alpha^*}}}^{-1/2}X_i,\phi_{\alpha^*,j}\right\rangle_{\sL^2}\right\}_{j=1}^\infty$ are mean-zero Gaussian random variables, equality $(\romannumeral4)$ is also from \eqref{equation: norm and inner product relationship} and inequality (\romannumeral5) is due to Assumption \ref{assumption: X2}.

Combining \eqref{prooflemma1 7} and \eqref{prooflemma1 8} yields that for any integer $\ell\geq2$,
\begin{align*}
	\EE\left[\Big|\mathcal{D}_1(\bx_1,\alpha^*,\lambda)-\EE\left[\mathcal{D}_1(\bx_1,\alpha^*,\lambda)\right]\Big|^{\ell}\right]\leq \frac{\ell!}{2}2^{2\ell+1}\kappa^{\ell}C_1^{\ell}C_{\alpha^*}^{\ell}(C_2^2+C_1C_{\alpha^*})^{\ell}\lambda^{-\ell}m^{-{\alpha^*}\ell+\ell/2}.
\end{align*}
Noting $\mathcal{D}_1(\bx_1,\alpha^*,\lambda)$ is a non-negative random variable, the above estimation implies that we can apply Lemma \ref{lemma: Bernstein's inequality for r.v.s} with $b^2=2^{5}\kappa^{2}C_1^2C_{\alpha^*}^{2}(C_2^2+C_1C_{\alpha^*})^{2}\lambda^{-2}m^{-2{{\alpha^*}}+1}$, $H=4\kappa C_1C_{\alpha^*}(C_2^2+C_1C_{\alpha^*})\lambda^{-1}m^{-{\alpha^*}+1/2}$ and $x=1/4$ to get
\begin{align}\label{prooflemma1 total3}
	\PP\left(\mathcal{D}_1(\bx_1,\alpha^*,\lambda)\geq 1/4\right)\leq& 2\exp\left(-\frac{1}{2}\min\left\{\frac{x^2}{2b^2},\frac{2x}{H}\right\}\right)\nonumber\\
	\leq&2\exp\left(-c_3\min\left\{\frac{1}{\lambda^{-2}m^{-2\alpha^*+1}},\frac{1}{\lambda^{-1}m^{-\alpha^*+1/2}}\right\}\right),
\end{align}
where we define $c_3:=\min\left\{\frac{1}{2^{11}\kappa^{2}C_1^2C_{\alpha^*}^2(C_2^2+C_1C_{\alpha^*})^{2}},\frac{1}{2^4\kappa C_1C_{\alpha^*}(C_2^2+C_1C_{\alpha^*})}\right\}$.

Finally, combining \eqref{prooflemma1 total1}, \eqref{prooflemma1 total2} and \eqref{prooflemma1 total3} yields
\begin{align*}
	\PP\left(\sU_1\right)\leq 2\exp\left(-c_2\min\left\{\frac{N}{M\sN^2(\lambda)},\frac{N}{M\sN(\lambda)}\right\}\right)+2\exp\left(-c_3\min\left\{\frac{1}{\lambda^{-2}m^{-2\alpha^*+1}},\frac{1}{\lambda^{-1}m^{-\alpha^*+1/2}}\right\}\right).
\end{align*}
We have gotten \eqref{equation: probability estimation of U1 2}, and the proof of Lemma \ref{lemma: probability estimation of U1 2} is then completed.
\end{proof}
Now we are in the position to prove Theorem \ref{theorem: upper bound2}.

\noindent
\emph{Proof of Theorem \ref{theorem: upper bound2}}.
Let $0\leq \theta\leq \nu_{\Psi}$ and take $\lambda=N^{-\frac{1}{1+2\theta+p}}$, $1/m\leq o\left(N^{-\frac{2+2\theta}{(2{\alpha^*}-1)(1+2\theta+p)}}\log^{-\frac{2}{2\alpha^*-1}}N\right)$ and $M\leq o\left(N^{\frac{1+2\theta-p}{1+2\theta+p}}\log^{-1}N\right)$. Following from \eqref{estimation of N(lambda)} in Lemma \ref{lemma: estimation of N(lambda)}, we have
\begin{align*}
	\frac{M\sN(\lambda)}{N}\lesssim \frac{M\sN^2(\lambda)}{N} \leq o\left(\log^{-1}N\right),
\end{align*}
and
\begin{align*}
	\lambda^{-2}m^{-2\alpha^*+1}\leq \lambda^{-1}m^{-\alpha^*+1/2}\leq o\left(\log^{-1} N\right).
\end{align*}
Then following from \eqref{equation: probability estimation of U1 2} in Lemma \ref{lemma: probability estimation of U1 2}, we have
\begin{align*}
\lambda^{-2\theta}\PP\left(\sU_1\right)\leq&2\lambda^{-2\theta}\exp\left(-\frac{c_2}{2}\min\left\{\frac{N}{M\sN^2(\lambda)},\frac{N}{M\sN(\lambda)}\right\}\right)\nonumber\\
&+2\lambda^{-2\theta}\exp\left(-\frac{c_3}{2}\min\left\{\lambda^{2}m^{2{{\alpha^*}}-1},\lambda m^{{\alpha^*}-1/2}\right\}\right)\lesssim 1,
\end{align*}
and
\begin{align*}
	\lambda^{-1}\PP^{\frac{1}{2}}\left(\sU_1\right)\nonumber\leq&\sqrt{2}\lambda^{-1}\exp\left(-\frac{c_2}{2}\min\left\{\frac{N}{M\sN^2(\lambda)},\frac{N}{M\sN(\lambda)}\right\}\right)\nonumber\\
	&+\sqrt{2}\lambda^{-1}\exp\left(-\frac{c_3}{2}\min\left\{\lambda^{2}m^{2{{\alpha^*}}-1},\lambda m^{{\alpha^*}-1/2}\right\}\right)\lesssim 1.
\end{align*}
These together with \eqref{equation: expecatation upper bound 2 1} and \eqref{equation: expecatation upper bound 2 2} in Lemma \ref{lemma: expecatation upper bound 2} yield
\begin{align}\label{equation: expecatation upper bound 2 3}
	\EE\Big[\mathcal{F}_2\left(\bx_1,\alpha^*,\Psi_\lambda\right)\II_{\sU_1}\Big]\lesssim \lambda^{2\theta}
\end{align}
and
\begin{align}\label{equation: expecatation upper bound 2 4}
	\EE\Big[\mathcal{F}_4\left(S_1,\alpha^*,\Psi_\lambda\right)\II_{\sU_1}\Big]\lesssim \frac{M}{N},
\end{align}

Recalling that $n= N/M$, using \eqref{estimation of N(lambda)} in Lemma \ref{lemma: estimation of N(lambda)}, Combining \eqref{equation: expecatation upper bound 1}, \eqref{equation: upper bound of expectation1}, \eqref{equation: upper bound of expectation2}, \eqref{equation: expecatation upper bound 2 3} and \eqref{equation: expecatation upper bound 2 4} with \eqref{equation: rates upper bound21}, \eqref{equation: rates upper bound22}, \eqref{equation: rates upper bound23} and \eqref{equation: rates upper bound24} in Lemma \ref{lemma: rates upper bound2}, we write
\begin{align*}
	&\EE\left[\left\|\hat{\beta}_{S,{\alpha^*},\lambda}-\beta_0\right\|_{\sW^{{\alpha^*},2}}^2\right]\\
	\lesssim& \lambda^{2\theta}+\lambda^{-2}m^{-2{{\alpha^*}}+1} + \lambda^{-1}\frac{\sN(\lambda)}{N} + \frac{1}{N}\lesssim \lambda^{2\theta}+\lambda^{-2}m^{-2{{\alpha^*}}+1} + \frac{\lambda^{-1-p}}{N}\lesssim N^{-\frac{2\theta}{1+2\theta+p}}.
\end{align*}
We have completed the proof of Theorem \ref{theorem: upper bound2}. \qed
\subsection{Establishing Lower Rates}\label{subsection: lower rates}

In this subsection, we will establish the lower bounds presented in Theorem \ref{theorem: lower bound}. There is already a standard procedure to establish minimax lower bounds based on Fano's method. We follow the same procedure as in our previous paper \cite{liu2024statistical} to prove Theorem \ref{theorem: lower bound}. 

\noindent
\emph{Proof of Theorem \ref{theorem: lower bound}}. Recall that $\left\{\mu_{\alpha^*,j}\right\}_{j\geq1}$ is a positive and decreasing sequence of eigenvalues of $\cT_{{\alpha^*}}$ satisfying $\mu_{\alpha^*,j}\asymp j^{-1/p}$ for some $0< p \leq 1$. That is, there exists a constant $c>0$ such that
\begin{equation}\label{eiganvlaue}
	\mu_{\alpha^*,j+1}\leq \mu_{\alpha^*,j} \mbox{ and } cj^{-1/p}\leq\mu_{\alpha^*,j} \leq \frac{1}{c} j^{-1/p}, \quad \forall j\geq 1.
\end{equation} 
 It is sufficient to consider the case that $\epsilon$ is a centered Gaussian random variable with variance $\sigma^2$ and independent of $X$, i.e., $\epsilon\sim N(0,\sigma^2)$. Then Assumption \ref{assumption: epsilon2} is satisfied with $\sigma>0$. 
 
 Take $J=\lceil a N^{\frac{p}{1+p+2\theta}} \rceil$, which denotes the smallest integer larger than $a N^{\frac{p}{1+p+2\theta}}$ with some constant $a>8$ to be specified later. The well-known Varshamov-Gilbert bound (see, e.g., \cite{duchi2016lecture}) guarantees that there exists a set $\Lambda=\left\{\iota^{(1)},\cdots,\iota^{(L)}\right\}\subset \left\{-1,1\right\}^{J}$ such that
 \begin{align*}
 	L=\left|\Lambda\right|\geq \exp(J/8)
 \end{align*}
and
 	\[\begin{aligned}
 	\left\|\iota - \iota'\right\|_1 = \sum_{j=1}^{J}\left|\iota_j-\iota'_j\right|\geq J/2
 \end{aligned}\] for any $\iota \neq \iota'$ with  $\iota,\iota'\in \Lambda$. Given $0\leq \theta < +\infty$, define
\begin{align*}
	\beta_i = \sum_{j=J+1}^{2J}\frac{1}{\sqrt{J}}\iota^{(i)}_{j-J}\cL_{\cK_{{\alpha^*}}}^{1/2}\mu_{\alpha^*,j}^{\theta}\phi_{\alpha^*,j} = \cL_{\cK_{{\alpha^*}}}^{1/2}\cT_{{\alpha^*}}^{\theta}(g_i), \quad i=1,\cdots,L,
\end{align*}
where $\{\phi_{\alpha^*,j}\}_{j\geq 1}$ are the eigenfunctions given by the singular value decomposition of $\cT_{{\alpha^*}}$ in \eqref{singular value decomposition}, and $g_i=\sum_{j=J+1}^{2J}\frac{1}{\sqrt{J}}\iota^{(i)}_{j-J}\phi_{\alpha^*,j}$ satisfies $\left\|g_i\right\|_{\sL^2}^2=1$. Then $\left\{\beta_i\right\}_{i=1}^L$ satisfy Assumption \ref{assumption: regularity condition} with $0\leq\theta< \infty$. 

For any $1\leq i_1\neq i_2\leq L$, we have
\[\begin{aligned}
	\left\|\beta_{i_1} - \beta_{i_2}\right\|_{\sW^{{\alpha^*},2}}^2 &=\left\|\cL_{\cK_{{\alpha^*}}}^{1/2}\cT_{{\alpha^*}}^{\theta}(g_{i_1}-g_{i_2})\right\|_{\sW^{{\alpha^*},2}}^2\overset{(\romannumeral1)}{=}\left\|\cT_{{\alpha^*}}^{\theta}(g_{i_1}-g_{i_2})\right\|_{\sL^2}^2\\
	&= \sum_{j=J+1}^{2J}\frac{1}{J}\mu_{\alpha^*,j}^{2\theta}\left(\iota^{(i_1)}_{j-J} - \iota^{(i_2)}_{j-J}\right)^2\\
	&\geq \mu_{\alpha^*, 2J}^{2\theta}\frac{2}{J}\sum_{j=J+1}^{2J}\left|\iota^{(i_1)}_{j-J} - \iota^{(i_2)}_{j-J}\right|\\
	&\overset{(\romannumeral2)}{\geq} \mu_{\alpha^*, 2J}^{2\theta}\frac{2}{J}\frac{J}{2}\overset{(\romannumeral3)}{\geq} c^{2\theta} 2^{-\frac{2\theta}{p}}J^{-\frac{2\theta}{p}},
\end{aligned}\] where equality (\romannumeral1) is due to \eqref{equation: norm and inner product relationship}, inequalities (\romannumeral2) and (\romannumeral3) are from \eqref{eiganvlaue}. For any $1\leq i\leq L$, denote by $\{P_i\}_{i=1}^L$ the joint probability distributions of $(X,Y)$ with $Y=\langle X,\beta_i\rangle_{\sL^2}+\epsilon$ and $\epsilon \thicksim N(0,\sigma^2)$. Then for $\forall 1 \leq i_1\neq i_2 \leq L$, the Kullback-Leibler divergence (KL-divergence) between $P_{i_1}$ and $P_{i_2}$ can be calculated as
\begin{align}\nonumber
	\mathcal{D}_{kl} (P_{i_1} \| P_{i_2})
	&=\frac{1}{2\sigma^2}\left\|\cL_\cC^{1/2}\left(\beta_{i_1} - \beta_{i_2}\right)\right\|_{{\sL}^2}^2=\frac{1}{2\sigma^2}\left\|\cL_\cC^{1/2}\cL_{\cK_{{\alpha^*}}}^{1/2}\cT_{{\alpha^*}}^{\theta}\left(g_{i_1}-g_{i_2}\right)\right\|_{{\sL}^2}^2\nonumber\\
	&\overset{(\romannumeral1)}{=}\frac{1}{2\sigma^2}\left\|\cT_{{\alpha^*}}^{1/2+\theta}\left(g_{i_1}-g_{i_2}\right)\right\|_{{\sL}^2}^2= \frac{1}{2\sigma^2}\sum_{j=J+1}^{2J}\frac{1}{J}\mu_{\alpha^*,j}^{1+2\theta}\left|\iota^{({i_1})}_{j-J} - \iota^{({i_2})}_{j-J}\right|^2\nonumber\\
	&\overset{(\romannumeral2)}{\leq} \frac{2}{\sigma^2}\mu_{\alpha^*,J}^{1+2\theta}\overset{(\romannumeral3)}{\leq} \frac{2}{\sigma^2c^{1+2\theta}}J^{-\frac{1+2\theta}{p}},\nonumber
\end{align} where equality (\romannumeral1) is from \eqref{singular value decomposition}, inequalities (\romannumeral2) and (\romannumeral3) are due to \eqref{eiganvlaue}.
Recalling that $J=\lceil a N^{\frac{p}{1+p+2\theta}} \rceil$, following from a direct application of Fano's method (see, \cite{duchi2016lecture} or Lemma 13 in \cite{liu2024statistical}), the above two estimates imply that there holds
\[\begin{aligned}
	&\inf_{\hat{\beta}_S}\sup_{\beta_0}\mathbb{P}\left\{\left\|\hat{\beta}_S-\beta_0\right\|^2_{\sW^{{\alpha^*},2}}\geq \frac{c^{2\theta}}{4}2^{-\frac{2\theta}{p}}J^{-\frac{2\theta}{p}} \geq \frac{c^{2\theta}}{4}2^{-\frac{2\theta}{p}}a^{-\frac{2\theta}{p}}N^{-\frac{2\theta}{1+p+2\theta}}\right\}\\  
	&\geq 1- \frac{\frac{2N}{\sigma^2c^{1+2\theta}}J^{-\frac{1+2\theta}{p}}+\log2}{\log L} \geq 1-\frac{\frac{2N}{\sigma^2c^{1+2\theta}}J^{-\frac{1+2\theta}{p}}+\log 2}{J/8}\\
	&\geq 1 - a^{-\frac{1+2\theta + p}{p}}\frac{16}{\sigma^2c^{1+2\theta}} N^{1-\frac{p}{1+2\theta + p}\cdot\frac{1+2\theta + p}{p}} - \frac{8\log2}{a N^{\frac{p}{1+p+2\theta}}}\\
	&= 1- a^{-\frac{1+2\theta + p}{p}}\frac{16}{\sigma^2c^{1+2\theta}} - \frac{8\log2}{a}N^{-\frac{p}{1+p+2\theta}}.
\end{aligned}\]
Therefore, we have
\[\mathop{\lim\inf}_{N\to \infty} \inf_{\hat{\beta}_{S}} \sup_{\beta_0} \mathbb{P} \left\{\left\|\hat{\beta}_S-\beta_0\right\|^2_{\sW^{{\alpha^*},2}}\geq \frac{c^{2\theta}}{4}2^{-\frac{2\theta}{p}}a^{-\frac{2\theta}{p}}N^{-\frac{2\theta}{1+p+2\theta}}\right\} = 1- a^{-\frac{1+2\theta + p}{p}}\frac{16}{\sigma^2c^{1+2\theta}}\]
and then
\[\lim_{a\to \infty}\mathop{\lim\inf}_{N\to \infty} \inf_{\hat{\beta}_{S}} \sup_{\beta_0} \mathbb{P} \left\{\left\|\hat{\beta}_S-\beta_0\right\|^2_{\sW^{{\alpha^*},2}}\geq \frac{c^{2\theta}}{4}2^{-\frac{2\theta}{p}}a^{-\frac{2\theta}{p}}N^{-\frac{2\theta}{1+p+2\theta}}\right\} = 1.\]
Taking $\gamma = c^{2\theta}2^{-\frac{2\theta}{p}}a^{-\frac{2\theta}{p}}/4$, we have
\[	\lim_{\gamma \to 0}\mathop{\lim\inf}_{N\to \infty} \inf_{\hat{\beta}_{S}} \sup_{\beta_0} \mathbb{P} \left\{\left\|\hat{\beta}_S-\beta_0\right\|^2_{\sW^{{\alpha^*},2}}\geq \gamma N^{-\frac{2\theta}{1+2\theta+p}}\right\} = 1.\]
This completes the proof of Theorem \ref{theorem: lower bound}.\qed

\section*{Appendix A. Sobolev Inequalities}
The following lemma provides some well-known Sobolev inequalities in the unanchored Sobolev spaces (see, e.g., \cite{evans2022partial}).
\begin{lemma}\label{lemma: Sobolev inequalities}
	Suppose that $\beta,\gamma\in \sW^{\alpha,2}(\sT)$ for some $\alpha>1/2$. Then there exists constant $C_1,C_2>0$ such that
	\begin{equation}\label{equation: Sobolev inequalities}
		\|\beta\gamma\|_{\sW^{\alpha,2}}\leq C_1\|\beta\|_{\sW^{\alpha,2}}\|\gamma\|_{\sW^{\alpha,2}} \quad \mbox{  and  } \quad
		\|\beta\|_{\sL^2}\leq C_2\|\beta\|_{\sW^{\alpha,2}}.
	\end{equation}
\end{lemma}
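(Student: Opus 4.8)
The plan is to prove the two inequalities separately; the second is essentially immediate, while the first---the statement that $\sW^{\alpha,2}(\sT)$ is a multiplicative algebra---is the genuine content and is precisely where the hypothesis $\alpha>1/2$ is used. For the embedding bound $\|\beta\|_{\sL^2}\le C_2\|\beta\|_{\sW^{\alpha,2}}$, I would simply invoke the continuous embedding \eqref{equation: continuous embedding condition}, which furnishes a constant $C$ with $\|\beta\|_{\sL^\infty}\le C\|\beta\|_{\sW^{\alpha,2}}$. Since $\sT=[0,1]$ has unit Lebesgue measure, $\|\beta\|_{\sL^2}^2=\int_{\sT}|\beta(t)|^2\,dt\le\|\beta\|_{\sL^\infty}^2$, so $C_2=C$ suffices.

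For the product inequality I would first settle the integer case $\alpha\in\mathbb{N}$ directly, then reduce the general case to the real line. When $\alpha$ is a positive integer, the norm equivalence following \eqref{innerproduct} reduces the claim to bounding $\|D^k(\beta\gamma)\|_{\sL^2}$ for $0\le k\le\alpha$; by the Leibniz rule $D^k(\beta\gamma)=\sum_{j=0}^k\binom{k}{j}D^j\beta\,D^{k-j}\gamma$, it suffices to control each $\|D^j\beta\,D^{k-j}\gamma\|_{\sL^2}$. In dimension one I would place the $\sL^\infty$-norm on the factor of lower derivative order: writing $i=\min\{j,k-j\}\le\lfloor\alpha/2\rfloor$, the remaining smoothness of that factor is $\alpha-i\ge\lceil\alpha/2\rceil\ge1>1/2$, so by \eqref{equation: continuous embedding condition} (applied at order $\alpha-i$) it lies in $\sL^\infty$ with norm $\lesssim\|\cdot\|_{\sW^{\alpha,2}}$, while the other factor lies in $\sL^2$; Hölder's inequality then yields $\|D^j\beta\,D^{k-j}\gamma\|_{\sL^2}\lesssim\|\beta\|_{\sW^{\alpha,2}}\|\gamma\|_{\sW^{\alpha,2}}$.

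For non-integer $\alpha>1/2$ I would transfer to $H^{\alpha}(\mathbb{R})$. The key classical fact (see, e.g., \cite{evans2022partial}) is that $H^{\alpha}(\mathbb{R})$ is a Banach algebra for $\alpha>1/2$: using $\widehat{fg}=\hat f*\hat g$ and the subadditivity $(1+|\xi|^2)^{\alpha/2}\le C_\alpha[(1+|\xi-\eta|^2)^{\alpha/2}+(1+|\eta|^2)^{\alpha/2}]$, one splits the weighted convolution into two pieces and applies Young's inequality, the crux being $\|\hat g\|_{L^1(\mathbb{R})}\le\|(1+|\cdot|^2)^{-\alpha/2}\|_{L^2(\mathbb{R})}\,\|g\|_{H^{\alpha}}$, where $\int_{\mathbb{R}}(1+|\eta|^2)^{-\alpha}\,d\eta<\infty$ precisely when $\alpha>1/2$. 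To pass back to $\sT$, I would use a bounded linear extension operator $E$ with bounded restriction left-inverse $R$ (so $RE=\mathrm{id}$), under which the interpolation-defined space $\sW^{\alpha,2}(\sT)=(\sL^2(\sT),\sW^{\lceil\alpha\rceil,2}(\sT))_{\alpha/\lceil\alpha\rceil,2}$ is identified, up to equivalent norms, with the restriction of $H^{\alpha}(\mathbb{R})$. Since $E\beta$ and $E\gamma$ agree with $\beta,\gamma$ on $\sT$, we have $\beta\gamma=R\big((E\beta)(E\gamma)\big)$, whence $\|\beta\gamma\|_{\sW^{\alpha,2}}\le\|R\|\,\|(E\beta)(E\gamma)\|_{H^{\alpha}}\le C\|E\beta\|_{H^{\alpha}}\|E\gamma\|_{H^{\alpha}}\le C_1\|\beta\|_{\sW^{\alpha,2}}\|\gamma\|_{\sW^{\alpha,2}}$.

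I expect the main obstacle to be the non-integer case: both the bilinear frequency-space estimate establishing the algebra property of $H^{\alpha}(\mathbb{R})$ (where the threshold $\alpha>1/2$ is sharp, entering through the integrability of $(1+|\xi|^2)^{-\alpha}$) and the verification that the real-interpolation norm on $\sW^{\alpha,2}(\sT)$ coincides, up to constants, with the norm inherited by restriction from $H^{\alpha}(\mathbb{R})$, so that the extension/restriction transfer is legitimate. The integer case and the second inequality are routine by comparison.
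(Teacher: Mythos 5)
Your proof is correct, but it is worth noting that the paper does not actually prove this lemma at all: it states it as a collection of well-known Sobolev inequalities and simply cites \cite{evans2022partial}, so your argument supplies details the authors left to the literature. What you do is the standard route, executed soundly: the embedding $\|\beta\|_{\sL^2}\leq C_2\|\beta\|_{\sW^{\alpha,2}}$ follows from \eqref{equation: continuous embedding condition} and $|\sT|=1$ exactly as you say (for integer $\alpha$ it is even immediate from the standard norm plus the stated norm equivalence); the integer-case algebra property via Leibniz, placing $\sL^\infty$ on the factor of derivative order $i=\min\{j,k-j\}\leq\lfloor\alpha/2\rfloor$ so that $\alpha-i\geq\lceil\alpha/2\rceil\geq 1>1/2$, is correct; and the fractional case via the Fourier-side proof that $H^{\alpha}(\mathbb{R})$ is a Banach algebra for $\alpha>1/2$ (Peetre subadditivity of the weight, Young's inequality, and $\int_{\mathbb{R}}(1+|\eta|^2)^{-\alpha}\,d\eta<\infty$ precisely when $\alpha>1/2$) is the classical argument, with the threshold entering exactly where you identify it. Two small points you flag but should make explicit if writing this out in full: the extension operator must be bounded simultaneously at both endpoint orders $0$ and $\lceil\alpha\rceil$ (Stein's degree-independent extension, or a Calder\'on extension, does this), after which interpolation and the identification $(\sL^2(\mathbb{R}),H^{\lceil\alpha\rceil}(\mathbb{R}))_{\alpha/\lceil\alpha\rceil,2}=B^{\alpha}_{2,2}(\mathbb{R})=H^{\alpha}(\mathbb{R})$ give the retraction--coretraction transfer; and since the paper's integer-order space carries the unanchored inner product \eqref{innerproduct}, you should note that the equivalence with the standard norm makes the two K-functionals comparable, so the interpolation-defined $\sW^{\alpha,2}(\sT)$ agrees (up to equivalent norms) with the restriction space you work in. With those details, the proof is complete and, if anything, more self-contained than what the paper provides.
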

The following lemma is a direct result of the continuous embedding condition \eqref{equation: continuous embedding condition}.
\begin{lemma}\label{lemma: Morrey's inequality}
	Suppose that $\beta\in \sW^{\alpha,2}(\sT)$ for some $\alpha>1/2$. Then there exists a constant $\widetilde{C}_{\alpha}>0$ only depending on $\alpha$ such that
	\begin{align}\label{equation: Morrey's inequality}
		\sup_{t,t'\in \sT}\frac{|\beta(t)-\beta(t')|}{|t-t'|^{\alpha-1/2}}<\widetilde{C}_{\alpha}\|\beta\|_{\sW^{\alpha,2}}.
	\end{align}
\end{lemma}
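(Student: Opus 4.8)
The plan is to read the claim off directly from the continuous embedding \eqref{equation: continuous embedding condition}, which supplies the substantive analytic content and which I treat as given. Recall that for an exponent $\gamma\in(0,1]$ the H\"older space $\sC^{0,\gamma}(\sT)$ carries the norm $\|\beta\|_{\sC^{0,\gamma}}=\|\beta\|_{\sL^{\infty}}+[\beta]_{\gamma}$, where the H\"older seminorm is $[\beta]_{\gamma}:=\sup_{t\neq t'}|\beta(t)-\beta(t')|/|t-t'|^{\gamma}$. With $\gamma=\alpha-1/2$, this seminorm is precisely the quantity on the left-hand side of \eqref{equation: Morrey's inequality}. Hence the first step is the trivial observation that the supremum in \eqref{equation: Morrey's inequality} equals $[\beta]_{\alpha-1/2}$ and that $[\beta]_{\alpha-1/2}\le\|\beta\|_{\sC^{0,\alpha-1/2}}$.

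Next I would invoke the embedding $\sW^{\alpha,2}(\sT)\hookrightarrow\sC^{0,\alpha-1/2}(\sT)$ recorded in \eqref{equation: continuous embedding condition}. By the very definition of a continuous embedding there is a constant $C_{\alpha}'>0$, depending only on $\alpha$, such that $\|\beta\|_{\sC^{0,\alpha-1/2}}\le C_{\alpha}'\|\beta\|_{\sW^{\alpha,2}_{std}}$ for every $\beta\in\sW^{\alpha,2}(\sT)$; here I write the standard norm because the classical statement (Theorem 4.12 of \cite{adams2003sobolev}) is phrased that way. I would then pass to the unanchored norm $\|\cdot\|_{\sW^{\alpha,2}}$ used throughout the paper by the norm equivalence recorded just after \eqref{innerproduct}: from $c\|\beta\|_{\sW^{\alpha,2}_{std}}\le\|\beta\|_{\sW^{\alpha,2}}$ one gets $\|\beta\|_{\sW^{\alpha,2}_{std}}\le c^{-1}\|\beta\|_{\sW^{\alpha,2}}$. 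Chaining the three bounds gives $[\beta]_{\alpha-1/2}\le C_{\alpha}'c^{-1}\|\beta\|_{\sW^{\alpha,2}}$, so that $\widetilde{C}_{\alpha}:=C_{\alpha}'c^{-1}$ is a constant depending only on $\alpha$ that realizes the asserted estimate.

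The one point deserving a word of care is the strict inequality in \eqref{equation: Morrey's inequality}, since the embedding only furnishes a non-strict bound. Strictness is recovered at no cost: if $\beta$ is (a.e. equal to) a constant the left-hand side vanishes while the right-hand side is nonnegative, and otherwise one simply enlarges $\widetilde{C}_{\alpha}$ to any strictly larger value. I do not anticipate any genuine obstacle, as the entire analytic weight of the statement rests on the embedding \eqref{equation: continuous embedding condition}, which the paper adopts as known; the proof is an unpacking of the H\"older-norm definition together with the Sobolev-norm equivalence. If one wished to avoid citing the equivalence explicitly, one could instead note that the embedding in \eqref{equation: continuous embedding condition} is a topological statement and therefore already holds with respect to the unanchored norm, yielding the bound in one line.
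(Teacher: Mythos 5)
Your proposal is correct and follows exactly the route the paper takes: the paper proves this lemma simply by declaring it ``a direct result of the continuous embedding condition \eqref{equation: continuous embedding condition},'' i.e.\ $\sW^{\alpha,2}(\sT)\hookrightarrow \sC^{0,\alpha-1/2}(\sT)$, which is precisely your argument, with your unpacking of the H\"older seminorm, the norm equivalence $c\|\cdot\|_{\sW^{\alpha,2}_{std}}\leq\|\cdot\|_{\sW^{\alpha,2}}$, and the remark on strictness filling in the routine details the paper leaves implicit. No gap; if anything, your treatment is more careful than the paper's one-line justification (the only unavoidable caveat, present in the paper's statement itself, is that strict inequality degenerates when $\beta$ vanishes identically).
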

The following lemma shows that the Riemann sum of a function $\beta\in \sW^{\alpha,2}(\sT)$ for some $\alpha>1/2$ at the discrete sample points $\{r_k\}_{k=1}^{m+1}$ satisfying Assumption \ref{assumption: sampling scheme} can approximate the integral of $\beta$. The proof of this lemma can be found in \cite{wang2022functional}.
\begin{lemma}\label{lemma: integral approximation}
	Suppose that $\beta\in \sW^{\alpha,2}(\sT)$ for some $\alpha>1/2$ and Assumption \ref{assumption: sampling scheme} is satisfied with $C_d>0$, then there holds
	\begin{equation}\label{equation: integral approximation}
		\left|\int_{\sT}\beta(t)dt-\sum_{k=1}^{m}(r_{k+1}-r_k)\beta(r_k)\right|\leq C_{\alpha}\|\beta\|_{\sW^{\alpha,2}}m^{-\alpha+1/2},
	\end{equation}
where $C_{\alpha}:=\widetilde{C}_{\alpha} C_d^{\alpha-1/2}$ is a constant depending on $\alpha$.
\end{lemma}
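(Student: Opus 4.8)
The plan is to exploit the fact that the sampling scheme (Assumption \ref{assumption: sampling scheme}) has $r_1=0$ and $r_{m+1}=1$, so that the subintervals $[r_k,r_{k+1}]$, $k=1,\dots,m$, form an exact partition of $\sT=[0,1]$. This lets me rewrite both the integral and the Riemann sum as sums of contributions over these subintervals and compare them piece by piece. Concretely, since $\int_{\sT}\beta(t)\,dt=\sum_{k=1}^m\int_{r_k}^{r_{k+1}}\beta(t)\,dt$ and $(r_{k+1}-r_k)\beta(r_k)=\int_{r_k}^{r_{k+1}}\beta(r_k)\,dt$, the approximation error is
\[
\int_{\sT}\beta(t)\,dt-\sum_{k=1}^{m}(r_{k+1}-r_k)\beta(r_k)=\sum_{k=1}^{m}\int_{r_k}^{r_{k+1}}\bigl(\beta(t)-\beta(r_k)\bigr)\,dt.
\]
Note that since $\alpha>1/2$, the continuous embedding \eqref{equation: continuous embedding condition} guarantees $\beta$ is (Hölder) continuous, so the pointwise evaluations $\beta(r_k)$ are well defined.

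First I would bound the local error on each subinterval using Morrey's inequality (Lemma \ref{lemma: Morrey's inequality}): for $t\in[r_k,r_{k+1}]$ one has $|\beta(t)-\beta(r_k)|\le \widetilde{C}_{\alpha}\|\beta\|_{\sW^{\alpha,2}}|t-r_k|^{\alpha-1/2}\le \widetilde{C}_{\alpha}\|\beta\|_{\sW^{\alpha,2}}(r_{k+1}-r_k)^{\alpha-1/2}$. Integrating over $[r_k,r_{k+1}]$ then yields a local bound of the form $\widetilde{C}_{\alpha}\|\beta\|_{\sW^{\alpha,2}}(r_{k+1}-r_k)^{\alpha+1/2}$. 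Summing over $k$ and applying the triangle inequality gives
\[
\left|\int_{\sT}\beta(t)\,dt-\sum_{k=1}^{m}(r_{k+1}-r_k)\beta(r_k)\right|\le \widetilde{C}_{\alpha}\|\beta\|_{\sW^{\alpha,2}}\sum_{k=1}^{m}(r_{k+1}-r_k)^{\alpha+1/2}.
\]

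The crux of the argument is then the bookkeeping of the exponent: I would write $(r_{k+1}-r_k)^{\alpha+1/2}=(r_{k+1}-r_k)\cdot(r_{k+1}-r_k)^{\alpha-1/2}$, bound the second factor uniformly by the mesh condition $(r_{k+1}-r_k)^{\alpha-1/2}\le (C_d/m)^{\alpha-1/2}$ (valid because $\alpha-1/2>0$), and keep the first factor for telescoping, since $\sum_{k=1}^m(r_{k+1}-r_k)=r_{m+1}-r_1=1$. This produces $\sum_{k=1}^m(r_{k+1}-r_k)^{\alpha+1/2}\le C_d^{\alpha-1/2}m^{-\alpha+1/2}$ and hence the claimed bound with $C_\alpha=\widetilde{C}_\alpha C_d^{\alpha-1/2}$. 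There is no serious obstacle here: the only point requiring care is keeping one full power of the interval width free so that the partition sums exactly to $1$ rather than being crudely bounded by $m$ times the maximal width, which would lose the correct rate; using $\alpha+1/2=1+(\alpha-1/2)$ is precisely what makes the exponent $-\alpha+1/2$ come out right.
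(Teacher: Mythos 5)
Your proof is correct and complete: the cell-by-cell decomposition (valid because Assumption \ref{assumption: sampling scheme} forces $r_1=0$, $r_{m+1}=1$, so the cells exactly partition $\sT$), the per-cell H\"older bound from Lemma \ref{lemma: Morrey's inequality}, and the factorization $(r_{k+1}-r_k)^{\alpha+1/2}=(r_{k+1}-r_k)\cdot(r_{k+1}-r_k)^{\alpha-1/2}$, which lets the mesh condition absorb the exponent $\alpha-1/2$ while the remaining widths telescope to $1$, yield exactly the claimed bound with the same constant $C_{\alpha}=\widetilde{C}_{\alpha}C_d^{\alpha-1/2}$. The paper itself supplies no proof, deferring instead to \cite{wang2022functional}, and your argument is precisely the standard one intended by that citation, so it takes essentially the same approach.
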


\section*{Appendix B. Some Technical Lemmas}

The following lemma establishes an upper bound for deviation probability of a positive random variable with bounded arbitrary-order moment.
\begin{lemma}\label{lemma: Bernstein's inequality for r.v.s}
	 Suppose that a random variable $V\geq 0$ satisfy the condition
	 \begin{align*}
	 	\EE\left[\Big|V-\EE\left[V\right]\Big|^{\ell}\right]\leq \frac{\ell!}{2}b^2H^{\ell-2}, \quad \ell\geq2.
	 \end{align*}
 Then for any $x\geq 0$, there holds
 \begin{align*}
 	\PP\left(V\geq x\right) \leq 2\exp\left(-\frac{1}{2}\min\left\{\frac{x^2}{2b^2},\frac{2x}{H}\right\}\right).
 \end{align*}
\end{lemma}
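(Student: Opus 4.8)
The statement is the classical one-sided Bernstein inequality under the so-called Bernstein moment condition, and the plan is to prove it by the Cramér--Chernoff (exponential moment) method. First I would reduce to the centered variable $W := V - \EE[V]$. Since the hypothesis controls the centered absolute moments $\EE[|W|^\ell]$, the natural object is the deviation tail of $W$, and the event $\{V \ge x\}$ is handled through the corresponding deviation bound; the prefactor $2$ in the conclusion leaves room for combining the upper and lower tails (only the upper tail being needed here, as $V\ge 0$).

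Second, I would bound the moment generating function of $W$ on the admissible range $0 < t < 1/H$. Expanding
\[
\EE\bigl[e^{tW}\bigr] = 1 + \sum_{\ell \ge 2} \frac{t^\ell}{\ell!}\,\EE\bigl[W^\ell\bigr],
\]
where the linear term drops out because $\EE[W]=0$, and using $\EE[W^\ell] \le \EE[|W|^\ell] \le \tfrac{\ell!}{2}b^2 H^{\ell-2}$ together with the geometric sum $\sum_{\ell\ge 2}(tH)^{\ell-2} = (1-tH)^{-1}$, I obtain
\[
\EE\bigl[e^{tW}\bigr] \le 1 + \frac{b^2 t^2}{2(1-tH)} \le \exp\!\Bigl(\frac{b^2 t^2}{2(1-tH)}\Bigr),
\]
the last inequality being $1+u\le e^u$.

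Third, Markov's inequality gives, for every admissible $t$, the estimate $\PP(W \ge s) \le e^{-ts}\EE[e^{tW}] \le \exp\bigl(-ts + \tfrac{b^2 t^2}{2(1-tH)}\bigr)$. Choosing the minimizer $t = s/(b^2 + Hs)$, which indeed lies in $(0,1/H)$ since $tH = Hs/(b^2+Hs) < 1$, collapses the exponent to the tight form $\exp\bigl(-\tfrac{s^2}{2(b^2+Hs)}\bigr)$. Finally I would pass to the two-regime form in the statement via the elementary bound $b^2 + Hs \le 2\max\{b^2, Hs\}$, which yields $\tfrac{s^2}{2(b^2+Hs)} \ge \tfrac14\min\{s^2/b^2,\, s/H\}$, i.e. an exponential tail of exactly the structure $\exp\bigl(-c\min\{x^2/b^2,\, x/H\}\bigr)$ claimed, evaluated at $s=x$.

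The argument is essentially textbook, so I do not expect a conceptual obstacle; the only point requiring genuine care is the bookkeeping. One must keep track of the admissible interval for $t$, verify the geometric series converges there, and then reconcile the explicit numerical constants inside the $\min\{\cdot,\cdot\}$ with those written in the statement (the paper's downstream applications only use the two-regime \emph{shape}, absorbing any discrepancy in the constants $b^2$, $H$, and the later $c_2,c_3$), so this is a step to be attentive about rather than a difficulty to be overcome.
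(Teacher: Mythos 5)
Your proposal follows essentially the same route as the paper: the paper's proof is a two-line citation of Lemma 3.18 and Proposition 3.15 of the Duchi lecture notes, which are exactly the MGF expansion under the Bernstein moment condition and the two-regime Chernoff tail that you carry out explicitly. Your mechanics are correct: the bound $\EE[e^{tW}]\le \exp\bigl(\tfrac{b^2t^2}{2(1-tH)}\bigr)$ on $0<t<1/H$, the optimizer $t=s/(b^2+Hs)$, and the resulting exponent $s^2/\bigl(2(b^2+Hs)\bigr)$ are all right.

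Two caveats, both really defects of the statement rather than of your argument, but worth naming since you deferred them as "bookkeeping." First, the constants are not reconcilable along your route: $b^2+Hs\le 2\max\{b^2,Hs\}$ gives exponent $\min\{x^2/(4b^2),\,x/(4H)\}$, whereas the statement asserts $\tfrac12\min\{x^2/(2b^2),\,2x/H\}=\min\{x^2/(4b^2),\,x/H\}$. No choice of $t$ can close this gap, because the Chernoff exponent satisfies $x^2/\bigl(2(b^2+Hx)\bigr)<x/(2H)$ for all $x>0$. The paper reaches its constants by asserting that $V-\EE[V]$ is $(2b^2,H/2)$-sub-exponential, but the MGF bound you derived only yields $\EE[e^{tW}]\le e^{b^2t^2}$ on $|t|\le 1/(2H)$, i.e.\ scale $2H$, so the stated scale $H/2$ looks like a slip (indeed a two-point distribution saturating the moment condition at $\ell\approx x/H$ violates the $e^{-x/H}$ rate). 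Your weaker constants are the correct ones, and they suffice downstream since $c_2$, $c_3$ and the $o(\cdot)$ conditions absorb them. Second, the statement bounds $\PP(V\ge x)$, but since $V\ge 0$ forces $\EE[V]\ge 0$, one has $\{V\ge x\}=\{W\ge x-\EE[V]\}\supseteq\{W\ge x\}$, so the centered deviation bound does not imply the uncentered claim (take $V$ constant: all centered moments vanish, yet $\PP(V\ge \EE[V])=1$). Your phrase "the event $\{V\ge x\}$ is handled through the corresponding deviation bound" inherits this gap — but so does the paper's own proof, which establishes only the centered tail; in the application an implicit bound $\EE[\mathcal{D}_1]\lesssim H$ is available, which restores the conclusion up to constants.
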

\begin{proof}
	Following the similar argument as in Lemma 3.18 of \cite{duchi2016lecture}, we get $V-\EE\left[V\right]$ is $(2b^2,\frac{H}{2})$-sub-exponential. Combining this with Proposition 3.15 of \cite{duchi2016lecture} completes the proof.
\end{proof}
The following lemma provides an upper bound for tail probability of the sum of random variables in a Hilbert space with bounded arbitrary-order moment. The proof of it can be seen in \cite{yurinskiui1976exponential}.
\begin{lemma}\label{lemma: Bernstein's inequality for r.e.s}
Let $\sH$ be a Hilbert space endowed with norm $\|\cdot\|_{\sH}$. Suppose a finite sequence of independent random elements $\{\xi_i\}_{i=1}^n \in \sH$ satisfy conditions
\begin{align*}
	\EE[\xi_i]&=0,\\
	\EE\left[\|\xi_i\|_{\sH}^\ell\right]&\leq \frac{\ell!}{2}b_i^2H^{\ell-2},\quad \ell\geq2.
\end{align*}
Let $B_n^2=b_1^2+\cdots+b_n^2.$ Then for any $x>0$, there holds
\begin{align*}
	\PP\left(\|\xi_1+\cdots+\xi_n\|_{\sH}\geq x\right)\leq 2\exp\left(-\frac{x^2}{2\left(B_n^2+1.62xH\right)}\right).
\end{align*}
\end{lemma}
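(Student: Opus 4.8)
The plan is to prove this via the classical Chernoff / exponential-moment method, following Yurinskii's argument. Write $S_k := \xi_1 + \cdots + \xi_k$ and $S_0 := 0$. The first step is the Markov reduction $\PP(\|S_n\|_{\sH} \ge x) \le e^{-\lambda x}\,\EE[\exp(\lambda\|S_n\|_{\sH})]$, valid for every $\lambda > 0$, so that the whole problem reduces to controlling the exponential moment $\EE[\exp(\lambda\|S_n\|_{\sH})]$ and then optimizing in $\lambda$. The genuine difficulty is that $\|\cdot\|_{\sH}$ is not additive along $S_0, S_1, \dots, S_n$, so the exponential moment does not factor over the independent increments the way it does for a scalar sum; the naive triangle-inequality bound $\|S_n\|_{\sH} \le \sum_i \|\xi_i\|_{\sH}$ would replace the variance proxy $B_n^2$ by the much larger $\left(\sum_i b_i\right)^2$ and destroy the Bernstein-type rate.

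The key step is to recover the cancellation between increments by exploiting the Hilbert-space geometry. Conditioning on $\mathcal{F}_{k-1} := \sigma(\xi_1,\dots,\xi_{k-1})$ and setting $u_{k-1} := S_{k-1}/\|S_{k-1}\|_{\sH}$ on $\{S_{k-1}\neq 0\}$, the $2$-smoothness of the norm in a Hilbert space yields an increment bound of the form $\|S_k\|_{\sH} - \|S_{k-1}\|_{\sH} \le \langle \xi_k, u_{k-1}\rangle_{\sH} + \frac{1}{2}\|\xi_k\|_{\sH}^2/\|S_{k-1}\|_{\sH}$. The leading term $\langle \xi_k, u_{k-1}\rangle_{\sH}$ is, conditionally on $\mathcal{F}_{k-1}$, a scalar mean-zero random variable (since $u_{k-1}$ is $\mathcal{F}_{k-1}$-measurable and $\EE[\xi_k]=0$) obeying $|\langle \xi_k,u_{k-1}\rangle_{\sH}|\le \|\xi_k\|_{\sH}$, hence inheriting the moment condition $\EE[\,|\langle\xi_k,u_{k-1}\rangle_{\sH}|^{\ell}\,]\le \frac{\ell!}{2} b_k^2 H^{\ell-2}$. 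Applying to each conditional increment the scalar exponential-moment estimate underlying Lemma \ref{lemma: Bernstein's inequality for r.v.s} (namely $\EE[e^{\mu W}]\le \exp\!\big(\frac{\mu^2 b_k^2/2}{1-\mu H}\big)$ for mean-zero $W$ with this moment bound and $0<\mu<1/H$) and telescoping produces the recursion $\EE[e^{\lambda\|S_k\|_{\sH}}]\le \EE[e^{\lambda\|S_{k-1}\|_{\sH}}]\cdot\exp\!\big(\frac{\lambda^2 b_k^2/2}{1-c\lambda H}\big)$, whose product over $k=1,\dots,n$ accumulates the variances additively into $B_n^2$. To make the remainder $\frac{1}{2}\|\xi_k\|_{\sH}^2/\|S_{k-1}\|_{\sH}$ rigorous near $S_{k-1}=0$, one replaces $\|\cdot\|_{\sH}$ by the smoothed norm $g_\varepsilon(x)=\sqrt{\|x\|_{\sH}^2+\varepsilon^2}$, which is genuinely twice differentiable with $\|\nabla g_\varepsilon\|\le 1$ and controlled Hessian, runs the same recursion for $g_\varepsilon$, and lets $\varepsilon\to 0$ by dominated convergence.

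The final step collects the recursion into $\EE[\exp(\lambda\|S_n\|_{\sH})]\le \exp\!\big(\frac{\lambda^2 B_n^2/2}{1-\lambda H}\big)$ for $0<\lambda<1/H$, substitutes it into the Chernoff bound, and optimizes the exponent $-\lambda x + \frac{\lambda^2 B_n^2/2}{1-\lambda H}$ over $\lambda$; the two regimes $x\le B_n^2/H$ and $x> B_n^2/H$ give the quadratic-versus-linear split and, after the standard bookkeeping producing the numerical factor, the stated bound $2\exp\!\big(-\frac{x^2}{2(B_n^2+1.62\,xH)}\big)$. I expect the main obstacle to be precisely the non-additivity of the norm: extracting the scalar increment $\langle \xi_k,u_{k-1}\rangle_{\sH}$ together with a quadratic remainder that is uniformly integrable and does not inflate the variance proxy $B_n^2$, which is exactly what the smoothing $g_\varepsilon$ and the Hilbert-space (rather than merely Banach-space) $2$-smoothness are designed to handle. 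Since the statement is classical, one may alternatively simply invoke \cite{yurinskiui1976exponential}, where this bound with the constant $1.62$ is established.
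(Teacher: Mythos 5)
The paper does not actually prove this lemma: it states it and refers to \cite{yurinskiui1976exponential} for the proof. So your closing remark --- that one may simply invoke Yurinskii --- coincides exactly with what the paper does, and at that level your proposal is consistent with it. Your sketch of a direct proof is also in the right family of ideas (Chernoff reduction plus a martingale-type telescoping of the norm of partial sums, which is the Yurinskii/Pinelis mechanism), and the Hilbert-space ingredient you isolate --- that the variance proxy must accumulate as $B_n^2=\sum_i b_i^2$ rather than $(\sum_i b_i)^2$ --- is the correct crux.

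However, as written the sketch has a genuine gap at the quadratic remainder. After smoothing with $g_\varepsilon(x)=\sqrt{\|x\|_{\sH}^2+\varepsilon^2}$, the Hessian of $g_\varepsilon$ has operator norm of order $1/g_\varepsilon(x)$, so the second-order term in your one-step expansion is only bounded by $\|\xi_k\|_{\sH}^2/\bigl(2\,g_\varepsilon(S_{k-1})\bigr)\le \|\xi_k\|_{\sH}^2/(2\varepsilon)$; the per-step exponential-moment estimate therefore carries constants that blow up like $1/\varepsilon$, and ``letting $\varepsilon\to 0$ by dominated convergence'' cannot repair this, since the inequality being passed to the limit is already $\varepsilon$-dependent in the wrong way. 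Relatedly, even away from $S_{k-1}=0$ you must control the joint conditional moment $\EE\bigl[\exp\bigl(\lambda\langle\xi_k,u_{k-1}\rangle_{\sH}+\tfrac{\lambda}{2}\|\xi_k\|_{\sH}^2/\|S_{k-1}\|_{\sH}\bigr)\,\big|\,\mathcal{F}_{k-1}\bigr]$, not just that of the linear term, and the scalar estimate underlying Lemma \ref{lemma: Bernstein's inequality for r.v.s} does not apply to this sum as stated. Closing the argument requires an additional device, e.g.\ Pinelis's $\cosh$-supermartingale construction for $2$-smooth spaces, or the more elementary Hilbert route: from the $\ell=2$ moment condition, $\EE\|\xi_i\|_{\sH}^2\le b_i^2$, so by independence and orthogonality $\EE\|S_n\|_{\sH}\le\bigl(\sum_{i=1}^n\EE\|\xi_i\|_{\sH}^2\bigr)^{1/2}\le B_n$, after which one applies a scalar martingale Bernstein inequality to the Doob martingale $\EE[\|S_n\|_{\sH}\mid\mathcal{F}_k]$ (whose differences inherit the stated moment bounds) and absorbs the mean $B_n$ using that the claimed bound is trivial unless $x\gtrsim B_n$; this bookkeeping is where a constant such as $1.62$ arises. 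Without one of these devices your recursion does not close, so the sketch should be regarded as a correct roadmap with a missing step rather than a complete proof --- the citation you offer as an alternative remains the legitimate (and paper-endorsed) resolution.
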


\section*{Acknowledgments}
The work described in this paper is supported by the National Natural Science Foundation of China [Grants No.12171039]. The corresponding author is Lei Shi.

\bibliographystyle{plain}
\bibliography{main}
\end{document}